\documentclass[12pt]{article}
\usepackage[margin=1in]{geometry}
\usepackage[utf8]{inputenc}

\usepackage[english]{babel}
\usepackage[round]{natbib}
\usepackage[font=footnotesize]{caption}
\usepackage{mathtools,amssymb,amsthm}
\usepackage{subcaption}
\usepackage{pifont}
\usepackage{booktabs}

\usepackage{algorithm}
\usepackage{algorithmic}
\usepackage{wrapfig}
\usepackage{float}
\usepackage[toc,page]{appendix}

\usepackage[dvipsnames]{xcolor}
\definecolor{citeCol}{rgb}{0.5412, 0.2118, 0.0588}
\usepackage[colorlinks=true, citecolor=citeCol, linkcolor=citeCol, urlcolor=citeCol]{hyperref}

\usepackage[capitalize]{cleveref}
\usepackage{bm}
\usepackage{enumitem}

\usepackage{amsmath, amssymb, amsthm,mathtools,dsfont}
\usepackage{bbm}
\usepackage{natbib}
\usepackage{blkarray}
\usepackage{cancel}
\usepackage{enumitem}
\usepackage{euscript}
\usepackage{float}
\usepackage{bm}
\usepackage{mathrsfs}
\usepackage{microtype}
\usepackage{ragged2e}
\usepackage{sectsty}
\usepackage{thmtools}
\usepackage{tikz}
\usepackage[Symbolsmallscale]{upgreek}

\usepackage{microtype}
\usepackage{subcaption}

\usepackage{hyperref}

\usepackage{fancyhdr}

\newcommand{\cE}{\mathcal{E}}

\newcommand{\cI}{\mathcal{I}}

\newcommand{\cL}{\mathcal{L}}

\newcommand{\cN}{\mathcal{N}}

\newcommand{\cV}{\mathcal{V}}

\newcommand{\cX}{\mathcal{X}}

\newcommand{\PP}{\mathbb{P}}

\newcommand{\RR}{\mathbb{R}}

\newcommand*{\norm}[1]{\left\|#1\right\|}
\newcommand*{\triplenorm}[1]{{\left\vert\kern-0.25ex\left\vert\kern-0.25ex\left\vert #1
    \right\vert\kern-0.25ex\right\vert\kern-0.25ex\right\vert}}

\DeclareMathOperator{\supp}{supp}

\newcommand{\R}{\mathbb{R}}

\renewcommand{\phi}{\varphi}
\newcommand{\eps}{\varepsilon}

\newcommand*{\E}{\mathbb E}

\DeclareMathOperator{\var}{Var}

\DeclareMathOperator{\tr}{tr}

\newcommand*{\defeq}{\coloneqq}
\newcommand*{\rd}{\mathrm{d}}
\newcommand*{\D}{\mathrm{d}}

\newcommand*{\dd}{\, \rd}
\DeclareMathOperator*{\argmin}{arg\,min}
\DeclareMathOperator*{\argmax}{arg\,max}

\DeclareMathOperator*{\KL}{KL}

\newcommand\bs[1]{\boldsymbol{#1}}

\newcommand\mb[1]{\mathbf{#1}}

\newcommand\mc[1]{\mathcal{#1}}

\newcommand\msf[1]{\mathsf{#1}}

\newcommand{\pdata}{p_X}
\newcommand{\cXbar}{\overline{\mc X}}
\newcommand{\Ndata}{N_X}
\newcommand{\dstar}{\mathsf k}

\newcommand{\Id}{I}
\newcommand{\tsigma}{\tilde\sigma}
\newcommand{\xbar}{\bar x}
\newcommand{\pbar}{\pdata}

\newcommand{\tri}[1]{\langle #1 \rangle}

\newcommand{\prn}[1]{(#1)}

\newcommand{\abs}[1]{|#1|}

\newcommand{\epsBD}{\epsilon_{\rm BD}}
\newcommand{\tepsBD}{\tilde\epsilon_{\rm BD}}
\newcommand{\ta}{\mathsf{a}}
\newcommand{\prior}{\Pi_0}
\newcommand{\post}{\Pi}
\newcommand{\ppost}{\overline\Pi}

\renewcommand{\epsilon}{\varepsilon}

\definecolor{blue}{rgb}{0,.1, .6}
\definecolor{royalblue}{rgb}{0.2549, 0.4118, 0.8824}

\definecolor{mypink}{rgb}{0.858, 0.188, 0.478}

\theoremstyle{plain}
\newtheorem{theorem}{Theorem}[section]
\newtheorem{proposition}[theorem]{Proposition}

\newtheorem{lemma}[theorem]{Lemma}
\newtheorem{corollary}[theorem]{Corollary}
\theoremstyle{definition}
\newtheorem{definition}[theorem]{Definition}

\theoremstyle{remark}

\begin{document}
\vspace*{0.15in}

\begin{center} {\LARGE{{Blind denoising diffusion models and  \\ the blessings of dimensionality}}}

{\large{
\vspace*{.3in}
\begin{tabular}{cccc}
Zahra Kadkhodaie$^{1,*}$, Aram-Alexandre Pooladian$^{2,*}$ \\ Sinho Chewi$^{3}$, and Eero P. Simoncelli$^{1,4}$\\
\end{tabular}
{
\vspace*{.1in}
\begin{tabular}{c}\\
				$^1$Flatiron Institute, Simons Foundation.\\
				$^2$Foundations of Data Science, Yale University.\\
				$^3$Department of Statistics and Data Science, Yale University.\\
				$^4$Ctr. for Neural Science \& Courant Institute, New York University.\\
 				\small{\texttt{zkadkhodaie@flatironinstitute.org}}\\
                		\small{\texttt{\{aram-alexandre.pooladian,sinho.chewi\}@yale.edu}}\\
				\small{\texttt{esimoncelli@flatironinstitute.org}}\\
\end{tabular} 
}
}}
\vspace*{.1in}

\today

\end{center}

\vspace*{.1in}

\footnotetext{*Equal contribution.}

\abstract{
Denoising diffusion models (DDMs) are state-of-the-art methods for learning densities from data 
{across numerous domains,}
yet many aspects of the training and sampling pipeline remain poorly understood. {In particular}, noise conditioning {requires} practitioners to incorporate contrived unprincipled noise embeddings into neural network architectures and to use ad hoc noise schedules for sampling. To {address} these {drawbacks}, we provide a complete theory for \emph{blind denoising diffusion models} (BDDMs): a variant of DDMs where the noise amplitude is not passed into the neural network during training or sampling, {obviating the need for the aforementioned design choices.}
We justify the correctness of BDDMs as a sampling algorithm under an assumption of low intrinsic dimensionality of the underlying data distribution relative to the ambient dimension. This assumption arises through the introduction of the Bayesian problem of estimating noise levels from a single noisy sample, which might be of independent interest. We empirically compare the performance of BDDMs to standard DDMs, showcasing the benefits of an \emph{adaptive} scheme which is rigorously justified by our analysis.\looseness-1
}

\medskip{}

\section{Introduction}
Denoising diffusion models (DDMs) have emerged as the dominant methodology for generative modeling of images and solving inverse problems \citep{sohl2015deep, ho2020denoising,Song+21SDE,Chung+23Posterior}. In contrast to previous machine learning methods (e.g., generative adversarial networks \citep{WassersteinGAN}, normalizing flows \citep{Gra+19FFJORD}, or variational autoencoders \citep{KinWel14VB}), diffusion models are based on corresponding noise corruption and denoising processes \citep{Fol85}. In brief, the goal of DDMs is to sample from a data distribution $p_X$ through discretizations of stochastic differential equations (SDEs) of the form
\begin{align}\label{eq:non-blind_intro}
    \dd X_t = a_t \widehat s_\theta(X_t, \sigma_t) \dd t + \sqrt{2a_t}\dd B_t\,,
\end{align}
where $\widehat s_\theta : \R^d \times \R_+ \to \R^d$ is a parametric neural network trained on samples from $p_X$ to approximate the score, $\nabla \log p_\sigma(x_\sigma)$, $\dd B_t$ is standard Brownian motion, $(a_t)_{t \geq 0}$ are diffusion coefficients, and $(\sigma_t)_{t\geq0}$ is a sequence of noise levels. The barriers to efficient and accurate sampling are then (a) obtaining a good approximation of the score $\widehat s_\theta$ and (b) choosing sequences  $(\sigma_t)_{t\geq 0}$ and $(a_t)_{t \geq 0}$ that \emph{predict} the noise level on the samples when discretizing \eqref{eq:non-blind_intro}.

The first hurdle is resolved through an application of Tweedie's formula \citep{Robbins1956Empirical, Miyasawa1961Empirical}, which states that minimum mean square error (MMSE) denoiser (the mean of the posterior) can be re-written as the variance-scaled score function \looseness-1
\begin{align}\label{eq:tweedie}
     r_{\sigma_t}^\star(y) \defeq \sigma_t^2\nabla \log p_{\sigma_t}(y) = 
     \E[X\!\mid\! Y=y] - y\,,
\end{align}
where the {conditional} expectation is {over $X\sim \pdata$ given $Y \sim \cN(X,\sigma_t^2 I)$.} 
With \eqref{eq:tweedie}, the score can be learned by minimizing a regression-like objective \citep{Hyv05ScoreMatch, Raphan11least}. It is worth emphasizing that the parametric neural network takes both a noisy image $y$ and its noise level $\sigma_t$ as inputs, the latter implemented through the use of noise embeddings \citep{Song+21SDE}. \looseness-1

As for the second hurdle, the noise schedule and diffusion coefficients are often ad hoc and empirically tuned.
For instance,~\citet{karras2022elucidating} propose the schedule
\begin{align}\label{eq:edm_schedule}
    \sigma_k = {\Bigl(\sigma_{\max}^{1/7} - \frac{k}{N}\,(\sigma_{\max}^{1/7} - \sigma_{\min}^{1/7})\Bigr)^7}\,.
\end{align}
Perhaps surprisingly, we observe empirically that these noise schedules do not precisely predict the noise level on samples throughout the reverse process (see Figure~\ref{fig:mismatch_schedule}). 
So, although the dynamics in \eqref{eq:non-blind_intro} are mathematically justified by the theory for forward-reverse diffusions, these justifications break down at the implementation stage. This breakdown is due to an inconsistency in the discretized reverse process: the score model $\widehat s_\theta$ takes in two arguments $(x_\sigma,\sigma)$ in which the second argument $\sigma$ obtained from the schedule does not accurately predict the $\sigma$ of the noisy sample $x_\sigma$ accurately (see \cref{fig:mismatch_schedule}). 

One way to resolve this inconsistency is to entirely remove the second argument (i.e., the noise level) and allow the model to infer it from the noisy sample. Such a construction is called a ``{blind}'' model in the denoising signal processing literature \citep{LiuTanOku13Blind, zhang2017beyond, MohanKadkhodaie19b}. Despite the absence of an explicit noise level, these models have been shown to achieve denoising performance rivaling that of the non-blind models.

In this work, we study denoising diffusion models that employ blind denoisers, which we call \emph{blind denoising diffusion models} (BDDMs).
BDDMs are trained \emph{without} conditioning on the noise level of the image and also require no explicit noise schedule $(\sigma_k)_{k \geq 0}$ during inference, nor an external diffusion rate $(a_k)_{k \geq 0}$. Such a sampling scheme was initially proposed by \citet{kadkhodaie2020solving}: letting $\widehat f_\theta$ denote a trained blind denoiser (see \eqref{eq:empirical_loss}), their proposed scheme essentially amounts to
\begin{align}\label{eq:bddm_sampler_intro}
\begin{split}
        Y_{(k+1)h} &= Y_{kh} + h\bigl(\widehat f_\theta(Y_{kh}) - Y_{kh}\bigr) + \sqrt{2h \beta \widehat\sigma_k^2 }\, \xi_k\,
\end{split}
\end{align}
where $\widehat\sigma_k^2 \defeq \|\widehat f_\theta(Y_{kh}) - Y_{kh}\|^2/d$ is an estimate of noise variance computed at each iteration, $h =\Delta t$ is the discretization, $ \beta \in (0,1]$ is a constant diffusion coefficient, and $\xi_k \sim \cN(0,I)$. The rightmost plot in Figure~\ref{fig:mismatch_schedule} demonstrates that unlike traditional denoising schedules (e.g., log-uniform or the schedule given by \eqref{eq:edm_schedule}), this implicit choice exactly tracks the noise level of the image, resulting in minimal mismatch along the denoising process.

\subsection*{Contributions}
\begin{figure}[t]
    \centering
    \includegraphics[width=.95\linewidth]{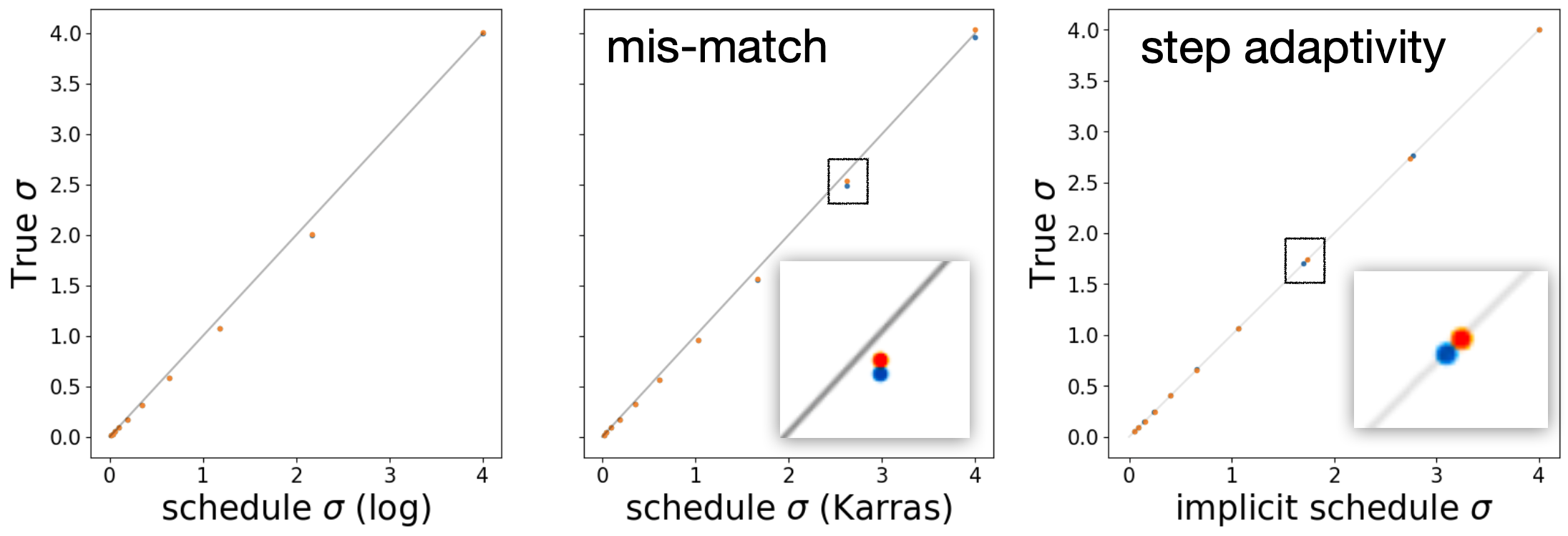}     
    \caption{
    Empirical comparison of scheduled noise level $\sigma_t$ and estimated noise level $\sigma^\star$ along trajectories for the CelebA dataset. The DDPM reverse process outpaces a log schedule \textbf{(left)}, or \eqref{eq:edm_schedule} proposed by \citet{karras2022elucidating} \textbf{(middle)}, leading to an excess error manifested in the loss of detail in samples. In contrast, our {BDDM tracks the implicit schedule $\sigma_t$ via $\hat{\sigma}_k = {\Vert x_k - f(x_k)\Vert}/{\sqrt{d}}$}
     \textbf{(right)}.\looseness-1
    }
    \label{fig:mismatch_schedule}
\end{figure}
Our main contribution is to provide {the first} end-to-end theoretical justification for BDDMs.
In particular, we prove that sampling schemes like \eqref{eq:bddm_sampler_intro} sample from the true data distribution $p_X$ (see Theorem~\ref{thm:thm_main}). Our main assumption is that $p_X$ has \emph{low intrinsic dimensionality} with respect to the ambient space (see Definition~\ref{def:intdim}). 

{Along the way, we develop a number of useful insights. Namely, we derive an analytical formula for the evolution of noise level along the reverse trajectory---see~\eqref{eq:noise_formula}---and provide statistical theory justifying why it can be estimated along the reverse diffusion trajectory.}
Moreover, while our theory supports \emph{any} diffusion coefficient (see \eqref{eq:empirical_sde}), our discretization analysis in Section~\ref{sec:theory_discretization} suggests that schemes of the form \eqref{eq:bddm_sampler_intro} are {possibly optimal}---see Theorem~\ref{thm:disc_error} and Corollary~\ref{cor:sampling_cor}.

{Some recent works have investigated blind denoising for DDMs and related flow-based models, however their conclusions remain far from comprehensive or rigorous \citep{sun2025noise,wang2025equilibrium}.}

\subsubsection*{Notation}
We denote the centered Gaussian with variance $\sigma^2 > 0$ as $\gamma_{\sigma^2} = \cN(0,\sigma^2 I)$. Throughout, we denote the data distribution as $\pdata$, and the noisy data distribution(s) as $p_\sigma = \pdata * \gamma_{\sigma^2}$. We let $\dd B_t$ denote standard Brownian motion. The Kullback--Leibler divergence between probability densities $p,q$ is written ${\rm KL}(p\|q) = \int \log(p(x)/q(x))\dd p(x)$.

\section{Preliminaries on blind denoisers}
\subsection{Training}\label{sec:bg_blind}
Unlike standard denoisers in the diffusion model literature, blind denoisers are not given the noise level during training. For a neural network $f_\theta : \R^d \to \R^d$, a \emph{blind denoiser} is trained using the following objective \citep{zhang2017beyond, GnaCha20OneSize, MohanKadkhodaie19b}:
\begin{align}\label{eq:empirical_loss}
    \widehat f_\theta\! = \argmin_\theta \frac{1}{n} \sum_{i=1}^n  \underset{\substack{\sigma\sim\prior \\ z\sim \gamma}}{\E} \|\bs x_i - f_\theta(\bs x_i + \sigma z)\|^2 
\end{align}
where $\{\bs x_i \}_{i=1}^n \sim \pdata$ are the clean training samples, 
$\gamma$ is the standard Gaussian in $\R^d$,
and $\prior$ is a prior distribution over noise levels in the range $[\sigma_T,\sigma_{0}]$, with 
$0 < \sigma_{T} < \sigma_{0} < +\infty$
(since we are interested in the reverse process, $\sigma_0$ is the largest noise level, and $\sigma_T$ is the smallest). In practice, we minimize~\eqref{eq:empirical_loss} using (batch) stochastic gradient descent, see Algorithm~\ref{alg:train} for pseudocode.

\subsection{Sampling}\label{sec:bg_blind_sample}
To sample, we initialize $Y_{0} \sim \cN(0,\sigma_0^2I)$ and consider various discretizations of 
\begin{align}\label{eq:empirical_sde}
    \dd Y_t = (\widehat f_\theta(Y_t) - Y_t)\dd t + \sqrt{2a_t}\dd B_t\,;
\end{align}
This is the general continuous analogue of \eqref{eq:bddm_sampler_intro} with an arbitrary sequence $(a_t)_{t\geq 0}$. For example, a simple Euler discretization gives, for a constant step size $h > 0$ and $\xi_k \sim \cN(0,I)$
\begin{align*}
    Y_{(k+1)h} = Y_{kh} + h\bigl(\widehat f_\theta(Y_{kh}) - Y_{kh}\bigr) + \sqrt{2a_k h}\,\xi_k\,.
\end{align*}
See Algorithm~\ref{alg:inf} for the inference algorithm. We additionally discuss the exponential (Euler) integrator in Appendix~\ref{app:exp_euler}, as it plays a role in our analysis.

\subsection{Related work}
BDDMs were introduced by \citet{kadkhodaie2020solving,kadkhodaie2021stochastic}, where a blind deep neural network denoiser \citep{zhang2017beyond} was used to sample from $\pdata$. The capabilities of BDDMs for sampling and solving linear inverse problems were demonstrated empirically and their generalization (with respect to training set size) was examined using U-Net architectures \citep{ronneberger2015u} by \citet{Kad+24Generalization}. 

Closest to our work is that of \citet{sun2025noise}, where the authors show that noise conditioning is not required for sampling. They verify this empirically on dynamics which require explicit schedules (e.g., DDPM, DDIM, EDM), and thus do not fully investigate adaptive algorithms as we do. 
On the theoretical front, they make initial progress by studying the simple cases where $\pdata$ is a single Dirac mass or mixture of Dirac masses.  We depart from these simple scenarios by showing that {blind denoisers succeed due to low intrinsic dimensionality of the underlying data distribution}. We provide rigorous derivations under this assumption, and provide numerous other technical and empirical contributions of interest to diverse communities. 

In other works, neural networks have been trained to learn dynamics without time inputs~\citep[e.g.,][]{du2019implicit,wang2025equilibrium}. We believe our work provides a basis for understanding how these training dynamics are possible, accompanied by theoretical guarantees.

\section{Theoretical contributions}\label{sec:theory}
This section contains our theoretical contributions, which rigorously justify the use of BDDMs as generative models. All proofs are provided in the appendix.
\subsection{The optimal blind denoiser}\label{sec:theory_optimal}
The first step is to understand the population minimizer of the blind denoising problem of \eqref{eq:empirical_loss}: Given infinite data and perfect optimization, what is learned? Since the noise level is not known, it should be treated as a random variable (as opposed to a known parameter). From a Bayesian perspective, the optimal solution then comes from marginalizing over this random variable. The following theorem makes this precise and indicates that the optimal blind denoiser can be expressed as a \emph{conditional average} of score functions.
\begin{proposition}\label{prop:bd_minimizer}
The population minimizer of \eqref{eq:empirical_loss} is 
\begin{align*}
    f^\star : y\mapsto y + \int \sigma^2\, \nabla \log p_\sigma(y) \dd \post(\sigma|y)\,,
\end{align*}
where $p_\sigma \defeq p_X * \cN(0,\sigma^2I)$, and 
\begin{align}\label{eq:cond_noise}
\!\!\!\post(\sigma|y) \propto \frac{\prior(\sigma)}{\sigma^{d}}\,\E_{X\sim p_X}\exp\Bigl(-\frac{1}{2\sigma^2}\,\|X-y\|^2\Bigr)\,.
\end{align}
\end{proposition}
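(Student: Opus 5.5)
The plan is to read ``population minimizer'' as the minimizer over all measurable $f:\R^d\to\R^d$ of the population version of \eqref{eq:empirical_loss}, with the empirical average replaced by $X\sim\pdata$:
\begin{align*}
\cL(f) \defeq \E_{X\sim\pdata,\ \sigma\sim\prior,\ z\sim\gamma}\,\|X - f(X+\sigma z)\|^2\,.
\end{align*}
Set $Y = X+\sigma z$, so that $(X,\sigma,Y)$ has a joint law. By the $L^2$ projection property of conditional expectation, $\cL$ is minimized (uniquely, up to $p_Y$-null sets) by $f^\star(y) = \E[X\mid Y=y]$. The proof then reduces to computing this conditional mean in closed form.

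The first step is to apply the tower property over $\sigma$:
\begin{align*}
\E[X\mid Y=y] = \int \E[X\mid Y=y,\sigma]\dd\post(\sigma\mid y)\,.
\end{align*}
For fixed $\sigma$, the conditional law of $Y$ given $\sigma$ is $p_\sigma=\pdata*\gamma_{\sigma^2}$. Tweedie's formula \eqref{eq:tweedie} then gives $\E[X\mid Y=y,\sigma] = y+\sigma^2\nabla\log p_\sigma(y)$. Because $\post(\cdot\mid y)$ is a probability measure, the $y$ term comes out of the integral, and this yields the stated form of $f^\star$.

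The second step identifies $\post(\sigma\mid y)$ by Bayes' rule. The posterior is proportional to $\prior(\sigma)\,p_\sigma(y)$, and
\begin{align*}
p_\sigma(y) = (2\pi\sigma^2)^{-d/2}\,\E_{X\sim\pdata}\exp\bigl(-\|X-y\|^2/(2\sigma^2)\bigr)\,.
\end{align*}
Dropping the constant $(2\pi)^{-d/2}$ gives \eqref{eq:cond_noise}.

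The only real obstacle is technical well-posedness. We need the joint density in $(\sigma,y)$ to exist, which holds because $\sigma\ge\sigma_T>0$ makes every $p_\sigma$ smooth and positive. We also need $\E\|X\|^2<\infty$, so that $\cL$ is finite and the projection argument applies. Finally, we must justify the disintegration and the differentiation under the integral in Tweedie's formula. Here smoothness of the Gaussian kernel and the compact noise range $[\sigma_T,\sigma_0]$ give dominated convergence directly. I would state the second-moment assumption explicitly and otherwise treat these points as routine.
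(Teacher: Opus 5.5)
Your proposal is correct and follows essentially the same route as the paper: the orthogonality ($L^2$ projection) identity reduces the problem to $\E[X\mid Y=y]$, you condition on $\sigma$ through the posterior $\post(\sigma\mid y)$, and you apply Tweedie's formula for each fixed $\sigma$. You also write out the Bayes computation $\post(\sigma\mid y)\propto\prior(\sigma)\,p_\sigma(y)$ that produces the $\sigma^{-d}$ factor, which the paper leaves implicit.
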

\noindent This allows us to express the optimal blind variance-scaled score function as an integral over the noise posterior $\Pi$, i.e.,
\begin{align}\label{eq:mixture_score}
  \!\! \! r^\star(y) \defeq f^\star(y) - y = \int \sigma^2\, \nabla \log p_\sigma(y)\dd \post(\sigma|y)\,.
\end{align}
\subsection{Derivation of the implicit noise schedule}\label{sec:theory_implicit_noise}
We now examine the dynamics arising from the optimal blind score function. To start, we consider the general continuous-time dynamics of \eqref{eq:empirical_sde} 
with a perfectly trained denoiser, and an arbitrary sequence $(a_t)_{t\geq 0}$, which are given by 
\begin{align}\label{eq:optimal_sde}
\begin{split}
\dd X_t^\star &= r^\star(X_t^\star)\dd t + \sqrt{2a_t} \dd B_t 
\end{split}
\end{align}
where we initialize with  $X_0^\star \sim \cN(0, \sigma_0^2I)$. 

We consider the following ansatz: for all $t$, $X_t^\star$ is approximately distributed as $p_{\sigma_t}$, for some noise sequence $(\sigma_t)_{t\ge 0}$ (to be derived).
Further, let us suppose that when $y = X_t^\star$, then the conditional distribution $\post(\sigma|y)$ in~\eqref{eq:cond_noise} concentrates on the true noise level $\sigma_t$.
As a result, ~\eqref{eq:mixture_score} collapses to a Laplace approximation. 
This suggests considering the following process in which we replace the conditional average over noise levels with $\sigma_t$:
\begin{align}\label{eq:ideal_sde}
\begin{split}
\dd X_t &= r_{\sigma_t}^\star(X_t)\dd t + \sqrt{2a_t} \dd B_t \defeq \sigma_t^2\, \nabla \log p_{\sigma_t}(X_t)\dd t + \sqrt{2a_t} \dd B_t\,.
\end{split}
\end{align}
However, along~\eqref{eq:ideal_sde}, the evolution of the density is given by SDE theory, in particular by the Fokker--Planck equation.
In order to be consistent with the ansatz $\msf{Law}(X_t) \approx p_{\sigma_t}$, this implies the following ODE for $\sigma_t$ (see Appendix~\ref{app:opt_schedule}):
\begin{align}\label{eq:sig_ode_main}
    \frac{1}{2}\,\partial_t(\sigma_t^2) = -\sigma_t^2 + a_t\,.
\end{align}
Solving this ODE, we arrive at the following result.

\begin{proposition}\label{prop:opt_schedule}
The ideal SDE process~\eqref{eq:ideal_sde} satisfies $\msf{Law}(X_t) = p_{\sigma_t}$ for all $t\ge 0$, provided that $X_0\sim p_{\sigma_0}$ and
\begin{align}\label{eq:noise_formula}
    \sigma_t^2 = \sigma^2_0 e^{-2t} + 2\int_0^t a_se^{-2(t-s)}\dd s\,.
\end{align}
\end{proposition}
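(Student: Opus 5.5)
The plan is to show that the curve $t\mapsto p_{\sigma_t}$, with $\sigma_t$ given by \eqref{eq:noise_formula}, solves the Fokker--Planck equation of \eqref{eq:ideal_sde}. We then conclude by uniqueness for that linear equation with initial datum $p_{\sigma_0}$. Write $q_t \defeq p_{\sigma_t}$. The Fokker--Planck equation for \eqref{eq:ideal_sde} reads
\begin{align*}
\partial_t \rho_t = -\nabla\cdot\bigl(\rho_t\, \sigma_t^2 \nabla \log p_{\sigma_t}\bigr) + a_t\, \Delta \rho_t\,.
\end{align*}
We plug in $\rho_t = q_t$. Since $q_t\nabla \log q_t = \nabla q_t$, the right-hand side collapses to $(a_t - \sigma_t^2)\,\Delta q_t$.

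For the left-hand side we use the heat equation. The map $s\mapsto p_X * \gamma_{s}$ satisfies $\partial_s (p_X*\gamma_s) = \frac12 \Delta (p_X * \gamma_s)$ in the variance parameter $s$, which is a direct computation on the Gaussian kernel. By the chain rule with $s=\sigma_t^2$, we get $\partial_t q_t = \frac12\,\partial_t(\sigma_t^2)\,\Delta q_t$. Matching the two sides, it therefore suffices that $\frac12\partial_t(\sigma_t^2) = a_t - \sigma_t^2$, which is exactly \eqref{eq:sig_ode_main}. This is a linear first-order ODE in $u_t=\sigma_t^2$, namely $u' = -2u + 2a_t$. Multiplying by $e^{2t}$ and integrating gives precisely \eqref{eq:noise_formula}. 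Conversely, differentiating \eqref{eq:noise_formula} confirms that it solves the ODE, and $\sigma_t^2>0$ for all $t$ whenever $a_t\ge 0$.

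The main obstacle is the rigorous step from ``$q_t$ solves Fokker--Planck'' to ``$\msf{Law}(X_t)=q_t$''. This needs well-posedness of the SDE together with uniqueness of solutions of the Fokker--Planck equation. I would handle it by noting that for $\sigma_t>0$ the drift $r^\star_{\sigma_t}(x) = \E[X\mid Y=x]-x$ is smooth and locally Lipschitz. It is also of linear growth, at least under a mild moment assumption on $p_X$ such as a finite second moment, which bounds the posterior mean. This gives strong existence and uniqueness for the SDE, and a Cauchy--Lipschitz-type or superposition-principle uniqueness result for the Fokker--Planck equation in the class of probability-measure-valued curves.

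If this becomes delicate, a cleaner alternative is the following. Let $\tilde X_t$ be the SDE solution and define $\rho_t = \msf{Law}(\tilde X_t)$. Both $\rho_t$ and $q_t$ then solve the same linear Fokker--Planck equation with a fixed, given drift and the same initial law. Because the equation is linear with a given regular drift, uniqueness for such equations applies (e.g., via duality with the backward Kolmogorov equation), and hence $\rho_t = q_t$.
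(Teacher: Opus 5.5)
Your proposal is correct and follows the same route as the paper. The paper likewise differentiates $p_{\sigma_t}$ via the heat equation in the variance parameter, uses $\Delta p_{\sigma_t} = \nabla\cdot(p_{\sigma_t}\nabla\log p_{\sigma_t})$ to show that $p_{\sigma_t}$ solves the Fokker--Planck equation of \eqref{eq:ideal_sde} exactly when $\frac12\partial_t(\sigma_t^2) = a_t - \sigma_t^2$, solves this linear ODE, and concludes by uniqueness for Fokker--Planck. Your extra well-posedness discussion goes beyond the paper, which simply invokes uniqueness. In fact the local Lipschitz and linear-growth bounds on the drift hold for any $p_X$ on finite time horizons, since $\sigma_t$ stays bounded away from $0$ and $\infty$ there, so no moment assumption is needed.
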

\noindent We record two properties of this emergent noise evolution.
\begin{lemma}\label{lem:decr_noise}
    If $t\mapsto a_t$ is decreasing and $a_0 \le \sigma_0^2$, then $t\mapsto \sigma_t$ is decreasing.
\end{lemma}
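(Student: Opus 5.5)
We work directly with the ODE~\eqref{eq:sig_ode_main}, which by Proposition~\ref{prop:opt_schedule} the function $u_t \defeq \sigma_t^2$ satisfies: $\tfrac12 u_t' = a_t - u_t$, with explicit solution~\eqref{eq:noise_formula}. Since $\sigma_t > 0$, it suffices to show that $u$ is nonincreasing, i.e.\ that $u_t \ge a_t$ for all $t\ge 0$. At $t=0$ this is exactly the hypothesis $a_0 \le \sigma_0^2$.

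To propagate the inequality, we use the integral formula directly rather than a comparison argument, since $a$ need not be differentiable. Write
\begin{align*}
u_t - a_t = (\sigma_0^2 - a_t)e^{-2t} + 2\int_0^t (a_s - a_t)e^{-2(t-s)}\dd s\,,
\end{align*}
using $a_t = a_t e^{-2t} + 2\int_0^t a_t e^{-2(t-s)}\dd s$. Because $a$ is decreasing, $a_t \le a_0 \le \sigma_0^2$, so the first term is nonnegative. For $s\le t$ we have $a_s \ge a_t$, so the integrand is nonnegative as well. Hence $u_t \ge a_t$ for every $t$, and therefore $u_t' = 2(a_t - u_t) \le 0$. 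This shows that $u$, and hence $\sigma_t$, is nonincreasing.

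The only subtle point is regularity. If $a$ is merely monotone (and thus only a.e.\ continuous), \eqref{eq:sig_ode_main} holds a.e.\ and $u$ is absolutely continuous, so $u' \le 0$ a.e.\ still gives monotonicity. Alternatively, one can avoid derivatives altogether by comparing $u_{t+\delta}$ and $u_t$ through the semigroup form
\begin{align*}
u_{t+\delta} = u_t e^{-2\delta} + 2\int_t^{t+\delta} a_s e^{-2(t+\delta-s)}\dd s \le u_t e^{-2\delta} + a_t(1-e^{-2\delta})\,,
\end{align*}
which is at most $u_t$ because $a_t \le u_t$. Strict decrease follows if $a_t < u_t$, which holds for instance whenever $a_0 < \sigma_0^2$ or $a$ is strictly decreasing. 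Otherwise the conclusion is read as ``nonincreasing''.
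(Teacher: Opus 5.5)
Your proof is correct and follows the paper's argument: the paper also reduces the claim to $\sigma_t^2 \ge a_t$ and gets this from the explicit formula~\eqref{eq:noise_formula} via $\sigma_t^2 \ge \sigma_0^2 e^{-2t} + a_t(1-e^{-2t}) \ge a_t$, which is your two-nonnegative-terms decomposition of $u_t - a_t$ written differently. Your semigroup comparison and regularity remarks are valid additions; they give monotonicity without needing $a$ to be differentiable, a point the paper does not address.
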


\begin{lemma}\label{lem:noise_to_zero}
    If $t\mapsto a_t$ is decreasing and $a_t \to 0$ as $t\to\infty$, then $\sigma_t \to 0$ as well.
\end{lemma}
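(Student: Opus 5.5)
We need to show $\sigma_t^2 = \sigma_0^2 e^{-2t} + 2\int_0^t a_s e^{-2(t-s)}\dd s \to 0$, given $a$ decreasing with $a_t\to0$. Since $a$ is decreasing with limit $0$, it is nonnegative. This is natural for a diffusion coefficient; if it is not assumed, it follows from monotonicity together with the limit. In particular $a_s\le a_0<\infty$ for all $s$, so the integral is well defined. The first term $\sigma_0^2e^{-2t}$ tends to $0$ trivially. The whole task is therefore to show that the convolution term $2\int_0^t a_s e^{-2(t-s)}\dd s$ vanishes. This is a standard Abelian-type statement: convolving a function that tends to zero against an integrable exponential kernel gives a function that tends to zero.

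I will use an $\epsilon$-splitting argument. Fix $\epsilon>0$ and choose $T_\epsilon$ such that $a_s\le\epsilon$ for all $s\ge T_\epsilon$. For $t>T_\epsilon$, split the integral at $T_\epsilon$. On $[T_\epsilon,t]$, bound $a_s\le \epsilon$, which gives
\[
2\int_{T_\epsilon}^t a_s e^{-2(t-s)}\dd s\le \epsilon\,(1-e^{-2(t-T_\epsilon)})\le\epsilon.
\]
On $[0,T_\epsilon]$, use $a_s\le a_0$, which gives
\[
2\int_0^{T_\epsilon} a_s e^{-2(t-s)}\dd s\le a_0\, e^{-2t}\bigl(e^{2T_\epsilon}-1\bigr).
\]
This tends to $0$ as $t\to\infty$ because $T_\epsilon$ is fixed. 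Hence $\limsup_{t\to\infty}\sigma_t^2\le\epsilon$. Since $\epsilon$ was arbitrary, $\sigma_t^2\to0$, and therefore $\sigma_t\to0$ because $\sigma_t\ge0$.

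An alternative route is to use the ODE~\eqref{eq:sig_ode_main} directly, or an L'Hôpital argument. Writing $\sigma_t^2 = \bigl(\sigma_0^2+2\int_0^t a_s e^{2s}\dd s\bigr)/e^{2t}$, the ratio of derivatives is $2a_te^{2t}/(2e^{2t}) = a_t\to0$. This requires the numerator to diverge or stay bounded; in the bounded case the limit is $0$ directly. The splitting argument avoids that case analysis, so I will present it instead.

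There is no real obstacle here. The only point needing care is to state explicitly that $a_t\ge0$ and that $a$ is bounded, both of which follow from monotonicity and the limit. This boundedness is what makes the early-time piece of the integral controllable.
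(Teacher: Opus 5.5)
Your proposal is correct and follows essentially the same route as the paper's proof. The paper also splits the convolution integral at the time $t_0$ after which $a_s\le\varepsilon$, bounds the early piece by $a_0\,(e^{-2(t-t_0)}-e^{-2t})$ using $a_s\le a_0$, and bounds the late piece by $\varepsilon$. Your explicit remark that $a_t\ge 0$, which gives $\liminf_{t\to\infty}\sigma_t^2\ge 0$, is a small point the paper leaves implicit.
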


Henceforth, we assume that the conditions of Lemma~\ref{lem:noise_to_zero} hold.
In brief, this discussion suggests that if $\post(\cdot|X_t)$ concentrates around $\sigma_t$ such that
\begin{align}\label{eq:noise_approx}
    \int \sigma^2\, \nabla \log p_\sigma(y) \dd \post(\sigma|X_t) \approx \sigma_t^2\, \nabla \log p_{\sigma_t}(X_t)\,,
\end{align}
and if $a_t \to 0$, then heuristically we expect $\msf{Law}(X^\star_t)\approx \msf{Law}(X_t) \to p_X$ as $t \to \infty$.

We stress that at no point during the sampling process will the blind denoiser be given information about the schedule $\sigma_t$. Instead, it must adapt to this sequence automatically by estimating $\sigma_t$ from the current input $Y_t$. In other words, if $\post(\cdot|X_t)$ concentrates, the true $\sigma_t$ for which $p_{\sigma_t} = \pdata * \cN(0,\sigma_t^2 I)$ can be inferred from a single input $Y_t$. This is because $p_{\sigma}(Y_t)$ would be negligible for all $\sigma$ but a small set around $\sigma_t$. As a result, under the concentration assumption, the ideal dynamics in the reverse process should follow the true noise sequence $\sigma_t$.

\subsubsection*{Empirical verification of the concentration assumption}\label{ssec:analytic_denoiser}
Since our proof relies on concentration of $\Pi(\cdot|y)$, we verify this main assumption empirically on synthetic data, before stating the rest of theoretical results. Results on real data are postponed to Section~\ref{sec: empirical results}. 

We study an analytical blind denoiser model when $p_X$ is a $2$-component mixture of Gaussians supported on a $\msf k$-dimensional subspace in $\mathbb{R}^{d}$. 
Here, the optimal score function $\nabla \log p_\sigma$ is known in closed form,  
allowing us to isolate the  error induced by the uncertainty in noise estimation. 
Our analytical blind denoiser will be given by a maximum likelihood estimate (MLE) of $\Pi(\cdot|y)$ for a noisy sample $y$:\looseness-1
\begin{align*}
 \widehat f(y) \defeq y + \widehat{\sigma}^2 \nabla \log p_{\widehat{\sigma}}(y)\,, \quad \widehat{\sigma} = \arg\max \Pi(\cdot|y)
\end{align*}

Figure~\ref{fig:mle-gaussian-mixture} shows the distribution of estimated noise values for a mixture of two Gaussians of dimensionality $\msf k$ in a $d$-dimensional ambient space. Estimates are broadly distributed when $\msf k\approx d$, but concentrate for $\msf k \ll d$, illustrating a ``blessing of dimensionality''. Figure~\ref{fig:samples_mixture_2Gaussians} shows samples generated via Algorithm~\ref{alg:inf} using our analytical denoiser.
In these experiments, $h = 0.3$ and the dynamics are deterministic ($a_t = 0$). Similar results are obtained when $a_t > 0$. For more details, see Appendix~\ref{app:gaussian}. 

\begin{figure}[t]
    \centering
   \includegraphics[width=.48\linewidth]{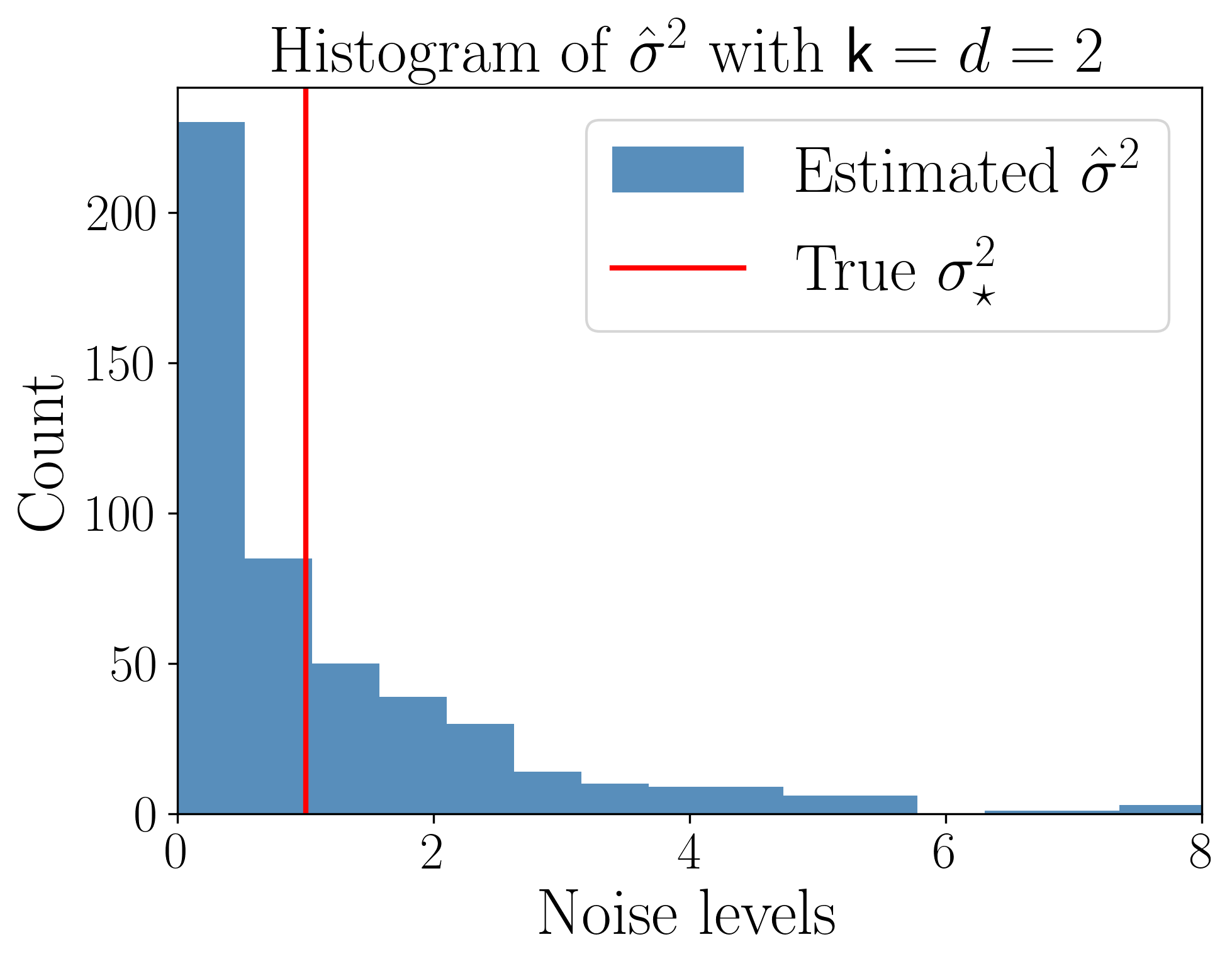}     
   \hfil
   \includegraphics[width=.48\linewidth]{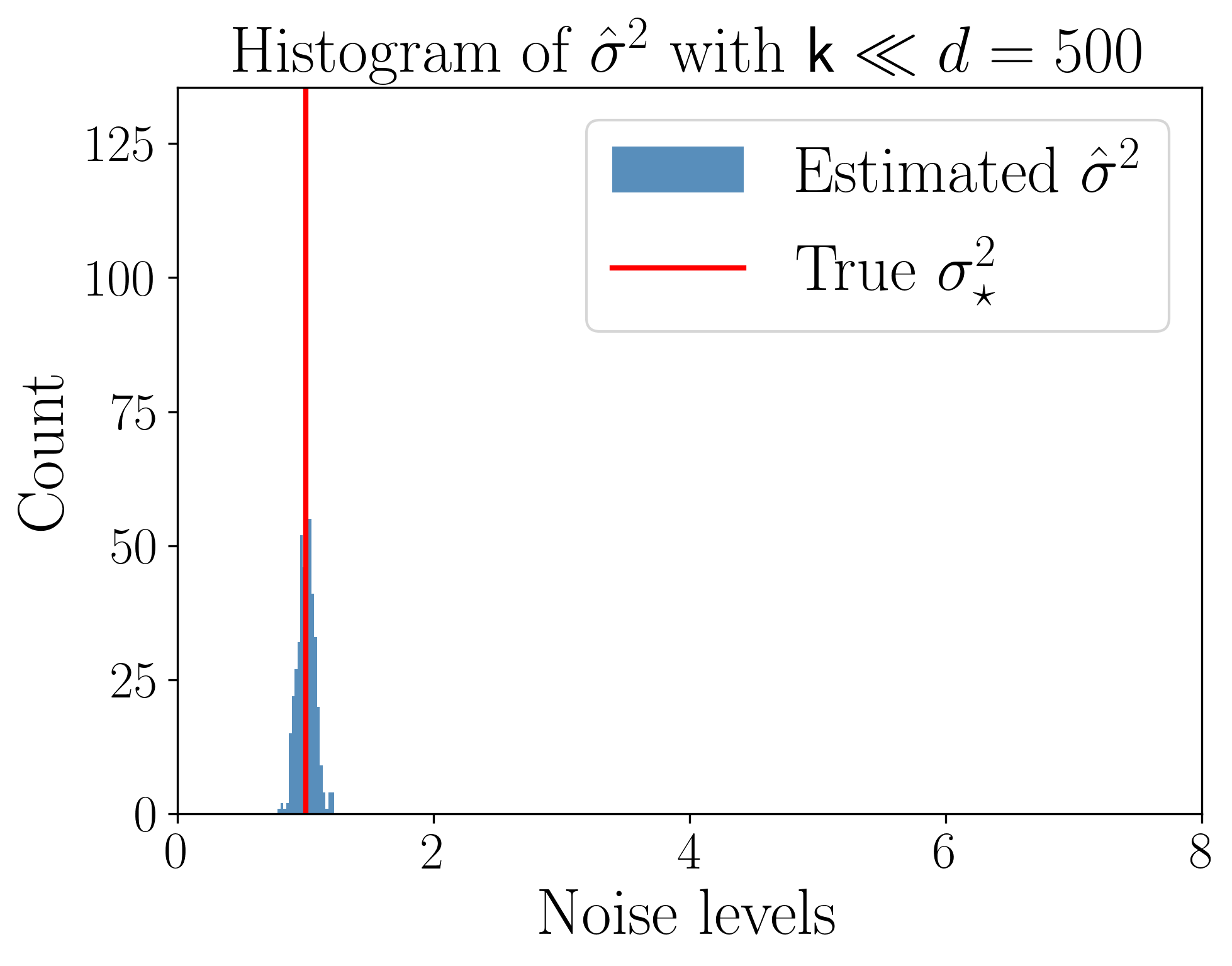}     
   \caption{Empirical {density} of maximum likelihood estimates $\widehat{\sigma} = \arg\max \Pi(\sigma|y)$ for a mixture of two Gaussians with intrinsic dimensionality $\msf k$ = $2$ in an ambient space of dimensionality $d$. Estimates are broadly distributed for small $\msf k = d$ \textbf{(left)}, but concentrated for $\msf k^2 \ll d$ \textbf{(right)}.
    }
    \label{fig:mle-gaussian-mixture}
\end{figure}
\begin{figure}[t]
    \centering
   \includegraphics[width=.49\linewidth]{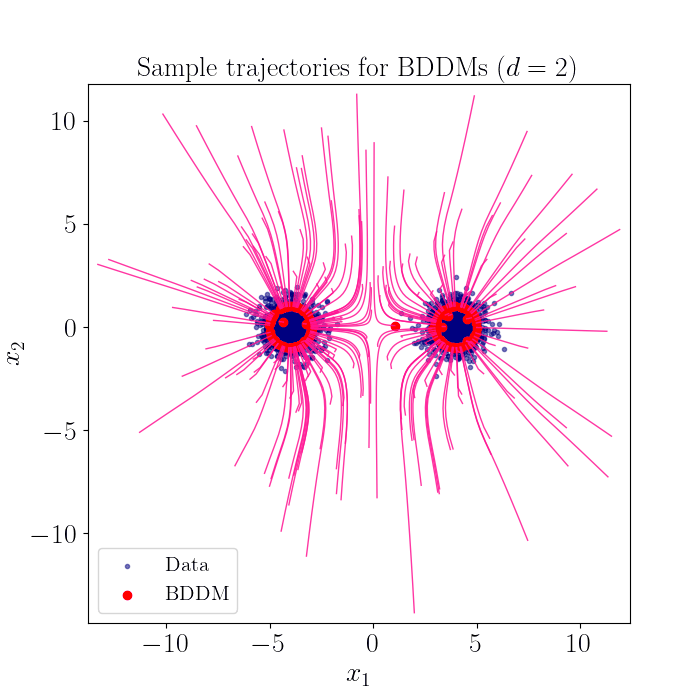}  
   \hfil
   \includegraphics[width=.49\linewidth]{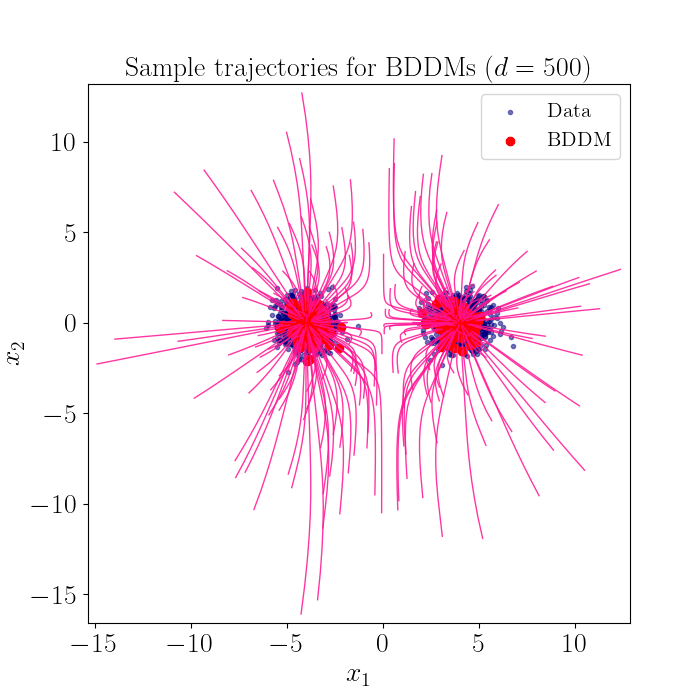}     
    \caption{\textbf{Illustration of Corollary~\ref{cor:sampling_cor}:} Example trajectories and samples for an analytical blind denoiser applied to a mixture of two Gaussians with $\msf k = 2$. For $d=\msf k=2$ \textbf{(left)}, sampling fails, due to errors in the MLE estimates of noise level. For $d=500 \gg \msf k^2$ \textbf{(right)}, sampling is successful, illustrating the blessings of dimensionality.  
}\label{fig:samples_mixture_2Gaussians}
\end{figure}

\subsection{Error bound without discretization}\label{sec:main_result}
In this section, we study the conditions required for our concentration assumption to hold, and then quantify the excess error arising from deviations from the perfect concentration. 

Suppose that we have an empirical minimizer $\widehat f_\theta$ that will act as the blind denoiser, and let $\widehat p_t \defeq \msf{Law}(Y_t)$ along the trained dynamics~\eqref{eq:empirical_sde}. For now, we omit discretization error, deferring this discussion to Section~\ref{sec:theory_discretization}. Our goal is to bound the KL divergence between $\widehat p_T$, the distribution corresponding to our algorithm at time $T$, and the {target distribution with some small additional noise $p_{\sigma_T} = p_X * \cN(0,\sigma_T^2I)$}. 

By a standard application of Girsanov's theorem, we can decompose this error into the following terms:
\begin{align}\label{eq:girsanov_33}
\begin{split}
\KL(p_{\sigma_T} \| \widehat p_T)
    &\lesssim \KL(p_{\sigma_0}\|\widehat p_0) 
     +\int_0^T  \frac{\|\widehat f_\theta - f^\star\|^2_{L^2(p_{\sigma_t})}}{a_t} \dd t +\int_0^T\frac{\|r^\star_{\sigma_t} - r^\star\|^2_{L^2(p_{\sigma_t})}}{a_t}\dd t\,.
\end{split}\raisetag{1.5\baselineskip}
\end{align}
The first term on the right-hand side corresponds to the initialization error. The second term measures how close the trained denoiser is to the optimal one (the ``score error''). Finally, the third term is related to the error in the approximation~\eqref{eq:noise_approx}, i.e., the error in estimating the noise level. 
This last term is the primary novelty over prior work on diffusion models, so we focus our attention on it.

Before stating our main results, we require the following assumptions and definitions. The support of our data distribution will be denoted by $\cX \defeq \supp(p_X)$, and we let $\msf k$ denote  the ``{intrinsic dimension}'' of $p_X$, defined as follows.

\begin{definition}[Intrinsic dimension]\label{def:intdim}
If $\pdata$ has support $\cX$, we define the \emph{intrinsic dimension} of $\pdata$ at scale $r_0$ to be $\msf k \defeq 1 + \log N_{\cX}(r_0)$, where $N_{\cX}(r_0)$ denotes the minimal number of balls of radius $r_0$ needed to cover $\cX$.
\end{definition}

For example, if $\cX$ is a $k$-dimensional subspace and $\cX$ has diameter $R$, then $\msf k \asymp k\log(R/r_0)$. But the {intrinsic dimension captures a broader range of situations} such as manifold structure, or when $\cX$ is actually full-dimensional but resembles a $k$-dimensional manifold at scale $r_0$.
Our proofs require $r_0 = \sigma_T^2/(\sigma_0\sqrt d)$ as the choice of scale.\looseness-1

Our assumptions are the following.
\begin{description}
\item[\textbf{(A1)}] $p_X$ has finite second moment, denoted $\mathsf m_2^2$.
\item[\textbf{(A2)}] $\pdata$ has low intrinsic dimensionality, with $d \ge C\,(\msf k + \log(\sigma_0/\sigma_T))$ for a sufficiently large absolute constant $C > 0$.
\item[\textbf{(A3)}] For $\epsilon_{\rm BD} > 0$,  $\int_0^Ta_t^{-1}\|\widehat f_\theta - f^\star\|^2_{L^2(p_{\sigma_t})}\dd t \leq \epsBD^2$.
\end{description}

Note that \textbf{(A1)}, \textbf{(A2)}, and \textbf{(A3)} are relatively mild. In particular, \textbf{(A2)} is inspired from a slew of previous empirical observations \citep{olshausen1996natural, roweis2000nonlinear, chandler2007estimates, Hnaff2014TheLL,pope2021intrinsic, Brown+23Manifolds} and has been leveraged in several other works on sampling via diffusion models \citep{li2024adapting, pooladian2024plug, liang2025low}. \textbf{(A3)} is analogous to the standard assumption that the score functions are accurately learned in $L^2$.
In its current form, it is somewhat obscure; we provide further discussion in Section~\ref{sec:prior} when we specialize to specific noise schedules.

We are now in a position to state our first main result about excess error due to noise estimation.

\begin{theorem}\label{thm:thm_main}
Under \textbf{(A1)}--\textbf{(A3)}, the KL divergence between the (reverse) processes \eqref{eq:empirical_sde} and \eqref{eq:ideal_sde} is bounded by  
\begin{align*}
    {\rm KL}(p_{\sigma_T}\|\widehat p_T) \lesssim \frac{\msf m^2_2}{\sigma_0^2} + \epsilon_{{\rm BD}}^2 + \bigl(\frac{\msf k^3}{d} + \frac{\msf k^5}{d^2}\bigr) \int_0^T \frac{\sigma_t^2}{a_t}\dd t\,.
\end{align*}
\end{theorem}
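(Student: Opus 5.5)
We start from the Girsanov decomposition \eqref{eq:girsanov_33} and bound its three terms separately. By Proposition~\ref{prop:opt_schedule}, the ideal process \eqref{eq:ideal_sde} started at $p_{\sigma_0}$ has marginals $p_{\sigma_t}$. The path-space KL therefore involves only the drift difference $\widehat f_\theta - f^\star + r^\star - r^\star_{\sigma_t}$ integrated against $p_{\sigma_t}$. We split this difference with $(u+v)^2\le 2u^2+2v^2$.

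\textbf{Initialization and score terms.} The score term is at most $\epsBD^2$ by \textbf{(A3)}. For the initialization term, we use convexity of KL in its first argument:
\[
\KL(p_X*\gamma_{\sigma_0^2}\,\|\,\gamma_{\sigma_0^2}) \le \E_{X\sim p_X}\KL(\cN(X,\sigma_0^2 I)\,\|\,\cN(0,\sigma_0^2I)) = \msf m_2^2/(2\sigma_0^2),
\]
using \textbf{(A1)}.

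\textbf{Noise-estimation term.} It remains to show that for each fixed $t$,
\[
\|r^\star - r^\star_{\sigma_t}\|^2_{L^2(p_{\sigma_t})} \lesssim \Bigl(\frac{\msf k^3}{d}+\frac{\msf k^5}{d^2}\Bigr)\sigma_t^2 .
\]
Writing $r^\star_\sigma(y) = \E[X\mid Y=y,\sigma]-y$, we have $r^\star(y)-r^\star_{\sigma_t}(y) = \int (r^\star_\sigma(y)-r^\star_{\sigma_t}(y))\dd\post(\sigma|y)$. We decompose by Jensen into two pieces:
\begin{itemize}
\item a \emph{bias} piece $(\sigma^2-\sigma_t^2)$ times a score;
\item a \emph{posterior-mean} piece $\E_\sigma[X|y]-\E_{\sigma_t}[X|y]$.
\end{itemize}
Both are controlled by how far $\sigma$ drawn from $\post(\cdot|y)$ lies from $\sigma_t$. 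The core lemma is a concentration statement: for $y=x+\sigma_t z$ with $x\sim p_X$, with high probability $\post(\cdot|y)$ places nearly all its mass on $|\sigma^2/\sigma_t^2-1|\lesssim \sqrt{\msf k/d}+\msf k/d$, with the second-order term giving the $\msf k^5/d^2$ piece after squaring and multiplying by the remaining $\msf k$ factors.

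To prove this we analyze the log-likelihood $\ell(\sigma) = -d\log\sigma + \log\E_{X}\exp(-\|X-y\|^2/2\sigma^2)$ via a covering argument. Cover $\cX$ by $N_\cX(r_0)=e^{\msf k-1}$ balls of radius $r_0=\sigma_T^2/(\sigma_0\sqrt d)$. On each center $c_j$ we have
\[
\|y-c_j\|^2 = \|x-c_j\|^2 + 2\sigma_t\langle x-c_j,z\rangle + \sigma_t^2\|z\|^2 .
\]
Gaussian concentration, combined with a union bound over the $e^{\msf k}$ centers and over a grid of $\sigma$ values in $[\sigma_T,\sigma_0]$ (which costs $\log(\sigma_0/\sigma_T)$ and is absorbed by \textbf{(A2)}), gives deviations of order $\sigma_t\sqrt{\msf k}\,\|x-c_j\|+\sigma_t^2\sqrt{d\msf k}$. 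The choice of $r_0$ ensures that replacing points by centers perturbs $\|X-y\|^2/\sigma^2$ by $O(1)$ uniformly. The upshot is that $\ell(\sigma)$ behaves like $-d\log\sigma - d\sigma_t^2/(2\sigma^2)$ plus a perturbation of size $O(\msf k+\sqrt{d\msf k})$ in the relevant range. This function is maximized at $\sigma=\sigma_t$, with curvature $\asymp d$ in $\log\sigma$. Hence the posterior concentrates at relative width $\sqrt{\msf k/d}$, and the tails decay like $e^{-d\delta^2}$. The low-probability complement event is handled by crude bounds $\|r^\star_\sigma(y)\|\lesssim \|y\|+\msf m_2$ and Cauchy--Schwarz.

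\textbf{Converting concentration into an $L^2$ bound.} Finally we bound $\|r^\star_\sigma(y)-r^\star_{\sigma_t}(y)\|$ for $\sigma$ within the concentration window. We would use that $\partial_{\sigma^2} \E_\sigma[X|y]$ is a posterior covariance term, whose size is controlled by the effective number of centers ($\msf k$) and by $\sigma_t$-scale distances. This yields $\|r^\star_\sigma - r^\star_{\sigma_t}\|\lesssim |\sigma^2/\sigma_t^2-1|\cdot\msf k\,\sigma_t$ in expectation. Squaring and averaging then gives the claimed $(\msf k^3/d+\msf k^5/d^2)\sigma_t^2$, and integrating against $1/a_t$ finishes the proof.

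The main obstacle is the uniform-in-$\sigma$ concentration of $\post(\cdot|y)$ together with this sensitivity bound. The polynomial powers of $\msf k$ are not forced by the heuristics and will depend on how carefully the covering and posterior-covariance estimates are done. I would first aim for any polynomial-in-$\msf k$ over $d$ rate and then tune.
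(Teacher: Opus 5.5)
Your skeleton matches the paper's: the Girsanov split, the initialization and score terms (which you handle correctly), and writing $r^\star_\sigma-r^\star_{\sigma_t}$ through $\partial_\omega(\omega^2\nabla\log p_\omega)=\omega^{-3}{\rm Cov}(X,\|X-Y\|^2\mid Y=y)$. The quantitative core, however, does not deliver the stated rate.

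The covering argument bounds this covariance by $[(\sigma^2+\sigma_t^2)\msf k_\delta]^{3/2}$, where $\msf k_\delta\defeq\msf k+\log(1/\delta)$. The reason is that it contains a third posterior moment of $X-f^\star_\sigma(y)$ at scale $\sigma\sqrt{\msf k}$, plus $C_\sigma(y)(f^\star_\sigma(y)-y)$. This is exactly Corollary~\ref{cor:helper_cor}, and it gives sensitivity $\msf k^{3/2}\sigma_t|\sigma^2/\sigma_t^2-1|$. Your factor $\msf k$ is asserted, not derived.

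The arithmetic then does not close:
\begin{itemize}
\item With your window $|\sigma^2/\sigma_t^2-1|\lesssim\sqrt{\msf k/d}+\msf k/d$, the $\msf k^{3/2}$ sensitivity yields $(\msf k^4/d+\msf k^5/d^2)\,\sigma_t^2$, a factor $\msf k$ short of the theorem.
\item With your factor $\msf k$ one would get $\msf k^3/d+\msf k^4/d^2$.
\end{itemize}
So your final ``squaring and averaging'' step matches neither combination. What is actually needed is a posterior window of relative width $\msf e_\delta=\sqrt{\log(1/\delta)/d}+\msf k_\delta/d$, with no $\msf k$ under the square root.

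Your $\sqrt{\msf k/d}$ has two avoidable sources. First, you union-bound the fluctuation of $\sigma_t^2\|z\|^2$ over centers. But it is one scalar common to all centers, with deviation $\sigma_t^2\sqrt{d\log(1/\delta)}$. Second, you control the $O(\msf k)$ log-sum-exp correction to $\ell$ only in size. A perturbation of size $P$ of a function with curvature $d$ only localizes the posterior to width $\sqrt{P/d}$.

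The fix, as in the paper, is to work with the derivative of the log-posterior in $\lambda=\sigma^{-2}$, namely $2\ell'(\lambda\mid y)=-(d-\alpha)/\lambda+\E_{q_\lambda}[\|X-y\|^2\mid Y=y]$. For $y=\bar x+V$, expand $\|X-y\|^2=\|V\|^2+2\langle V,\bar x-X\rangle+\|X-\bar x\|^2$.
\begin{itemize}
\item The covering enters only through the last two terms, which are $O((\sigma^2+\sigma_t^2)\msf k_\delta)$ after posterior averaging, uniformly in $\sigma$.
\item The only $\sqrt d$-scale term, $\|V\|^2$, involves no centers.
\end{itemize}
This is \eqref{eq:normdiff}. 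It shows that $\ell'$ vanishes within relative distance $\msf e_\delta$ of $\lambda_t$ and grows like $d(\lambda-\lambda_t)/\lambda_t^2$ beyond it, which gives \eqref{eq:noise_formal}.

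Finally, do not fix $\delta$ and use Cauchy--Schwarz with $\|r^\star_\sigma(y)\|\lesssim\|y\|+\msf m_2$. That bound fails pointwise, since a mismatched-noise posterior mean can lie anywhere in $\cX$. A fixed small $\delta$ would also put $\log d$ factors into the rate. Instead, integrate the bound $\msf k_\delta^3\msf e_\delta^2/\lambda_t$ over all $\delta$, using a crude $\kappa^4$ bound ($\kappa=\sigma_0^2/\sigma_T^2$) only for $\delta\le e^{-cd}$. There $\kappa^4e^{-\Omega(d)}$ is negligible because of $d\gg\log(\sigma_0/\sigma_T)$ in \textbf{(A2)}.
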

\vspace{-2mm}
\noindent The first term (initialization error) is made small with a suitably large choice of $\sigma_0^2$, and the second term (the ``score error'') is small if the training is successful.
The third term, which scales with $\msf k$, is discussed next.

\subsection{Interpretation as a Bayesian problem}\label{sec:theory_bayesian}
We now provide an overview of our proof with rigorous details deferred to the appendix.
As the first two terms in Theorem~\ref{thm:thm_main} are standard, we focus on the novel term $\int_0^T a_t^{-1}\|r^\star_{\sigma_t}-r^\star\|_{L^2(p_{\sigma_t})}^2\dd t$ which captures the error in estimating the noise level. 

Note that for $X_t \sim p_{\sigma_t}$ we can re-write the integrand with respect to deviations of noise posterior, $\Pi(\sigma|X_t)$, from the perfect posterior concentrated on a Dirac mass $\delta_{\sigma_t}$
\begin{align*}
    \|(r_{\sigma_t}^\star - r^\star)(X_t)\|^2 &= \Bigl\lVert \int \sigma^2\, \nabla \log p_{\sigma}(X_t) \dd(\delta_{\sigma_t} - \post(\sigma |X_t))\Bigr\rVert^2\\
    &= \Bigl\lVert\iint_{\sigma}^{\sigma_t} \partial_{\omega} (\omega^2\, \nabla \log p_{\omega}(X_t)) \dd \omega \dd \post(\sigma|X_t)\Bigr\rVert^2\,,
\end{align*}
where we used the fundamental theorem of calculus in the last line. A simple calculation shows that
\begin{align*}
\begin{split}
\partial_\omega(\omega^2\, \nabla \log p_\omega(y))\!=\!\omega^{-3}\,{\rm Cov}(X,\|X-Y\|^2\mid Y=y)\,,
\end{split}
\end{align*}
under the distribution of $X \sim \pdata$, $Y \sim \cN(X,\omega^2 I)$.
We show in Corollary~\ref{cor:helper_cor} that the conditional covariance is bounded in terms of the intrinsic dimension $\msf k$, essentially leading to a bound on the above term of order
\begin{align*}
  \sigma_t^3\, \msf k^3 \int |{\sigma_t^{-2}} - {\sigma^{-2}}|^2\dd\post(\sigma|X_t)\,.
\end{align*}
The remainder of the analysis is a frequentist analysis of a Bayesian method: We have an observation $X_t \sim p_{\sigma_t}$, where $\sigma_t$ is the ``ground truth'' noise level. We ask whether the posterior distribution on the noise level $\post(\cdot|X_t)$ concentrates on $\sigma_t$ given a single sample. 
For interpretability, we focus on the class of power law priors $\prior(\sigma)\propto \sigma^{\alpha-3}$ on $[\sigma_T,\sigma_0]$ for $\alpha\in\R$, $\alpha = O(1)$, and we use the following change of variables.

\begin{lemma}\label{lem:log-derivatives}
If $\sigma \sim \post(\cdot|y)$ in~\eqref{eq:cond_noise}, then $\lambda \defeq \sigma^{-2}$ is distributed according to
\begin{align}\label{eq:cond_lambda}
    \ppost(\lambda|y)\propto \lambda^{(d-\alpha)/2}\,\E_{X\sim p_X}\bigl[\exp\bigl(-\tfrac{\lambda}{2}\,\|X-y\|^2\bigr)\bigr]\,.
\end{align}
Writing $\ell(\lambda|y) \defeq -\log\ppost(\lambda|y)$, then
\begin{align*}
    &\ell'(\lambda|y) = -\frac{d-\alpha}{2\lambda} +\frac{1}{2}\,\E_{q_\lambda}[\|X-Y\|^2\mid Y=y]\,, \\[0.25em]
    &\ell''(\lambda|y) = \frac{d-\alpha}{2\lambda^2} - \frac{1}{4}\,{\rm Var}_{q_\lambda}(\|X-Y\|^2\mid Y=y)\,,
\end{align*}
where, in an abuse of notation, we use $q_\lambda$ to denote the joint distribution for which $X \sim \pdata$, $Y \sim \cN(X, \lambda^{-1} I)$.
\end{lemma}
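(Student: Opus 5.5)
The proof is a change of variables in the posterior \eqref{eq:cond_noise}, followed by direct differentiation of the resulting log-density.

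\textbf{Step 1: the density of $\lambda$.} Start from $\post(\sigma|y)\propto \prior(\sigma)\,\sigma^{-d}\,\E_{X\sim p_X}\exp(-\|X-y\|^2/(2\sigma^2))$ with $\prior(\sigma)\propto\sigma^{\alpha-3}$. Set $\lambda=\sigma^{-2}$, so that $\sigma=\lambda^{-1/2}$ and $|\D\sigma/\D\lambda|=\tfrac12\lambda^{-3/2}$. Substituting into the density gives the factors
\begin{align*}
\sigma^{\alpha-3}\,\sigma^{-d}\,\lambda^{-3/2}=\lambda^{-(\alpha-3)/2}\,\lambda^{d/2}\,\lambda^{-3/2}=\lambda^{(d-\alpha)/2}\,,
\end{align*}
and the exponential becomes $\exp(-\tfrac{\lambda}{2}\|X-y\|^2)$. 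This is \eqref{eq:cond_lambda}, supported on $[\sigma_0^{-2},\sigma_T^{-2}]$. The prior exponent $\alpha-3$ was evidently chosen so that these powers combine cleanly.

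\textbf{Step 2: the first derivative.} Write $Z(\lambda)\defeq\E_{p_X}\exp(-\tfrac{\lambda}{2}\|X-y\|^2)$, so that $\ell(\lambda|y)=-\tfrac{d-\alpha}{2}\log\lambda-\log Z(\lambda)+\text{const}$. Differentiating under the expectation is justified by dominated convergence: for $\lambda$ in a compact subset of $(0,\infty)$, the integrands $\|X-y\|^{2m}e^{-\lambda\|X-y\|^2/2}$ are bounded. This gives
\begin{align*}
-\partial_\lambda\log Z=\tfrac12\,\frac{\E_{p_X}[\|X-y\|^2e^{-\lambda\|X-y\|^2/2}]}{Z(\lambda)}\,.
\end{align*}
The next point is to identify this ratio as a conditional expectation under $q_\lambda$. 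By Bayes' rule, the conditional law of $X$ given $Y=y$ under $q_\lambda$ has density proportional to $p_X(x)\,\lambda^{d/2}e^{-\lambda\|x-y\|^2/2}$. The factor $\lambda^{d/2}$ is constant in $x$, so it cancels in the normalization. The ratio is therefore exactly $\E_{q_\lambda}[\|X-Y\|^2\mid Y=y]$, which yields the stated formula for $\ell'$.

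\textbf{Step 3: the second derivative.} Differentiating once more, $-\partial_\lambda^2\log Z$ equals minus the derivative of the exponentially tilted mean of $\tfrac12\|X-y\|^2$. For the tilted family with weights $e^{-\lambda W}$ and $W=\tfrac12\|X-y\|^2$, the standard identity is $\partial_\lambda\E_\lambda[W]=-\var_\lambda(W)$. Hence $-\partial^2_\lambda\log Z=-\var_{q_\lambda}(W\mid Y=y)$, and since $\var(W)=\tfrac14\var(\|X-Y\|^2)$, this is $-\tfrac14\var_{q_\lambda}(\|X-Y\|^2\mid Y=y)$. Adding the term $\tfrac{d-\alpha}{2\lambda^2}$ from the power factor gives $\ell''$.

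The computation is routine throughout. The only points needing care are the Jacobian exponent bookkeeping in Step 1 and recognizing that the $\lambda^{d/2}$ Gaussian normalization cancels in the conditional law.
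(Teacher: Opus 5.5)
Your proposal is correct and follows the paper's proof essentially step for step: the same change of variables with Jacobian $\tfrac12\lambda^{-3/2}$, then differentiating $-\log$ of the $\lambda$-density and using the tilted-family identity $\partial_\lambda \E_{q_\lambda}[\|X-y\|^2\mid Y=y] = -\tfrac12\var_{q_\lambda}(\|X-y\|^2\mid Y=y)$. Your dominated-convergence remark supplies a justification the paper leaves implicit; one small verbal slip is that $-\partial_\lambda^2\log Z$ equals \emph{plus} (not minus) the $\lambda$-derivative of the tilted mean of $\tfrac12\|X-y\|^2$, although your final expression $-\var_{q_\lambda}(W\mid Y=y)$ is correct.
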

Notice that there is no reason for $\ell''(\cdot|y)$ to always be convex, given the presence of a negative sign on the second term. However, our \emph{low intrinsic dimensionality assumption} \textbf{(A2)} nevertheless allows us to show that  the {noise variance posterior concentrates on the ground truth {signal}} $\lambda_t \defeq \sigma_t^{-2}$. 

\begin{proposition}\label{prop:nut_informal}
Under \textbf{(A1)}--\textbf{(A2)}, with high probability over $X_t \sim p_{\sigma_t}$, and for $\lambda \sim \ppost(\cdot|X_t)$, 
\begin{align*}
\E|\lambda - \lambda_t|^2
\lesssim \lambda_t^2\,(d^{-1} + \msf k^2 d^{-2})\,.
\end{align*}
\end{proposition}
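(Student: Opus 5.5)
The plan is to study the log-posterior $\ell(\cdot|y)$ of Lemma~\ref{lem:log-derivatives} at the observed point $y = X_t \sim p_{\sigma_t}$, and to show two things: its minimizer lies within $O(\lambda_t(d^{-1/2} + \msf k\, d^{-1}))$ of $\lambda_t$, and it is strongly convex at scale $d/\lambda_t^2$ on a neighborhood of $\lambda_t$. Write $X_t = X_0 + \sigma_t Z$ with $X_0\sim p_X$ and $Z\sim\gamma$.

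\textbf{Step 1: the gradient at $\lambda_t$.} By Lemma~\ref{lem:log-derivatives},
\[
2\ell'(\lambda_t|X_t) = -(d-\alpha)\sigma_t^2 + \E_{q_{\lambda_t}}[\|X-Y\|^2\mid Y=X_t].
\]
The posterior over $X$ given $Y = X_t$ at the true noise level is the genuine Bayesian posterior, so $\|X-X_t\|^2$ under it has the same law as under the truth, where it equals $\sigma_t^2\|Z\|^2$. I would bound the conditional mean by $\sigma_t^2\|Z\|^2$ plus the conditional fluctuation. For the fluctuation, cover $\cX$ by $e^{\msf k}$ balls of radius $r_0$. For $X$ in the ball around $x_j$,
\[
\|X-X_t\|^2 - \|x_j - X_t\|^2 = O(r_0\|X-X_t\|),
\]
which is negligible at the chosen $r_0 = \sigma_T^2/(\sigma_0\sqrt d)$. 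The spread between centers is then controlled by a union bound over the $e^{\msf k}$ centers of the Gaussian cross terms $\langle x_j - X_0, Z\rangle$, each sub-Gaussian with scale $\sigma_t\|x_j-X_0\|$. Combined with the fact that the posterior puts negligible mass on centers with $\|x_j - X_0\|^2 \gg \sigma_t^2\msf k$, this gives conditional variance $\lesssim \sigma_t^4(\msf k^2 + \msf k\, d)$ with high probability. Since $\|Z\|^2 = d \pm O(\sqrt d)$, this yields $|\ell'(\lambda_t|X_t)| \lesssim \sigma_t^2(\sqrt d + \msf k)$ with high probability. This is the mechanism behind Corollary~\ref{cor:helper_cor}, which I would invoke where possible.

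\textbf{Step 2: local strong convexity.} For $\lambda \in [\lambda_t/2, 2\lambda_t]$ I need
\[
\ell''(\lambda|X_t) \ge \frac{d-\alpha}{2\lambda^2} - \frac14 \var_{q_\lambda}(\|X-Y\|^2 \mid Y=X_t) \gtrsim \frac{d}{\lambda_t^2}.
\]
This requires the variance to be $\ll d\,\sigma_t^4$. The covering argument of Step 1 gives $\sigma_t^4(\msf k^2 + \msf k\, d)\cdot O(1)$. Reaching $o(d)$ needs more care: the $\msf k d$ cross term must be shown to be only $O(\msf k\, d)$ with a small constant, or controlled by tying $\|X - X_t\|^2$ to $\|x_j - X_0\|^2$ differences that are $O(\sigma_t^2\msf k)$ in size. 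Assumption \textbf{(A2)}, namely $d \ge C(\msf k + \log(\sigma_0/\sigma_T))$, is exactly what makes the bracket positive, with the $\log$ term paying for a uniform (net) argument over $\lambda\in[\sigma_0^{-2},\sigma_T^{-2}]$. I expect this step to be the main obstacle, because the variance must be bounded uniformly over a range of mismatched $\lambda$, where the posterior of $X$ is no longer the true one.

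\textbf{Step 3: conclude.} Away from the neighborhood, use monotonicity of $\ell'$ tails to show exponentially small posterior mass. For large $\lambda$ the term $\frac12\E\|X-Y\|^2 \approx \frac12\sigma_t^2 d$ dominates $\frac{d}{2\lambda}$, and the reverse holds for small $\lambda$. Inside the neighborhood, a log-concave (Brascamp--Lieb / Laplace) argument with strong convexity parameter $\kappa \asymp d/\lambda_t^2$ gives
\[
\E|\lambda - \lambda^\ast|^2 \lesssim \frac{1}{\kappa} = \frac{\lambda_t^2}{d}, \qquad |\lambda^\ast - \lambda_t| \le \frac{|\ell'(\lambda_t)|}{\kappa} \lesssim \lambda_t\bigl(d^{-1/2} + \msf k\, d^{-1}\bigr),
\]
where $\lambda^\ast$ is the posterior mode. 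Squaring and adding these yields $\E|\lambda-\lambda_t|^2 \lesssim \lambda_t^2(d^{-1} + \msf k^2 d^{-2})$, with high probability over $X_t$.
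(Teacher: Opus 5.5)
Your Step~2 is a genuine gap, and it cannot be closed under \textbf{(A2)} alone. Start with the sharpest variance estimate the covering argument gives, which is better than yours. Write $X_t=\bar x+V$. The union bound controls $\langle V,x-x'\rangle$ by $\sigma_t\sqrt{\msf k_\delta}\,\|x-x'\|$, not by $\|V\|\,\|x-x'\|$. So Proposition~\ref{prop:helper_prop} gives $\var_{q_\lambda}(\|X-X_t\|^2\mid X_t)\lesssim\langle V,C_\sigma(X_t)V\rangle+D_{\sigma,4}(\bar x,X_t)\lesssim\sigma_t^4\msf k_\delta^2$ for $\lambda\asymp\lambda_t$, with no $\msf k d$ term. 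Even so, positivity of $\ell''=\frac{d-\alpha}{2\lambda^2}-\frac14\var$ requires $d\gg\msf k^2$, which is strictly stronger than $d\gtrsim\msf k$. In the regime $d\gg\msf k^2$ the $\msf k^2d^{-2}$ term of the statement is dominated by $d^{-1}$. The statement keeps that term precisely because it is meant to cover $\msf k\ll d\lesssim\msf k^2$. There, nothing in \textbf{(A1)}--\textbf{(A2)} guarantees $\ell''(\cdot|X_t)>0$ near $\lambda_t$, as the paper notes right after Lemma~\ref{lem:log-derivatives}. So a Brascamp--Lieb/Laplace argument built on strong convexity does not prove the proposition as stated.

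Step~3 has a related problem: it appeals to ``monotonicity of $\ell'$,'' which is not available. $\ell'$ is the sum of the increasing function $-\frac{d-\alpha}{2\lambda}$ and the decreasing function $\frac12\E_{q_\lambda}[\|X-Y\|^2\mid Y=X_t]$. Controlling the latter at mismatched $\lambda$ is exactly what your Step~1 does not do, since it only treats $\lambda=\lambda_t$. Also, the $\log(\sigma_0/\sigma_T)$ part of \textbf{(A2)} does not pay for a net over $\lambda$.

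The missing idea is to use only first-order information, uniformly in $\lambda$. The good event in Proposition~\ref{prop:helper_prop}, namely $\bar x\in\cX_+$ and $V\in\cV$, does not depend on $\sigma$. The scale $r_0=\sigma_T^2/(\sigma_0\sqrt d)$ makes the within-ball errors negligible for every $\sigma\in[\sigma_T,\sigma_0]$. Hence \eqref{eq:normdiff} with $\sigma=\lambda^{-1/2}$ and $\tilde\sigma=\sigma_t$ holds simultaneously for all $\lambda$, giving $2\ell'(\lambda|X_t)=-\frac{d\pm O(\msf k_\delta)}{\lambda}+\frac{d\pm O(\msf E)}{\lambda_t}$ with $\msf E=\sqrt{d\log(1/\delta)}+\msf k_\delta$.

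Consequently, outside a window of width $a_0\asymp\lambda_t\msf E/d$ around $\lambda_t$, $\ell'$ has the correct sign, with $|\ell'|\gtrsim d|\lambda-\lambda_t|/\lambda_t^2$, and $\gtrsim d/\lambda_t$ farther out. Inside the window, $|\ell'|\lesssim\msf E/\lambda_t$. Integrating $\ell'$ from the window edge gives Gaussian tails at scale $\lambda_t/\sqrt d$ plus $e^{-\Omega(d)}$ far tails. The bound inside the window lower-bounds the posterior mass near $\lambda_t$ and so normalizes the density at the edge; this is Theorem~\ref{thm:noise_formal}. Integrating the tails then yields $\lambda_t^2\msf e_\delta^2$. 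The condition $d\gg\log(\sigma_0/\sigma_T)$ is used only to absorb the $\kappa^3e^{-\Omega(d)}$ contribution of the far tails over $[\lambda_{\min},\lambda_{\max}]$. No convexity enters, which is why $d\gg\msf k$ suffices and why the $\msf k^2/d^2$ term arises from the window width $a_0$ rather than from a mode shift.
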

\noindent {See~\eqref{eq:noise_formal} for a precise result.} Combining these ingredients yields the claimed bound on the noise level term.

\subsection{Discretization, noise, and step size schedules}\label{sec:theory_discretization}
\paragraph{A family of convenient {diffusion} schedules.}
Although our results apply to general choices of $(a_t)_{t\ge 0}$, for the sake of exposition we specialize to a particular convenient family in order to state our next results in a more interpretable form.
Namely, if we take $a_t = \ta \sigma_t^2$ for some $\ta \in (0,1)$, then the ODE~\eqref{eq:sig_ode_main} becomes particularly easy to solve: $\sigma_t = \sigma_0 e^{-(1-\ta) t}$. In fact, this is the choice made in \eqref{eq:bddm_sampler_intro} in which $\ta = \beta$. We henceforth fix this particular choice. 

Figure~\ref{fig:learning_gaussian_denoising} shows samples and evolution of noise level for a neural network blind denoisers trained on Gaussian data. Under an appropriate choice $a_t = \frac12 \sigma_t^2$ and $h=0.5$, we see that $\hat \sigma_k \simeq \sigma_t$ as predicted by Proposition~\ref{prop:opt_schedule} when having low intrinsic dimensionality. See Appendix~\ref{app:gaussian_gaussian_mixture} for details. 

\begin{figure}[t]
    \centering
   \includegraphics[width=.44\linewidth]{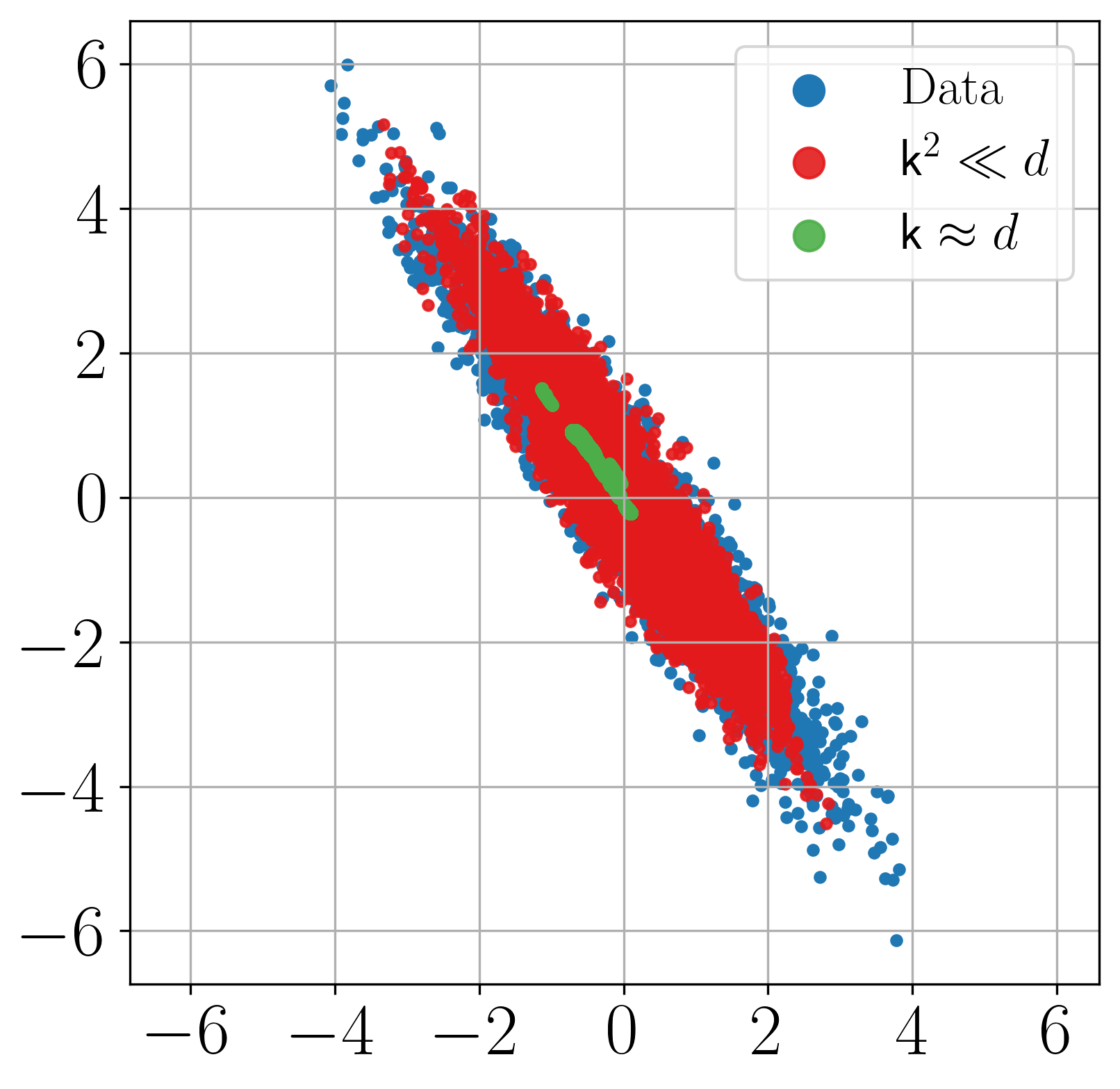}
\includegraphics[width=.55\linewidth]{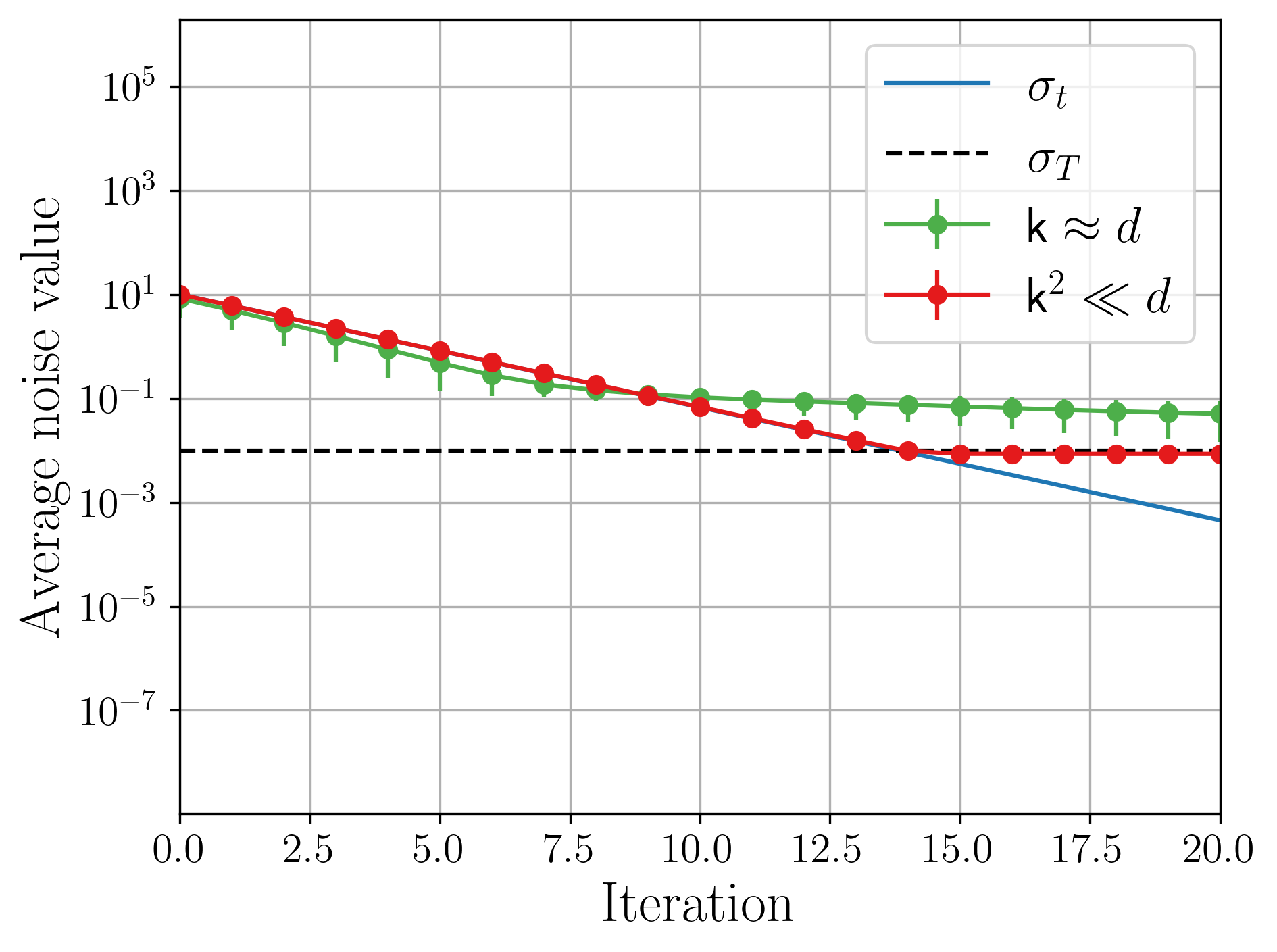}
    \caption{Sampling performance of BDDMs trained on Gaussian data with intrinsic dimension $\msf k =2$ and two different input dimensions $d \in \{2,100\}$.  Generated samples \textbf{(left)} and evolution of the estimated noise level, corresponding to  Proposition~\ref{prop:opt_schedule} \textbf{(right)}.}
\label{fig:learning_gaussian_denoising}
\end{figure}
\paragraph{Discretization error.}
If the SDE \eqref{eq:empirical_sde} is discretized, then in addition to the error terms present in Theorem~\ref{thm:thm_main}, we will also incur a discretization error.
In order to establish an iteration bound which also scales with the intrinsic dimension, we consider the exponential Euler integrator (see Appendix~\ref{app:exp_euler}).
With this choice, the discretization error takes the following form:
Writing $t_- \defeq \lfloor t/h \rfloor h$ where $h$ is the step size, thus $N = T/h$ is the total number of iterations,
\begin{align*}
    \textstyle{\msf{Disc}}(h) = \int_0^T a_t^{-1}\,\E\|f_{\sigma_{t_-}}^\star(X_{t_-}) - f_{\sigma_{t}}^\star(X_t)\|^2 \dd t\,.
\end{align*}
Here, $f^\star_\sigma$ denotes the Bayes denoiser when the noise level is known: $f^\star_\sigma(y) \defeq \E_{q_\sigma}[X\mid Y=y]$.\footnote{{Indeed ${\msf{Disc}}(h)$ arises from \eqref{eq:girsanov_33} if we also incorporate discretization error.}} We analyze the discretization error and establish the following bound.

\begin{theorem}\label{thm:disc_error}
    Under \textbf{(A1)}--\textbf{(A3)} and with the exponential Euler scheme, for $h\lesssim 1$,
    \begin{align*}
        \msf{Disc}(h)
        &\lesssim \bigl(C_{\ta, 1}\, \msf k^3 h^2 + C_{\ta, 2}\, \msf k h\bigr)\log\bigl( \frac{\sigma_0}{\sigma_T}\bigr)\,,
    \end{align*}
    where $C_{\ta,1} \defeq \frac{(\ta-1/2)^2}{\ta\,(1-\ta)}$ and $C_{\ta,2} \defeq \frac{1}{1-\ta}$.
\end{theorem}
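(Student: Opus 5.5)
Under the exponential Euler scheme, between grid points the process evolves as an Ornstein--Uhlenbeck process with the drift term $f^\star_{\sigma_{t_-}}(X_{t_-})$ frozen. Since the ideal process satisfies $\msf{Law}(X_t)=p_{\sigma_t}$ (Proposition~\ref{prop:opt_schedule}), the plan is to control the increment of the Bayes denoiser along the ideal trajectory, $D_t \defeq f^\star_{\sigma_t}(X_t)$, over a window of length at most $h$. The approach mirrors the intrinsic-dimension discretization analyses of diffusion models: apply It\^o's formula to $D_t$, split the increment into a drift part and a martingale part, bound each through conditional covariances of $X$ given $Y$, and control those covariances with the low-intrinsic-dimension estimates already used for Theorem~\ref{thm:thm_main} (Corollary~\ref{cor:helper_cor}).

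\textbf{Step 1 (It\^o decomposition).} Write $m_\sigma(y)=\E_{q_\sigma}[X\mid Y=y]$. Then $\nabla_y m_\sigma = \sigma^{-2}\Sigma_\sigma(y)$, where $\Sigma_\sigma(y)=\mathrm{Cov}(X\mid Y=y)$. The noise derivative is $\partial_\sigma m_\sigma = \sigma^{-3}\mathrm{Cov}(X,\|X-Y\|^2\mid Y)$, the same identity displayed in Section~\ref{sec:theory_bayesian}, and the generator involves $\Delta_y m_\sigma$, which is again a third-order conditional cumulant. With $\dot\sigma_t=-(1-\ta)\sigma_t$, $a_t=\ta\sigma_t^2$, and drift $m_{\sigma_t}(X_t)-X_t$, one finds that along the reverse process $D_t$ is a martingale up to a drift proportional to $(\ta-\tfrac12)$. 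This holds because the time-reversal identity makes $m_{\sigma_t}(X_t)$ an exact martingale for the probability-flow/DDPM coupling corresponding to $\ta=1/2$. So
\begin{align*}
\dd D_t = (\ta - \tfrac12)\,\sigma_t^{-1}\,G_t\,\dd t + \sqrt{2\ta}\,\sigma_t^{-1}\Sigma_{\sigma_t}(X_t)\,\dd B_t,
\end{align*}
where $G_t$ is built from $\mathrm{Cov}(X,\|X-Y\|^2\mid Y)/\sigma_t^2$ and the third cumulant. Verifying this cancellation cleanly is the main obstacle; I would first check it by direct computation for $p_X=\delta_0$ and for Gaussian $p_X$, where it is explicit, and then do the general calculation via the heat-equation identities for $p_\sigma$.

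\textbf{Step 2 (moment bounds).} Using Corollary~\ref{cor:helper_cor} under \textbf{(A2)}, I would show $\E\|G_t\|^2\lesssim \msf k^3\sigma_t^2$ and $\E\,\mathrm{tr}\,\Sigma_{\sigma_t}(X_t)^2\lesssim \msf k\,\sigma_t^4$. The latter uses that the posterior essentially lives on a covering of $\msf k$ ``effective dimensions'' at scale $\sigma$, and \textbf{(A1)} handles the tails.

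\textbf{Step 3 (assembly).} For $t\in[t_-,t_-+h]$, Cauchy--Schwarz on the drift and the It\^o isometry on the martingale give
\begin{align*}
\E\|D_t-D_{t_-}\|^2 \lesssim (\ta-\tfrac12)^2 h^2\msf k^3 + \ta h\,\msf k\,\sigma_t^2,
\end{align*}
using that $\sigma_s\asymp\sigma_t$ on the window when $h\lesssim1$. Dividing by $a_t=\ta\sigma_t^2$ and integrating over $[0,T]$ with $T=\log(\sigma_0/\sigma_T)/(1-\ta)$ produces exactly the constants $C_{\ta,1}$ and $C_{\ta,2}$ times $\log(\sigma_0/\sigma_T)$. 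Any leftover contribution from the freezing of the Ornstein--Uhlenbeck part is absorbed, since the exponential integrator handles the linear term exactly.
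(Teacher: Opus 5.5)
Your skeleton matches the paper's. Proposition~\ref{prop:ito_prop} gives a drift $(2\ta-1)\sigma_t^{-2}\bigl(W_{\sigma_t}-C_{\sigma_t}(f^\star_{\sigma_t}-X_t)\bigr)$ and martingale part $\sqrt{2\ta}\,\sigma_t^{-1}C_{\sigma_t}\,\rd B_t$. Your Doob-martingale explanation of the cancellation at $\ta=\tfrac12$ is also correct: there the ideal SDE is the exact time reversal of the heat flow, while the probability flow corresponds to $\ta=0$.

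The genuine gap is the second bound of Step~2, $\E\tr C_{\sigma}(X_\sigma)^2\lesssim \msf k\,\sigma^4$. The low-intrinsic-dimension estimates (Corollary~\ref{cor:helper_cor}) control only $\tr C_\sigma$. Pointwise in $\sigma$ they therefore give only $\E\|C_\sigma\|_{\rm F}^2\le \E(\tr C_\sigma)^2\lesssim \msf k^2\sigma^4$, which leads to $\msf k^2 h\log(\sigma_0/\sigma_T)/(1-\ta)$ instead of $C_{\ta,2}\,\msf k h\log(\sigma_0/\sigma_T)$. A bound linear in $\msf k$ at each fixed $\sigma$ would need spectral control of the posterior covariance that a covering-number assumption does not supply. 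For example, when the posterior is split between two clusters at distance $\asymp\sigma\sqrt{\msf k}$, $C_\sigma$ has an eigenvalue of order $\msf k\sigma^2$ in one direction. So your pointwise claim need not hold.

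The missing idea is that only the time integral is needed, and the MMSE identity of Lemma~\ref{lem:mmse} controls it. Since $\partial_\sigma\E\tr C_\sigma(X_\sigma)=2\sigma^{-3}\E\|C_\sigma(X_\sigma)\|_{\rm F}^2$ and $\dot\sigma_t=-(1-\ta)\sigma_t$, one has $\sigma_t^{-4}\E\|C_{\sigma_t}(X_t)\|_{\rm F}^2=-\frac{1}{2(1-\ta)}\,\sigma_t^{-2}\,\partial_t\E\tr C_{\sigma_t}(X_t)$. Integrating by parts against $\sigma_t^{-2}$ then gives $\int_0^T\sigma_t^{-4}\E\|C_{\sigma_t}(X_t)\|_{\rm F}^2\,\rd t\lesssim(\frac{1}{1-\ta}+T)\,\msf k$, using only $\E\tr C_{\sigma_t}(X_t)\lesssim\msf k\sigma_t^2$. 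This is how the paper gets the linear-in-$\msf k$ term.

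There is also a scaling slip that breaks your assembly as written. With $\E\|G_t\|^2\lesssim\msf k^3\sigma_t^2$, the drift must be $(\ta-\tfrac12)G_t$, not $(\ta-\tfrac12)\sigma_t^{-1}G_t$. The drift of $f^\star_{\sigma_t}(X_t)$ has units of length, and $\sigma_t^{-2}W_{\sigma_t}$ has size $\sigma_t\msf k^{3/2}$. Your Step~3 bound $(\ta-\tfrac12)^2h^2\msf k^3$ is therefore missing a factor $\sigma_t^2$. Divided by $a_t=\ta\sigma_t^2$, it integrates to order $\sigma_T^{-2}$, not $\log(\sigma_0/\sigma_T)$. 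The correct increment bound is $(\ta-\tfrac12)^2h^2\msf k^3\sigma_t^2$, which follows from $\E\|W_\sigma-C_\sigma(f^\star_\sigma-X_\sigma)\|^2\lesssim\sigma^6\msf k^3$ via Corollary~\ref{cor:helper_cor}. Only with this bound does the term $C_{\ta,1}\msf k^3h^2\log(\sigma_0/\sigma_T)$ come out.
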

\noindent Remarkably, the first error term vanishes \emph{exactly} when $\ta = 1/2$.
Therefore, {our analysis reveals a canonical choice of $a_t$ sequence, namely $a_t = \frac{1}{2}\sigma_0^2 e^{-t}$, which leads to smaller discretization error}.

Moreover, if $\ta = 1/2$, the error bound shows that {we can provably choose a constant step size $h$, scaling with the intrinsic dimension $\msf k$}.
This is in {stark contrast to theoretical guarantees for the variance-preserving DDPM algorithm, which requires carefully tuned exponentially decaying step sizes to compensate for the singularity of the score {for time $t \to T$}} \citep{benton2023nearly,conforti2025kl}.

\subsection{End-to-end sampling guarantees}\label{sec:perceptual}

At this point, we nearly have a complete {sampling} guarantee for BDDMs.
However, Theorem~\ref{thm:thm_main} only guarantees closeness to {$p_{\sigma_T} = \pdata * \cN(0,\sigma_T^2I)$}, not to $\pdata$.
One approach is to simply assume that {sampling from $p_{\sigma_T}$ is our goal all along for some small $\sigma_T$,} which is
sometimes adopted in the literature \citep{gatmiry2026high}.\looseness-1

More commonly, $p_{\sigma_T}$ is used as a technical device to sample from $\pdata$, which is known as \emph{early stopping}.
How small do we need to take $\sigma_T$ in order to ensure that $p_{\sigma_T}$ and $\pdata$ are close?
This is important, since both the noise estimation error and the discretization error depend on $\sigma_T$ (either explicitly or implicitly through the value of $T$). However, if we make the judicious choice $a_t = \tfrac12 \sigma_t^2$ which naturally arises from the discussion surrounding Theorem~\ref{thm:disc_error}, our analysis becomes considerably simpler, {and we can bound the distance between $p_X$ and $ p_{\sigma_T}$ through choices of $\sigma_T, \sigma_0$ and $h$.}
To state our result, we use the bounded Lipschitz metric, which metrizes weak convergence:
\begin{align*}
    {\msf D}_{{\rm BL}}(p,q) \defeq \sup\{\E_p[f] - \E_q[f] : f \in {\rm BLip}\}\,,
\end{align*}
where ${\rm BLip}$ is the space of $1$-bounded, Lipschitz functions.\footnote{Functions which are $1$-Lipschitz and with $-1 \leq f \leq 1$.}

\begin{corollary}\label{cor:sampling_cor}
Assume \textbf{(A1)}--\textbf{(A3)} and let $p_{\msf{alg}}$ be the output of BDDM with exponential Euler discretization with $a_t = \tfrac12\, \sigma_t^2$. For $\varepsilon \in (0,1)$, choose $\sigma_0 \asymp \msf m_2/\eps$, $\sigma_{T}\asymp  \epsilon/\sqrt{d}$, and $h\asymp \varepsilon^2/(\msf k\log(\msf m_2\sqrt d/\varepsilon^2))$.
Then, ${\msf D}_{\rm{BL}}(p_X, p_{\msf{alg}}) \lesssim \tepsBD + \epsilon$, provided that 
\begin{align*}
   d \gtrsim \frac{\msf k^3}{\varepsilon^2} \log(\msf m_2\sqrt d/\varepsilon^2) \,\,\text{and}\,\, N\asymp \frac{\msf k}{\epsilon^{2}} \log^2(\msf m_2\sqrt d/\varepsilon^2)\,.
\end{align*}
\end{corollary}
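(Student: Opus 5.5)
The plan is to split $\msf D_{\rm BL}(p_X,p_{\msf{alg}})$ by the triangle inequality into $\msf D_{\rm BL}(p_X,p_{\sigma_T})+\msf D_{\rm BL}(p_{\sigma_T},p_{\msf{alg}})$. The first term is elementary. Coupling $X\sim p_X$ with $X+\sigma_T Z$ and using that test functions are $1$-Lipschitz gives $\msf D_{\rm BL}(p_X,p_{\sigma_T})\le \sigma_T\,\E\|Z\|\le \sigma_T\sqrt d\asymp\varepsilon$. For the second term, $1$-bounded functions give $\msf D_{\rm BL}\le 2\,\mathrm{TV}$, and Pinsker gives $\mathrm{TV}\le\sqrt{\KL/2}$. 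So it suffices to show $\KL(p_{\sigma_T}\|p_{\msf{alg}})\lesssim \tepsBD^2+\varepsilon^2$, where $\tepsBD$ is the score-error term from \textbf{(A3)} specialized to this schedule.

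To bound the KL, I would run the Girsanov decomposition \eqref{eq:girsanov_33} with the discretized drift, as in the footnote to Theorem~\ref{thm:disc_error}. This produces four contributions: initialization, score error, noise-estimation error, and $\msf{Disc}(h)$. I would handle them in turn.

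\emph{Initialization.} Take $\sigma_0\asymp \msf m_2/\varepsilon$. Then the term $\msf m_2^2/\sigma_0^2$ from Theorem~\ref{thm:thm_main} is $\lesssim\varepsilon^2$.

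\emph{Score error.} This term is $\epsBD^2$, which I relabel $\tepsBD^2$ for $a_t=\frac12\sigma_t^2$.

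\emph{Noise estimation.} With $\ta=1/2$ we have $\sigma_t^2/a_t=2$, so $\int_0^T\sigma_t^2/a_t\,\dd t=2T$. Since $\sigma_t=\sigma_0e^{-t/2}$, the horizon is $T=2\log(\sigma_0/\sigma_T)\asymp\log(\msf m_2\sqrt d/\varepsilon^2)$. The third term of Theorem~\ref{thm:thm_main} is therefore $\lesssim(\msf k^3/d+\msf k^5/d^2)\log(\msf m_2\sqrt d/\varepsilon^2)$. The assumed lower bound $d\gtrsim \msf k^3\varepsilon^{-2}\log(\cdot)$ makes the $\msf k^3/d$ part $\lesssim\varepsilon^2$. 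Because $\varepsilon<1$, the same bound gives $d\gtrsim\msf k^{3}$, hence $\msf k^5/d^2\le (\msf k^3/d)(\msf k^2/d)\lesssim \msf k^3/d$, up to checking that $\msf k^2\lesssim d$ follows from the assumptions.

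\emph{Discretization.} By Theorem~\ref{thm:disc_error} with $\ta=1/2$, we have $C_{\ta,1}=0$ and $C_{\ta,2}=2$, so $\msf{Disc}(h)\lesssim \msf k h\log(\sigma_0/\sigma_T)$. With $h\asymp \varepsilon^2/(\msf k\log(\msf m_2\sqrt d/\varepsilon^2))$ this is $\lesssim\varepsilon^2$. The requirement $h\lesssim1$ holds since $\varepsilon<1$. The iteration count is $N=T/h\asymp \msf k\varepsilon^{-2}\log^2(\msf m_2\sqrt d/\varepsilon^2)$, as claimed.

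The main obstacle is combining Theorem~\ref{thm:thm_main} and Theorem~\ref{thm:disc_error} into a single Girsanov bound for the discretized algorithm. The discretized drift is $\widehat f_\theta(Y_{t_-})-Y_{t_-}$ under exponential Euler, and I would split it against the ideal drift $r^\star_{\sigma_t}(X_t)$ through the chain $\widehat f_\theta\to f^\star\to f^\star_{\sigma_{t_-}}$ at time $t_-$, then $f^\star_{\sigma_{t_-}}(X_{t_-})\to f^\star_{\sigma_t}(X_t)$. Using $\|a+b+c\|^2\le 3(\cdots)$ turns this into the three integrals above. This needs two checks. First, the score and noise-estimation errors must be evaluated at grid times $t_-$ rather than $t$; since $h\lesssim1$ and $a_t$ changes only by a constant factor over one step, the integrals change by at most a constant factor. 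Second, the exponential Euler integrator handles the linear $-Y$ part exactly, so only the denoiser drift contributes to the Girsanov integrand. Once this is in place, adding the four $O(\varepsilon^2)$ bounds together with $\tepsBD^2$ and then applying Pinsker and the triangle inequality gives $\msf D_{\rm BL}(p_X,p_{\msf{alg}})\lesssim\tepsBD+\varepsilon$.
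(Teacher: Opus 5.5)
Your route is the paper's: the triangle inequality through $p_{\sigma_T}$ with $\msf D_{\rm BL}(p_X,p_{\sigma_T})\le\sigma_T\sqrt d$, Pinsker, the combined Girsanov bound \eqref{eq:girsanov_full_bd} with the split $\widehat f_\theta(X_{t_-})\to f^\star(X_{t_-})\to f^\star_{\sigma_{t_-}}(X_{t_-})\to f^\star_{\sigma_t}(X_t)$, and Theorem~\ref{thm:disc_error} with $\ta=1/2$. Your bookkeeping is correct and more explicit than the paper's ``invoking the remaining conditions''. This covers $T=2\log(\sigma_0/\sigma_T)$, $C_{\ta,1}=0$, $N=T/h$, and $\msf k^5/d^2\lesssim\msf k^3/d$, which holds because $d\gtrsim\msf k^3\ge\msf k^2$.

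The one step that fails as justified is your treatment of the score error. In your chain that term is $\int_0^T a_t^{-1}\|\widehat f_\theta-f^\star\|^2_{L^2(p_{\sigma_{t_-}})}\dd t$. This is a Riemann sum of the learned-denoiser error at the grid levels $\sigma_{nh}$. The fact that $a_t$ changes by only a constant factor over a step does not let you compare it with $\epsBD^2$. Nothing constrains $\sigma\mapsto\|\widehat f_\theta-f^\star\|^2_{L^2(p_\sigma)}$ between grid points, so a network can be poor exactly at the $N$ grid levels while $\epsBD$ is small. This is why the corollary uses $\tepsBD^2\defeq\int_0^Ta_t^{-1}\|\widehat f_\theta-f^\star\|^2_{L^2(p_{\sigma_{t_-}})}\dd t$ rather than $\epsBD^2$ specialized to $\ta=1/2$. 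With that definition the term equals $\tepsBD^2$ and no comparison is needed.

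For the noise-estimation term your constant-factor argument does work, but the integrated statement of Theorem~\ref{thm:thm_main} alone would not suffice. You need the pointwise bound $\|r^\star_\sigma-r^\star\|^2_{L^2(p_\sigma)}\lesssim\sigma^2(\msf k^3/d+\msf k^5/d^2)$ for every $\sigma\in[\sigma_T,\sigma_0]$, which the proof of that theorem provides. Apply it at $\sigma_{t_-}$ and combine with $\sigma_{t_-}^2\le e^h\sigma_t^2$.

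Finally, the exponential Euler drift is $\widehat f_\theta(Y_{t_-})-Y_t$, not $\widehat f_\theta(Y_{t_-})-Y_{t_-}$. Your later remark that the linear part is integrated exactly is the correct version. That is what keeps an ambient-dimension term out of the Girsanov integrand.
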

\noindent In this result, the definition of the score error has to be slightly modified to $\tepsBD^2 \defeq \int_0^T a_t^{-1}\|\widehat f_\theta - f^\star\|_{L^2(p_{\sigma_{t_-}})}^2\dd t$, since the error only matters at discretization time steps. Figure~\ref{fig:samples_mixture_2Gaussians} illustrates convergence in Corollary~\ref{cor:sampling_cor} for a $p_X$ constructed as a mixture of two Gaussians.

\subsection{On the choice of noise prior}\label{sec:prior}

Finally, we revisit the assumption \textbf{(A3)} on the ``score estimation error''.
In standard DDPM theory, the bound on the score estimation error in $L^2$ which is needed for the discretization analysis can also be written as the \emph{excess risk} of the population score matching loss.
This leads to a remarkable harmony between statistical theory which can identify when the $L^2$ error is small, and the discretization analysis.
In the context of the convenient noise sequence defined above, we prove the analogous result for BDDMs.

\begin{theorem}\label{thm:score_error}
    Suppose that $[\sigma_T,\sigma_0] \subseteq \supp \prior$.
    It holds that
    \begin{align*}
        \epsBD^2
        &\le \bigl(\min_{\sigma_T \le \sigma \le \sigma_0} \ta\,(1-\ta)\,\sigma^3\,\prior(\sigma)\bigr)^{-1}\,\cE(\widehat f_\theta)\,,
    \end{align*}
    where $\cE(\cdot)$ denotes the population excess risk for~\eqref{eq:empirical_loss}.
\end{theorem}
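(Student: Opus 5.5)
The plan is to write the excess risk $\cE(\widehat f_\theta)$ as a weighted $L^2$ distance to $f^\star$, using the Pythagorean identity for the population loss, and then compare that weighting with the one in $\epsBD^2$ through a change of variables $t\mapsto\sigma_t$.

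\emph{Step 1: the excess risk as an $L^2$ norm.} Set $\cL(f) = \E_{\sigma\sim\prior}\E_{X\sim p_X,\,z\sim\gamma}\|X - f(X+\sigma z)\|^2$ and $Y = X+\sigma z$. The joint law of $(\sigma, X, Y)$ gives $Y$ the marginal $\bar p(y) = \int p_\sigma(y)\,\prior(\sigma)\dd\sigma$. By Proposition~\ref{prop:bd_minimizer}, $f^\star(y) = \E[X\mid Y=y]$ for this joint law, and this conditional mean already averages over $\sigma$ given $y$. The standard orthogonality of the residual $X - f^\star(Y)$ to functions of $Y$ then yields
\begin{align*}
\cE(f) = \cL(f)-\cL(f^\star) = \|f - f^\star\|^2_{L^2(\bar p)} = \int \|f-f^\star\|^2_{L^2(p_\sigma)}\,\prior(\sigma)\dd\sigma\,.
\end{align*}

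\emph{Step 2: rewrite $\epsBD^2$ as an integral over $\sigma$.} Take $a_t=\ta\sigma_t^2$ and $\sigma_t = \sigma_0 e^{-(1-\ta)t}$. The map $t\mapsto\sigma_t$ is a monotone bijection from $[0,T]$ onto $[\sigma_T,\sigma_0]$, and $\dd\sigma_t = -(1-\ta)\sigma_t\dd t$, so $\dd t = \dd\sigma/((1-\ta)\sigma)$. Therefore
\begin{align*}
\epsBD^2 = \int_0^T \frac{\|\widehat f_\theta-f^\star\|^2_{L^2(p_{\sigma_t})}}{\ta\sigma_t^2}\dd t = \int_{\sigma_T}^{\sigma_0} \frac{\|\widehat f_\theta-f^\star\|^2_{L^2(p_\sigma)}}{\ta(1-\ta)\sigma^3}\dd\sigma\,.
\end{align*}

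\emph{Step 3: compare the two weightings.} Multiply and divide the integrand by $\prior(\sigma)$, which is positive on $[\sigma_T,\sigma_0]$ by the support assumption. Bounding $1/(\ta(1-\ta)\sigma^3\prior(\sigma))$ by its supremum over $[\sigma_T,\sigma_0]$, which equals $\bigl(\min_{\sigma_T\le\sigma\le\sigma_0}\ta(1-\ta)\sigma^3\prior(\sigma)\bigr)^{-1}$, and then extending the remaining integral to all $\sigma$ gives
\begin{align*}
\epsBD^2 \le \Bigl(\min_{\sigma_T \le \sigma \le \sigma_0} \ta\,(1-\ta)\,\sigma^3\,\prior(\sigma)\Bigr)^{-1}\int \|\widehat f_\theta - f^\star\|^2_{L^2(p_\sigma)}\,\prior(\sigma)\dd\sigma\,.
\end{align*}
By Step~1 the last integral is $\cE(\widehat f_\theta)$, which is the claimed bound.

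The main point requiring care is Step~1. One must check that the population minimizer of Proposition~\ref{prop:bd_minimizer} is exactly $\E[X\mid Y]$ under the mixed joint law, so that the cross term vanishes after averaging over $\sigma$ as well as over $(X,z)$. This is Pythagoras in $L^2$ of the joint law of $(\sigma,X,Y)$, with $f$ ranging over functions of $Y$ alone, and the cross term is killed by the tower property. If $\prior$ vanishes somewhere inside $[\sigma_T,\sigma_0]$ the minimum is zero and the bound is vacuous, which is consistent with the support assumption in the statement. The remaining steps are a change of variables and a pointwise weight comparison.
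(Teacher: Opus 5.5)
Your proposal is correct and follows the paper's proof essentially step for step. The paper also uses the orthogonality identity \eqref{eq:excess_risk} to write $\cE(\widehat f_\theta)=\int\|\widehat f_\theta-f^\star\|_{L^2(p_\sigma)}^2\dd\prior(\sigma)$, changes variables $\omega=\sigma_t$ with $a_t=\ta\sigma_t^2$ and $\dot\sigma_t=-(1-\ta)\sigma_t$, and bounds the weight $1/(\ta(1-\ta)\omega^3\prior(\omega))$ by its maximum on $[\sigma_T,\sigma_0]$; the only slip is cosmetic, since $\dd t=-\dd\sigma/((1-\ta)\sigma)$, but reversing the limits absorbs that sign and your final display is right.
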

\noindent This result shows that if the neural network learns a good blind denoiser, in the sense of attaining a small excess risk, then the $\epsBD$ quantity in our error analysis is controlled.
Moreover, {our analysis singles out a particularly good choice of noise prior}. 
Namely, when $\prior(\sigma) \propto \sigma^{-3}$ over $[\sigma_T,\sigma_0]$, then $\epsBD^2$ and $\cE(\widehat f_\theta)$ are in fact equal up to a constant, so that the training objective and the error propagation along the dynamics are well-aligned.\footnote{Note that $\prior(\sigma)\!\propto\!\sigma^{-3}$ corresponds to a uniform prior over $\lambda$.\looseness-1} 
Although we state this for a particular family of noise schedules for concreteness, the same approach indeed identifies a well-aligned prior $\prior$ for each schedule $(a_t)_{t\ge 0}$.

\begin{algorithm}[t]
  \caption{Training a blind denoiser}
  \label{alg:train}
  \begin{algorithmic}
    \STATE {\bfseries Input:} Distribution $\prior$, neural network $f_\theta$
    \WHILE{{\texttt{not} \texttt{converged}}}
    \STATE Draw $x_1,\ldots, x_B \sim p_X$
    \STATE Draw $\sigma_1,\ldots,\sigma_B \sim \prior$
    \STATE Draw $z_1,\ldots,z_B \sim \cN(0,I)$
    \STATE Compute $\cL(\theta) = B^{-1}\textstyle\sum_{i=1}^B\|x_i - f_\theta(x_i + \sigma_i z_i)\|^2$
    \STATE Update $\theta \gets \theta - \eta \nabla_\theta \cL(\theta)$.
    \ENDWHILE
  \end{algorithmic}
\end{algorithm}
\begin{algorithm}[t]
  \caption{Sampling using a blind denoising model}
  \label{alg:inf}
  \begin{algorithmic}
    \STATE {\bfseries Input:} Trained neural network $\widehat f_\theta$, stepsize $h > 0$, diffusion coefficients $(a_t)_{t \in [0,T]}$, and $\sigma_{\max},\sigma_{\min} > 0$
    \STATE {\bfseries Initialize} $X_0 \sim \cN(\widehat{m}_X, \sigma_{\max}^2I)$, $k=0$
    \STATE \texttt{keepgoing} $\gets$ \texttt{True}
    \WHILE{{\texttt{keepgoing} == \texttt{True}}}
    \STATE Compute $r_{k} \gets \widehat f_\theta(X_{kh}) - X_{kh}$
    \STATE Compute $\hat \sigma^2_{k} \gets \|r_k\|^2/d$
    \IF{$\hat \sigma_k \leq \sigma_{\min}$}
    \STATE \texttt{keepgoing} $\gets$ \texttt{False}
    \ELSE
    \STATE Draw $\bar\xi \sim \cN(0, (\int_{kh}^{(k+1)h} 2a_t\dd t) I)$
    \STATE Update $X_{(k+1)h}\gets X_{kh} + h\,r_k + \bar\xi$
    \ENDIF
    \STATE {Update $k \leftarrow k+1$}
    \ENDWHILE
  \end{algorithmic}
\end{algorithm}

\section{Empirical results on photographic images}
\label{sec: empirical results}

Experiments on synthetic data shown in previous sections verified that  (1) the noise variance can be accurately estimated from a single noisy observation in high dimensions; (2) the reverse process of Algorithm~\ref{alg:inf} closely adheres to the theoretical implicit schedule of Proposition~\ref{prop:opt_schedule}. 
Do these results extend beyond the synthetic data to real-world models and signals? 
Additionally, can BDDMs offer a substantial gain in sampling performance compared to non-blind models, by eliminating the mismatch between true noise level and a proposed noise schedule? 
To investigate these questions, we consider deep neural networks trained on natural images. We trained a non-blind and blind model to approximate the optimal denoisers in \eqref{eq:tweedie} and \eqref{eq:mixture_score}, respectively. Importantly, the two models are exactly the same in all respects except for noise conditioning.

\textbf{Datasets.} Under the manifold hypothesis, natural images concentrate near a union of low-dimensional manifolds \citep{Brown+23Manifolds}. This implies that natural images have low intrinsic dimensionality, making them a suitable testbed for our theoretical results. We use two popular image datasets, CelebA \citep{liu2015faceattributes} and LSUN (bedroom class) \citep{yu2015lsun}. These datasets are complex enough to capture real-world structure while remaining simple enough to train smaller-size  networks without text conditioning. All images are downsampled to $80 \times 80$ resolution.\looseness-1

\textbf{Architecture and training.} UNet \citep{ronneberger2015u} is the most popular denoising and score estimation architecture. For both blind and non-blind models, we use the simplest UNet architecture with $13$ million parameters (small compared to many architectures which use hundreds of millions of parameters), and train all models from scratch. 

For the non-blind models, we choose the standard noise level embedding. 

Blind models are trained to minimize the empirical loss~\eqref{eq:empirical_loss}, while non-blind models are trained for the noise-conditioned counterpart. For more experimental details, see Appendix~\ref{app:images}.  

{Can the noise level be accurately estimated from a single noisy image by a trained neural network denoiser?}
If natural images are truly low dimensional, then we expect $\Pi(\cdot|x_{\sigma}) \simeq \delta_\sigma$. Additionally, the inductive biases of the neural network should allow for leveraging the concentration to get a precise estimate of true variance. If both of these conditions are satisfied, then the performance gap between blind and non-blind denoisers will be small. Figure~\ref{fig:psnrs} compares denoising error of the two models on two datasets in terms of their peak signal-to-noise ratio (PSNR) averaged across the data, 
$    {\rm PSNR}(x,\hat x) \defeq - 10 \log_{10}\| x -  \hat{x}\|^2\, $
where $\hat x$ is the output of a blind or non-blind denoiser.
Performance results in Figure~\ref{fig:psnrs} demonstrate that {blind denoisers are as good as the non-blind ones, implying that the blind model estimates the noise level from data almost perfectly}. 
\begin{figure}
    \centering
   \includegraphics[width=.3\linewidth]{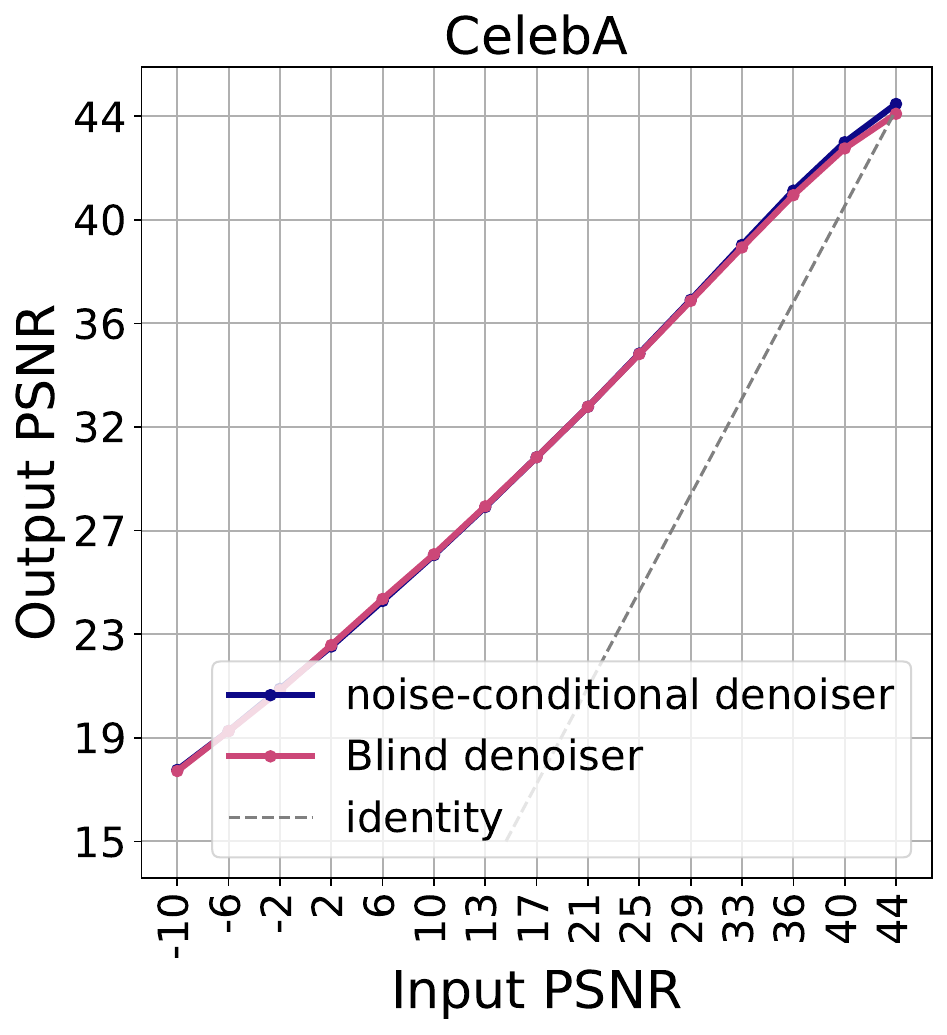}     
   \hfil
    \includegraphics[width=.3\linewidth]{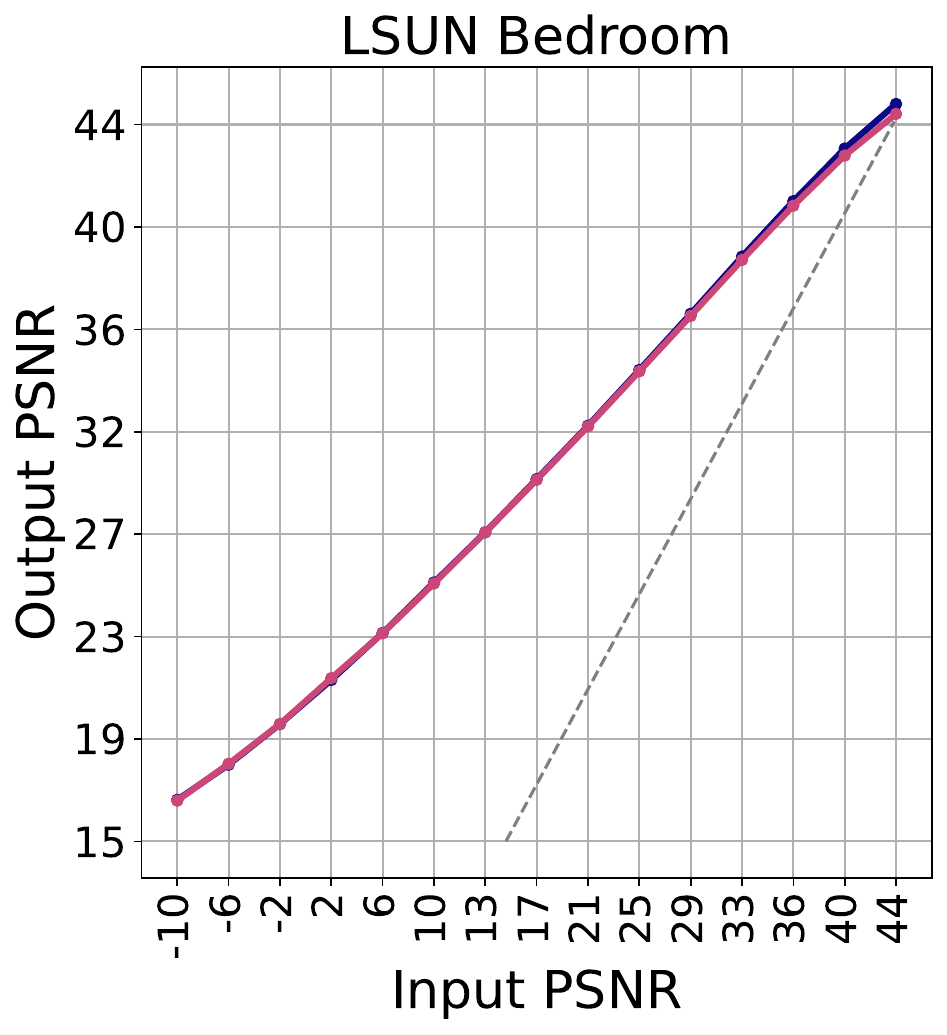}    
    \includegraphics[width=.65\linewidth]{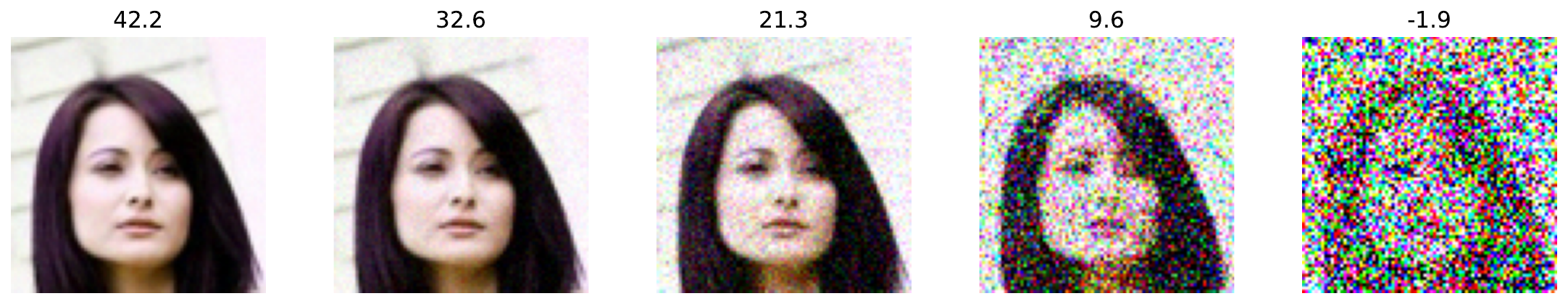}    
    \includegraphics[width=.65\linewidth]{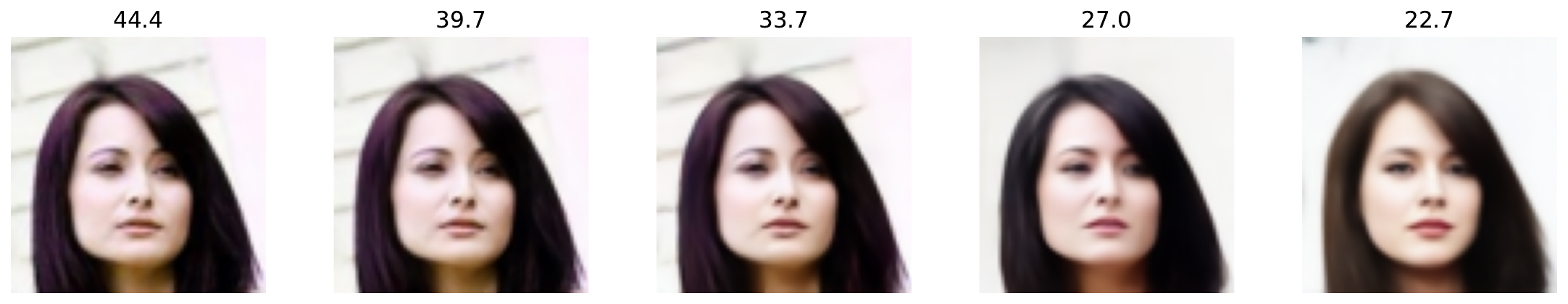}    
    \includegraphics[width=.65\linewidth]{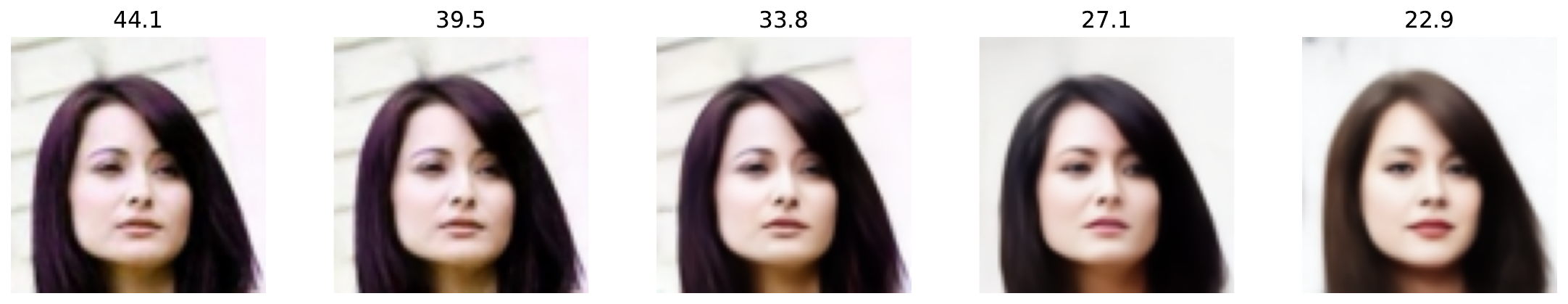}        
    \caption{\textbf{Top panel.} Comparison of denoising performance of blind  and non-blind denoisers. Performance is evaluated on the test sets of CelebA and Bedroom class of LSUN datasets, and is reported in terms of PSNR. For consistency, the noise level on the horizontal axis is also expressed as ${\rm PSNR}(x,x_\sigma)$. 
        \textbf{Bottom panel.} An example test image with corresponding PSNR values. \textbf{Top:} Noisy images.  \textbf{Middle:} Denoised image by a non-blind denoiser. \textbf{Bottom:} Denoised image by a blind denoiser. 
        }    
    \label{fig:psnrs}
\end{figure}

{Now we sample from the space of natural images via Algorithm~\ref{alg:inf}.} 
Hyperparameters are set to $\sigma_{\max} = 4, \sigma_{\min} = 0.05, h = 0.05, a_t = 0.3 \hat{\sigma}_t^2$, which leads to a total number of steps $N \approx 1000$. 
We compare to the non-blind denoisers via variance exploding (VE) of DDPM algorithm \citep{SonMenErm21DDIM}, with a $\log \sigma$ schedule with  $N = 1000$.

Figure \ref{fig:samples_celeba_1k} shows the output of both sampling algorithms (see Figures~\ref{fig:samples_celeba_17k} and \ref{fig:samples_celeba_100} for additional examples and Figure~\ref{fig:samples_bed} for LSUN samples). Surprisingly, {samples generated by BDDMs appear to have higher visual quality than samples generated by the non-blind DDMs}. This is while the two models have the same denoising performance as shown in Figure~\ref{fig:psnrs}. Nevertheless BDDM samples appear to be drawn from a distribution more similar to the underlying true density than DDPM samples.
\emph{This suggests that the differences are due to the sampling algorithms.}

\begin{figure}[]
    \centering
   \includegraphics[width=.8\linewidth]{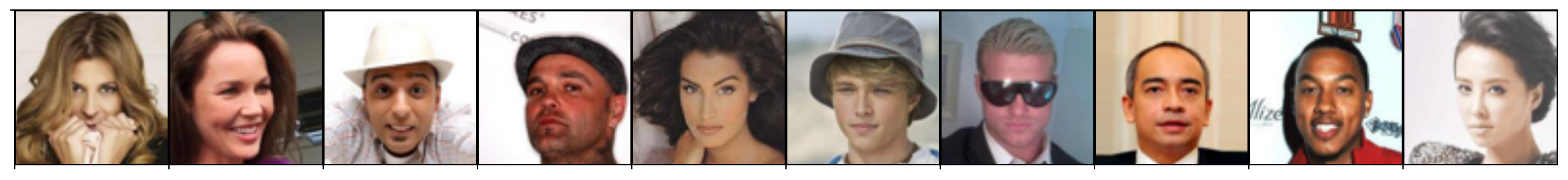}  
   \includegraphics[width=.8\linewidth]{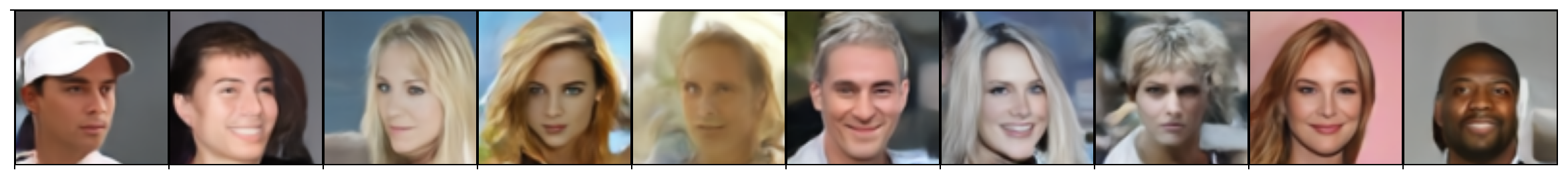}          
   \includegraphics[width=.8\linewidth]{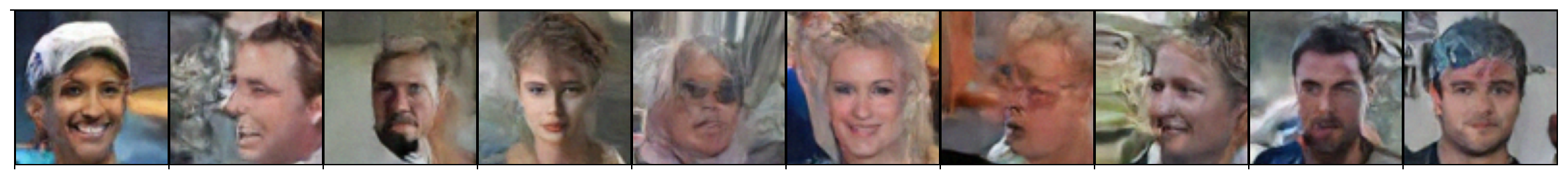}
    \caption{Comparison of samples from BDDM and VE-DDM.
\textbf{Top row:} Randomly selected subset of training images from the CelebA dataset. \textbf{Second row:} Samples generated by BDDM with $N \approx1000$. \textbf{Third row:} Samples generated by a non-blind DDM (VE-DDPM), with $N=1000$. 
    Samples in each column are initialized with the same random seed, and use matched injected noise. Seeds are random and {\em not curated for quality}. See Appendix \ref{app:more_samples} for lower and higher $N$, and for LSUN dataset. 
    }
    \label{fig:samples_celeba_1k}
\end{figure}
We hypothesize that using non-blind model in a reverse process with an explicit schedule incurs a mismatch error: { the noise level most consistent with the image (the ML estimate) diverges from the noise level dictated by the schedule of the algorithm.} Figure~\ref{fig:mismatch_error_celeba} illustrates the nature of the mismatch error in one-shot denoising in a non-blind model. Each plot shows the mean-squared error (MSE) between denoised and clean images, as a function of the second argument. MSE is lowest when $\sigma = \sigma^\star$. If $\sigma$ is too small, the denoised image is still noisy; too large, and it looks blurry. 
\begin{figure}
    \centering
   \includegraphics[width=.99\linewidth]{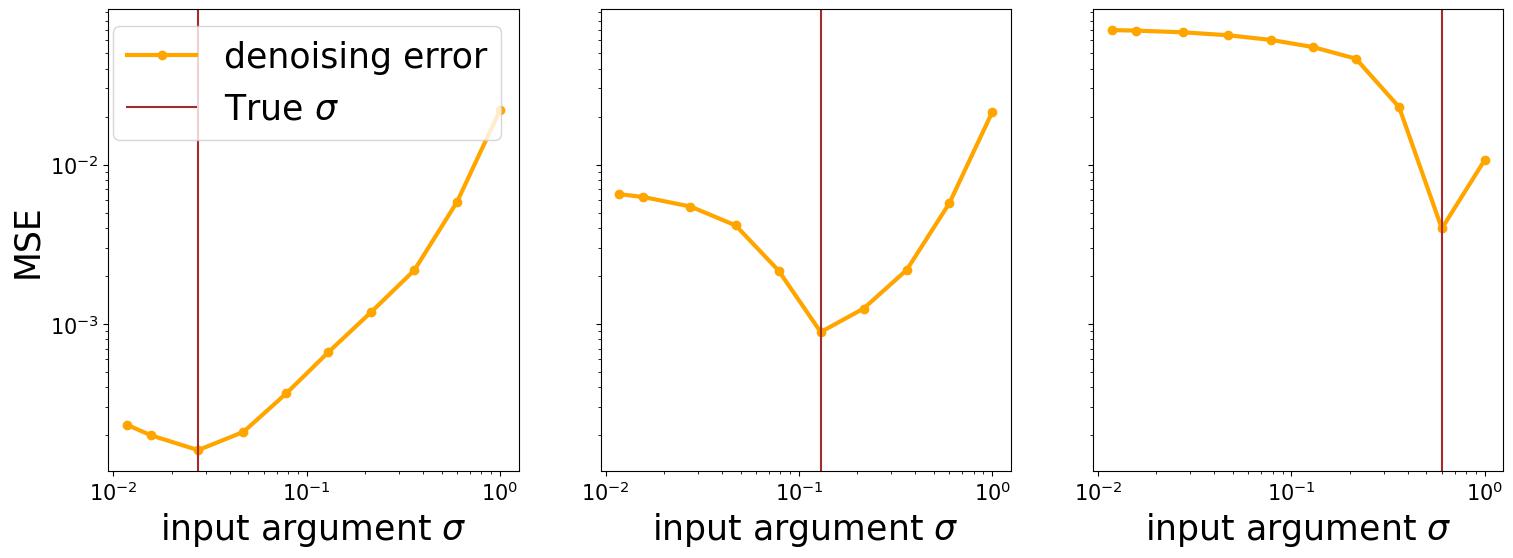}     
   \includegraphics[width=.26\linewidth]{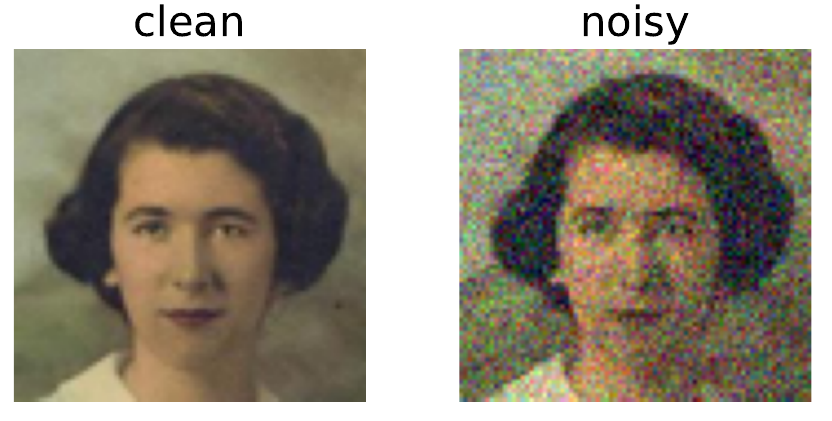}
   \hfill
   \includegraphics[width=.70\linewidth]{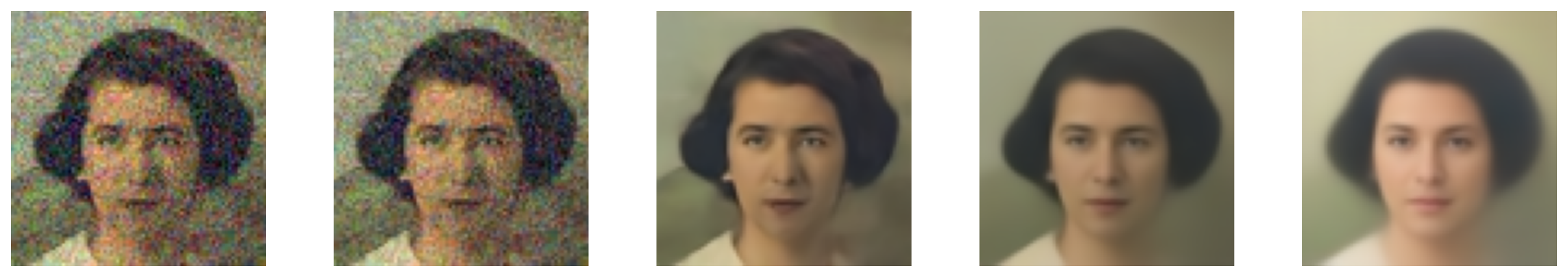}        
     \caption{Sensitivity of non-blind denoising performance to mismatches in true noise level and argument to the denoiser.  
        {\textbf{Top row:} Denoising error (averaged over 512 samples) as a function of argument $\sigma$ for three different true noise levels (from left to right, $\sigma^\star\in[0.025, 0.15, 0.6])$.
        \textbf{Bottom row:} Example clean image $x_0$, noisy image $x_{\sigma^\star} = x_0 + \sigma^\star z$, for $\sigma^\star= 0.1$,  and five images resulting from a non-blind denoiser $\widetilde{x} = \widetilde f_\theta(x_{\sigma^\star},\sigma)$ with $\sigma \in [0.01, 0.03, 0.10, 0.32, 1.0]$ (from left to right). 
        }}
    \label{fig:mismatch_error_celeba}
\end{figure}

To quantify the mismatch in the DDPM backward sampling process, we train a separate small neural network to estimate noise level. Figure~\ref{fig:noise-estimator-model} in the appendix shows that this model can nearly perfectly estimate the noise level. 
We now use this model to measure the true noise level of the intermediate samples generated by DDPM, and compare them to the $\sigma$ imposed by the schedule.
Figure~\ref{fig:mismatch_schedule} shows that scheduled noise levels systematically fall behind the true $\sigma$ value. This suggests that the low quality of the samples in Figures~\ref{fig:samples_celeba_1k}, \ref{fig:samples_celeba_17k}, and \ref{fig:samples_bed} result from a mismatch error in which $\sigma_t > \sigma^\star$ along the trajectory. Per the rightmost plot in Figure~\ref{fig:mismatch_schedule}, this same noise estimator model shows that {BDDMs precisely track the implicit schedule of the reverse process}.

\section{Conclusion}
This work presents a comprehensive mathematical analysis and justification of blind denoising diffusion models, closing a gap in the literature. We identify \emph{low intrinsic dimensionality} of the underlying data distribution as a critical component underlying the success of these models both theoretically and empirically. Moreover, we have shown that BDDMs can offer improved sample diversity and quality by avoiding errors due to mismatch in noise schedules. The use of BDDMs merits further investigation at larger scales, as well as in application to other downstream tasks such as fine-tuning and inverse problems.

\section*{Acknowledgments}
ZK acknowledges the computing facilities of the Flatiron Institute. AAP acknowledges the Yale Institute for the Foundations of Data Science for financial support.

\bibliography{example_paper}

\newpage
\appendix
\section{Proofs for Sections~\ref{sec:theory_optimal} and \ref{sec:theory_implicit_noise}}\label{app:proofs_3132}
\subsection{Proof of Proposition~\ref{prop:bd_minimizer}}\label{app:bd_minimizer_proof}
The population counterpart to~\eqref{eq:empirical_loss} is
\begin{align*}
    f^\star = \argmin_{f : \R^d\to\R^d} \E\| X - f( X + \sigma z)\|^2\,,
\end{align*}
where the expectation is under ${ X\sim \pdata}$, $\sigma \sim \prior$, and $z\sim\cN(0, I)$.
Let $y \defeq X +\sigma z$.
The optimal function $f$ of $y$ to predict $X$ is the conditional expectation $\E[X|y]$, due to orthogonality:
\begin{align}\label{eq:orthogonality}
    \E\|X - f(y)\|^2
    &= \E\|X - \E[X| y]\|^2 + \E\|\E[X | y] - f(y)\|^2\,.
\end{align}
On the other hand, by Bayes, we can express
\begin{align*}
    \E[X| y]
    &= \int \E[X | \sigma, y] \dd\post(\sigma| y)\,.
\end{align*}
Applying Tweedie's identity~\eqref{eq:tweedie},
\begin{align*}
    \E[X| y]
    &= \int \bigl(y+\sigma^2\,\nabla \log p_\sigma(y)\bigr) \dd\post(\sigma| y)\,.
\end{align*}
We remark that by~\eqref{eq:orthogonality}, for any estimator $\widehat f$, the excess risk is
\begin{align}\label{eq:excess_risk}
\begin{split}
    \cE(\widehat f) &\defeq \E\|X - \widehat f(y)\|^2 - \E\|X - f^\star(y)\|^2\\
    &= \E\|\widehat f(y) - f^\star(y)\|^2 = \int \|\widehat f-f^\star\|_{L^2(p_\sigma)}^2\dd\prior(\sigma)\,.
\end{split}
\end{align}
\subsection{Proof of Proposition~\ref{prop:opt_schedule}}\label{app:opt_schedule}
Let $\rho_t \defeq {\msf{Law}}(X_t)$ and $\tilde\rho_t \defeq p_{\sigma_t} \defeq p_X * \gamma_{\sigma_t^2}$, where we abbreviate $\gamma_{\sigma^2} \defeq \cN(0,\sigma^2 I)$ for arbitrary $\sigma > 0$. We first compute directly $\partial_t\tilde\rho_t$ from this definition using the fact that the Gaussian density is the solution to the heat equation:
\begin{align}
    \partial_t \tilde\rho_t(y) &= \partial_t \int \gamma_{\sigma_t^2}(y)\,p_X(y-x)\dd x\nonumber \\
    &=\int (\partial_t \gamma_{\sigma_t^2}(y))\,p_X(y-x)\dd x\nonumber\\
    &= \int (\partial_t \sigma_t^2)\, (\partial_s\gamma_{s}|_{s = \sigma_t^2}(y))\,p_X(y-x)\dd x \nonumber \\[0.25em]
    &= \frac{1}{2}\,(\partial_t \sigma_t^2) \int\Delta\gamma_{\sigma_t^2}(y)\,p_X(y-x)\dd x\nonumber \\[0.25em]
    &= \frac{1}{2}\,(\partial_t\sigma_t^2)\, \Delta_y \int \gamma_{\sigma_t^2}(y)\,p_X(y-x)\dd x\nonumber \\[0.25em]
    &= \frac{1}{2}\,(\partial_t\sigma_t^2)\,\Delta \tilde\rho_t(y)\,.\label{eq:tilde_rho_eq}
\end{align}
On the other hand, under the ideal dynamics \eqref{eq:ideal_sde}, the Fokker--Planck equation reads
\begin{align}
    \partial_t \rho_t
    &= -\nabla\cdot(\rho_t \sigma_t^2\nabla \log p_{\sigma_t}) + a_t\, \Delta \rho_t \nonumber\\
    &= -\nabla\cdot(\rho_t \sigma_t^2\nabla \log \tilde\rho_t) + a_t\, \Delta \rho_t\,.\label{eq:rho_eq}
\end{align}
Since $\Delta\tilde\rho_t = \nabla \cdot(\tilde\rho_t\nabla \log \tilde\rho_t)$,~\eqref{eq:tilde_rho_eq} shows that $(\tilde\rho_t)_{t\in [0,T]}$ solves the equation~\eqref{eq:rho_eq}, provided that
\begin{align}\label{eq:sigmat_ode}
    \frac{1}{2}\,\partial_t (\sigma_t^2) = -\sigma_t^2+a_t\,.
\end{align}
By uniqueness of solutions to the Fokker--Planck equation, we see that with this choice of $(\sigma_t)_{t\in [0,T]}$ and for $\rho_0 = \tilde\rho_0$, we have $\rho_t = \tilde\rho_t$ for all $t\in [0,T]$.
Solving the ODE~\eqref{eq:sigmat_ode} yields the claim.

\subsection{Proof of Lemma~\ref{lem:decr_noise}}

Recall that $\frac{1}{2}\,\partial_t(\sigma_t^2) = -\sigma_t^2 + a_t$, so we want to show $a_t \le \sigma_t^2$ for all $t\ge 0$. Note that if $a_t \le a_0 \le \sigma_0^2$ for all $t\ge 0$, then
\begin{align*}
    \sigma_t^2
    &= \sigma_0^2\, e^{-2t} + 2\int_0^t a_s\, e^{-2(t-s)}\dd s
    \ge \sigma_0^2\, e^{-2t} + a_t\,(1-e^{-2t})\,.
\end{align*}
This is lower bounded by $a_t$ provided $a_t \le \sigma_0^2$.

\subsection{Proof of Lemma~\ref{lem:noise_to_zero}}
The first term in Proposition~\ref{prop:opt_schedule} is obviously decaying to zero, so we must show that $2\int_0^t a_s e^{-2(t-s)}\dd s \to 0$.
Since $a_t \to 0$, for any $\varepsilon > 0$ there exists $t_0$ such that $a_t \le \varepsilon$ for all $t\ge t_0$.
Then, for $t\ge t_0$,
\begin{align*}
    \int_0^t 2a_s e^{-2(t-s)}\dd s
    &\le \int_{0}^{t_0}(\cdots) + \int_{t_0}^t(\cdots) \\
    &\le a_0 \,(e^{-2(t-t_0)} - e^{-2t}) + \varepsilon\,(1-e^{-2(t-t_0)})\\
    &\le a_0 \,(e^{-2(t-t_0)} - e^{-2t}) + \varepsilon\,.
\end{align*}
We can choose $t$ sufficiently large to make the first term at most $\varepsilon$ as well.

\section{Proofs for Sections~\ref{sec:main_result} and \ref{sec:theory_bayesian}}\label{app:proofs_33}

We use the following notation throughout the appendix. Recalling that $p_\sigma \defeq p * \cN(0,\sigma^2I)$, we write $q_\sigma$ for the joint distribution under which $X\sim \pdata$, $Y\sim \cN(X,\sigma^2 I)$. We also denote
\begin{align}
    f^\star_\sigma(y) &\defeq \E_{q_\sigma}[X\mid Y=y]\,,\label{eq:deff} \\
    C_\sigma(y) &\defeq \E_{q_\sigma}[(X - f^\star_\sigma(y))^{\otimes 2} \mid Y=y] = {\rm Cov}_{q_\sigma}(X\mid Y=y)\,, \label{eq:defC} \\
    W_\sigma(y) &\defeq {\rm Cov}_{q_\sigma}(X,\|X-y\|^2\mid Y=y)\,. \label{eq:defW}
\end{align}
Also, since we repeatedly encounter the quantity $\msf k + \log(1/\delta)$ for some $\delta > 0$, we abbreviate this by $\msf k_\delta$.

\subsection{Proof of Lemma~\ref{lem:log-derivatives}}\label{proof:log-derivatives}
The change-of-variables formula is straightforward as $|\frac{\dd \sigma}{\dd \lambda}| = \frac{1}{2}\,\lambda^{-3/2}$. For the next part, we write
\begin{align*}
    \ell(\lambda|y) = -\log \ppost(\lambda|y) = -\frac{d-\alpha}{2}\log\lambda - \log\E_{X\sim p_X}\bigl[\exp\bigl(-\frac{\lambda}{2}\,\|X-y\|^2\bigr)\bigr] + c\,,
\end{align*}
where $c > 0$ is an absolute constant. We compute via chain rule
\begin{align*}
    \ell'(\lambda|y) &= 
    -\frac{d-\alpha}{2\lambda} + \frac{1}{2}\,\E_{q_\lambda}[\|X-Y\|^2\mid Y=y]\,,
\end{align*}
where $q_{\lambda}(x|y) \propto \exp\bigl(-\frac{\lambda}{2}\,\|x-y\|^2)\,p_X(x)$. Using the following fact (which can be verified via chain rule)
\begin{align*}
    \partial_\lambda \E_{q_\lambda}[\|X-y\|^2\mid Y=y] = -\frac{1}{2}\var(\|X-y\|^2\mid Y=y)\,,
\end{align*}
the second derivative is computed similarly, resulting in
\begin{align*}
\ell''(\lambda|y) &= \frac{d-\alpha}{2\lambda^2} - \frac{1}{4}\,{\rm Var}_{q_\lambda}[\|X-Y\|^2\mid Y=y]\,.
\end{align*}

\subsection{Proof of Proposition~\ref{prop:nut_informal}}

In this section, we use the notation
\begin{align*}
    \msf e_\delta \defeq \sqrt{\frac{\log(1/\delta)}{d}} + \frac{\msf k_\delta}{d}\,.
\end{align*}
We will show that the noise posterior concentrates up to a relative error of $\msf e_\delta$.

\begin{theorem}\label{thm:noise_formal}
    Let $\delta \in (0,1)$ and assume that $d \gg \msf k_\delta$.
    Then, the following holds with probability at least $1-\delta$ over $X_t \sim p_{\sigma_t}$.
    For all $a \gg \lambda_t \msf e_\delta$,
    \begin{align*}
        \ppost(\abs{\lambda-\lambda_t}\ge a\mid X_t)
        &\lesssim \frac{\msf e_\delta}{\sqrt d} \exp\bigl(-\Omega\bigl(\frac{da^2}{\lambda_t^2}\bigr)\bigr) + \frac{\lambda_{\max}}{\lambda_t \msf e_\delta} \exp(-\Omega(d))\,.
    \end{align*}
\end{theorem}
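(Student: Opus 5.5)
We work in the variable $\lambda=\sigma^{-2}$ with density $\ppost(\lambda|X_t)\propto e^{-\ell(\lambda)}$, $\ell(\lambda)=-\frac{d-\alpha}{2}\log\lambda-\log M(\lambda)$, where $M(\lambda)\defeq\E_{X\sim\pdata}\exp(-\frac\lambda2\|X-X_t\|^2)$. Write $X_t=X_0+\sigma_t Z$ with $X_0\sim\pdata$ and $Z\sim\cN(0,I)$ independent. The strategy is to compare $\ell$ with the tractable proxy $\ell_0(\lambda)\defeq-\frac{d-\alpha}{2}\log\lambda+\frac\lambda2\|X_t-X_0\|^2$. Its Gamma-type posterior is centred at $\lambda_t$ with relative width $d^{-1/2}$, because $\|X_t-X_0\|^2=\sigma_t^2\|Z\|^2=\sigma_t^2 d(1+O(\sqrt{\log(1/\delta)/d}))$ with probability $1-\delta/3$. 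This is the source of the $\sqrt{\log(1/\delta)/d}$ part of $\msf e_\delta$.

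\textbf{Step 1 (covering the mixture).} Cover $\cX$ by $N=N_\cX(r_0)=e^{\msf k-1}$ balls $B_j$ of radius $r_0$ with centres $c_j$, and note $M(\lambda)=\sum_j \pdata(B_j)\,\E[e^{-\lambda\|X-X_t\|^2/2}\mid X\in B_j]$. Within a ball, $\|X-X_t\|^2$ differs from $\|c_j-X_t\|^2$ by at most $2r_0\|c_j-X_t\|+r_0^2$. On the relevant range $\lambda\le\lambda_{\max}=\sigma_T^{-2}$ and $\|X_t-c_j\|\lesssim \sigma_0\sqrt d$, so $\lambda r_0\|c_j-X_t\|\lesssim 1$; this is what fixes the choice $r_0=\sigma_T^2/(\sigma_0\sqrt d)$. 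Hence, up to an $O(1)$ multiplicative factor uniform in $\lambda$, $M(\lambda)\asymp\sum_j w_j e^{-\lambda D_j/2}$ with $D_j=\|c_j-X_t\|^2$.

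\textbf{Step 2 (geometry of the $D_j$).} Condition on $X_0$, whose cell is $j_0$. For each fixed $j$,
\[
D_j-\|X_0-X_t\|^2 = \|c_j-X_0\|^2 + 2\sigma_t\langle c_j-X_0,Z\rangle + O(r_0\sigma_t\sqrt d).
\]
A union bound over the $N$ centres gives, with probability $1-\delta/3$, $|\langle c_j-X_0,Z\rangle|\lesssim\|c_j-X_0\|\sqrt{\msf k_\delta}$ for all $j$. Therefore
\[
D_j\ge\|X_0-X_t\|^2+\|c_j-X_0\|^2-C\sigma_t\|c_j-X_0\|\sqrt{\msf k_\delta}\ge \|X_0-X_t\|^2-C'\sigma_t^2\msf k_\delta .
\]
So every competing cell looks at most $O(\sigma_t^2\msf k_\delta)$ closer than the truth. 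A Markov bound on $\pdata(B_{j_0})$ (the cell containing a $\pdata$-draw has mass $\gtrsim\delta/N$ with probability $1-\delta/3$) gives $-\log w_{j_0}\lesssim\msf k_\delta$. Combining these, for all $\lambda$,
\[
-\log M(\lambda)=\frac\lambda2\|X_t-X_0\|^2+E(\lambda),
\]
where $E(\lambda)\ge -C\lambda\sigma_t^2\msf k_\delta$ and $E(\lambda)\le C\msf k_\delta+O(1)$ from the single-cell term.

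\textbf{Step 3 (tail bound).} Bound
\[
\ppost(|\lambda-\lambda_t|\ge a)\le\frac{\int_{|\lambda-\lambda_t|\ge a}e^{-\ell}}{\int_{|\lambda-\lambda_t|\le\lambda_t/\sqrt d}e^{-\ell}} .
\]
In the denominator, use the upper bound on $E$ to get $\ell\le\ell_0+C\msf k_\delta$. In the numerator, use the lower bound on $E$ to get
\[
\ell\ge\ell_0-C\lambda\sigma_t^2\msf k_\delta=\ell_0-C(\lambda/\lambda_t)\msf k_\delta .
\]
Since $\ell_0$ has curvature $\asymp d/\lambda_t^2$ near $\lambda_t$, its minimiser satisfies $|\lambda^*-\lambda_t|\lesssim\lambda_t\msf e_\delta$. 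The linear perturbation $C(\lambda/\lambda_t)\msf k_\delta$ shifts the effective mode by only $O(\lambda_t\msf k_\delta/d)$. For $a\gg\lambda_t\msf e_\delta$ the quadratic gain $d(a/\lambda_t)^2$ dominates both the shift and the $O(\msf k_\delta)$ slack (this is where $d\gg\msf k_\delta$ enters), which produces the Gaussian tail $\exp(-\Omega(da^2/\lambda_t^2))$; the $\msf e_\delta/\sqrt d$ prefactor comes from the ratio of interval lengths after tracking normalisations. Far from $\lambda_t$, i.e.\ $\lambda\ge2\lambda_t$ or $\lambda\le\lambda_t/2$, $\ell_0$ exceeds its minimum by $\Omega(d)$. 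Since there is no further concentration, integrate this crude bound over $[\lambda_{\min},\lambda_{\max}]$ to obtain the second term $\frac{\lambda_{\max}}{\lambda_t\msf e_\delta}e^{-\Omega(d)}$.

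\textbf{Main obstacle.} The main difficulty is Step 2: making the lower bound on $D_j$ uniform over all cells, including cells far from $X_0$ and at all $\lambda$ up to $\lambda_{\max}$. The argument needs the linear-in-$\lambda$ error $\lambda\sigma_t^2\msf k_\delta$ to stay dominated by the quadratic confinement of $\ell_0$, and needs the discretization error $\lambda r_0\sqrt{D_j}$ to remain $O(1)$ even for large $D_j$. I would handle the large-$D_j$ cells by noting that $\|c_j-X_0\|^2$ then dominates the cross term, so those cells only help.
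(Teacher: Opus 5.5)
\paragraph{Verdict.} There is a genuine gap, and it is in Step~3.

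\paragraph{What is correct.} Steps 1--2 do give a valid zeroth-order sandwich
$-C\lambda\sigma_t^2\msf k_\delta - O(1)\le E(\lambda)\le C\msf k_\delta + O(1)$ for $\lambda\lesssim\lambda_{\max}$. The far-cell worry disappears if you approximate only the cross term $\langle X-X_0,Z\rangle$ by cell centres. That error is $r_0\|Z\|$, independent of the cell, which is what your union bound (and the paper's set $\mathcal{V}$) effectively does.

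\paragraph{Why Step 3 fails.} Inserting the two sides of the sandwich into your ratio leaves a multiplicative factor $e^{2C\msf k_\delta}$. At the claimed scale the Gaussian gain does not beat it. For $a = K\lambda_t\msf e_\delta$,
\[
da^2/\lambda_t^2\asymp K^2\bigl(\log(1/\delta) + \msf k_\delta^2/d\bigr),
\]
and since $d\gg\msf k_\delta$, the term $\msf k_\delta^2/d$ is much smaller than $\msf k_\delta$. For example, if $\log(1/\delta)=O(1)$ and $\msf k^2\lesssim d$, the gain is $e^{-\Theta(K^2)}$ while the slack is $e^{\Theta(\msf k)}$. So your claim that the quadratic gain dominates the $O(\msf k_\delta)$ slack ``because $d\gg\msf k_\delta$'' has it backwards.

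Your ratio only becomes small once $a\gtrsim\lambda_t\sqrt{\msf k_\delta/d}$. That gives relative accuracy $\sqrt{\msf k/d}$ and a mean-square bound $\lambda_t^2\msf k/d$, instead of $\lambda_t^2(d^{-1}+\msf k^2 d^{-2})$. This is not bookkeeping: a band of width $O(\msf k_\delta)$ for $\ell-\ell_0$ is, by itself, compatible with $\ell$ oscillating by $\Theta(\msf k_\delta)$ on the scale $\lambda_t\msf e_\delta$.

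\paragraph{What the paper does instead.} The missing ingredient is control of the $\lambda$-derivative rather than of $\ell$ itself. The paper writes
\[
2\ell'(\lambda|X_t) = -(d-\alpha)/\lambda + D_{\lambda^{-1/2},2}(X_t,X_t)
\]
and bounds the second term uniformly in $\lambda$ via \eqref{eq:normdiff}. This gives $2\ell' = -(d\pm O(\msf k_\delta))/\lambda + (d\pm O(\msf E))/\lambda_t$, with $\msf E\defeq\sqrt{d\log(1/\delta)}+\msf k_\delta$. It then integrates $\ell'$ outward from $\lambda_t+a_0$, where $a_0\asymp\lambda_t\msf E/d$. It normalises using $|\ell'|\lesssim\msf E/\lambda_t$ on $[\lambda_t,\lambda_t+a_0]$. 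The only loss in the exponent is then $O(\msf E^2/d)$, which $da^2/\lambda_t^2$ does dominate when $a\gg\lambda_t\msf e_\delta$.

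\paragraph{How to repair your route.} You can get the same derivative bound from your own sandwich with one extra observation. Since $\log M$ is a log-Laplace transform, $E$ is concave with $E(0)=0$; indeed $E''(\lambda) = -\tfrac14\var_{q_\lambda}(\|X-X_t\|^2)$. Your sandwich remains valid up to $2\lambda_{\max}$, because the discretization errors stay $O(1)$ there. The chord inequalities $E'(\lambda)\le E(\lambda)/\lambda$ and $E'(\lambda)\ge (E(2\lambda)-E(\lambda))/\lambda$ then give $|E'(\lambda)|\lesssim\msf k_\delta(\lambda^{-1}+\lambda_t^{-1})$. After that, run Step 3 on $\ell'=\ell_0'+E'$, integrating outward as the paper does, rather than on $\ell$.
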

\begin{proof}
We apply Proposition~\ref{prop:helper_prop}, noting that
\begin{align*}
    \E_{q_\lambda}[\|X-Y\|^2\mid Y=y]
    &= D_{\lambda^{-1/2},2}(y,y)\,.
\end{align*}
Hence, with probability at least $1-\delta$,
\begin{align*}
    2\ell'(\lambda| X_t)
    &= -\frac{d-\alpha}{\lambda} + D_{\lambda^{-1/2},2}(X_t,X_t) \\
    &= -\frac{d \pm O(\msf k_\delta)}{\lambda} + \frac{d \pm O(\sqrt{d\log(1/\delta)} + \msf k_\delta))}{\lambda_t}\,.
\end{align*}
Define the error term
\begin{align*}
    \msf E \defeq \sqrt{d\log(1/\delta)} + \msf k_\delta\,.
\end{align*}
Note that for $\lambda > \lambda_t$,
\begin{align*}
    2\ell'(\lambda|X_t)
    &\ge \frac{d}{\lambda_t}\,\Bigl[1 - \frac{\lambda_t}{\lambda} - \frac{O(\msf E)}{d} \Bigr]
    \gtrsim \begin{cases}
        d\,(\lambda-\lambda_t)/\lambda_t^2\,, & C_0\lambda_t \msf E/d \le \lambda-\lambda_t \le \lambda_t\,, \\
        d/\lambda_t\,, & \lambda-\lambda_t \ge\lambda_t\,,
    \end{cases}
\end{align*}
where $C_0 > 0$ is a universal constant and
we require that $\msf E/d \ll 1$, i.e., $d \gg \msf k_\delta$. Therefore, for $a_0 \defeq C_0\lambda_t \msf E/d$ and $\lambda \ge \lambda_t + 2a_0$, we can integrate to find that
\begin{align*}
    \ppost(\lambda|X_t)
    &= \ppost(\lambda_t+a_0|X_t)\exp\Bigl(-\int_{\lambda_t+a_0}^\lambda \ell'(\cdot|X_t)\Bigr) \\
    &\le \ppost(\lambda_t+a_0|X_t)\times \begin{dcases}
        \exp\bigl(-\Omega\bigl(\frac{d\,(\lambda-\lambda_t)^2}{\lambda_t^2}\bigr)\bigr)\,, & \lambda_t + 2a_0 \le \lambda \le 2\lambda_t\,, \\
        \exp(-\Omega(d))\,, & \lambda \ge 2\lambda_t\,.
    \end{dcases}
\end{align*}
Hence, for $a \ge 2a_0$,
\begin{align*}
    \int_{\lambda_t+a}^\infty \dd\ppost(\lambda|X_t)
    &\le \int_{\lambda_t+a}^{2\lambda_t} \ppost(\lambda_t+a_0| X_t) \exp\bigl(-\Omega\bigl(\frac{d\,(\lambda-\lambda_t)^2}{\lambda_t^2}\bigr)\bigr)\dd \lambda \\
    &\quad\quad\quad+ \int_{2\lambda_t}^{\lambda_{\max}} \ppost(\lambda_t+a_0 | X_t) \exp(-\Omega(d))\dd\lambda \\
    &\lesssim \Bigl[\frac{\lambda_t}{d^{1/2}} \exp\bigl(-\Omega\bigl(\frac{da^2}{\lambda_t^2}\bigr)\bigr) + \lambda_{\max}\exp(-\Omega(d))\Bigr]\,\ppost(\lambda_t+a_0| X_t)\,.
\end{align*}
On the other hand, for $\lambda\le \lambda_t + a_0$, one has $|\ell'(\lambda| X_t)| \lesssim \msf E/\lambda_t$.
Hence,
\begin{align*}
    \ppost(\lambda_t+a_0| X_t)
    &= \frac{1}{a_0} \int_{\lambda_t}^{\lambda_t+a_0} \ppost(\lambda_t+a_0| X_t)\dd \lambda\\
    &\lesssim \frac{d}{\lambda_t\msf E} \int_{\lambda_t}^{\lambda_t+a_0} \ppost(\lambda| X_t) \exp\bigl(O\bigl(\frac{a_0 \msf E}{\lambda_t}\bigr)\bigr)\,\D \lambda \\
    &\le \frac{d}{\lambda_t \msf E}\exp\bigl(O\bigl(\frac{\msf E^2}{d}\bigr)\bigr)\,,
\end{align*}
and thus
\begin{align*}
    \int_{\lambda_t+a}^\infty \dd\ppost(\lambda| X_t)
    &\lesssim  \frac{d^{1/2}}{\msf E} \exp\bigl(O\bigl(\frac{\msf E^2}{d}\bigr) - \Omega\bigl(\frac{da^2}{\lambda_t^2}\bigr)\bigr) + \frac{\lambda_{\max} d}{\lambda_t \msf E} \exp\bigl(O\bigl(\frac{\msf E^2}{d}\bigr) - \Omega(d)\bigr)\,.
\end{align*}
Since we are assuming that $\msf E\ll d$, the last exponential is $\exp(-\Omega(d))$.
For all $a \gg \lambda_t \msf E/d$, the bound becomes
\begin{align*}
    \int_{\lambda_t+a}^\infty \dd\ppost(\lambda| X_t)
    &\lesssim \frac{d^{1/2}}{\msf E} \exp\bigl(- \Omega\bigl(\frac{da^2}{\lambda_t^2}\bigr)\bigr) + \frac{\lambda_{\max} d}{\lambda_t \msf E} \exp(-\Omega(d))\,.
\end{align*}
A similar argument gives the corresponding lower bound.
\end{proof}

Integrating this high probability bound yields a bound in mean-squared error.

\begin{proof}[Proof of Proposition~\ref{prop:nut_informal}]
    Work over the event of probability at least $1-\delta$ under which Theorem~\ref{thm:noise_formal} holds.
    Then, provided that $d \gg \msf k_\delta$, for $\kappa \defeq \lambda_{\max}/\lambda_{\min}$,
    \begin{align*}
        &\int |\lambda-\lambda_t|^2\dd\ppost(\lambda|X_t)
        \lesssim \lambda_t^2 \msf e_\delta^2 + \int_{|\lambda-\lambda_t|\ge C\msf e_\delta} |\lambda-\lambda_t|^2\dd\ppost(\lambda|X_t) \\[0.25em]
        &\qquad \lesssim \lambda_t^2 \msf e_\delta^2 + \int_0^{\lambda_{\max}} a\,\Bigl(\frac{\msf e_\delta}{\sqrt d} \exp\bigl(-\Omega\bigl(\frac{da^2}{\lambda_t^2}\bigr)\bigr) + \frac{\lambda_{\max}}{\lambda_t \msf e_\delta} \exp(-\Omega(d))\Bigr)\dd a \\[0.25em]
        &\qquad \lesssim \lambda_t^2\,\Bigl[ \msf e_\delta^2 + \frac{\msf e_\delta}{d^{3/2}} + \frac{\lambda_{\max}^3}{\lambda_t^3 \msf e_\delta} \exp(-\Omega(d))\Bigr] \\[0.25em]
        &\qquad \lesssim \lambda_t^2 \,\Bigl[ \frac{\log(1/\delta)}{d} + \frac{\msf k_\delta^2}{d^2} + \frac{\sqrt{\log(1/\delta)}}{d^2} + \frac{\msf k_\delta}{d^{5/2}} + \kappa^3 \,\bigl(\sqrt d + \frac{d}{\msf k}\bigr) \exp(-\Omega(d))\Bigr]
        \lesssim \lambda_t^2 \msf e_\delta^2\,,
    \end{align*}
    provided that $d \gg \log \kappa$.

    In conclusion, for all $\delta \in (0,1)$, if $d \ge C\,(\msf k + \log(1/\delta) + \log (\sigma_0/\sigma_T))$ for a sufficiently large absolute constant $C > 0$, then with probability at least $1-\delta$ it holds that
    \begin{align}\label{eq:noise_formal}
        \int |\lambda-\lambda_t|^2\dd\ppost(\lambda|X_t)
        &\lesssim \lambda_t^2\,\Bigl(\frac{\log(1/\delta)}{d} + \frac{\msf k^2 + \log^2(1/\delta)}{d^2}\Bigr)\,. 
    \end{align}
\end{proof}

\subsection{Proof of Theorem~\ref{thm:thm_main}}

Recall the KL error decomposition~\eqref{eq:girsanov_33}. For the first term, we use the standard fact
\begin{align*}
    \KL(p_{\sigma_0}\|\widehat p_0)
    &= \KL(\pdata * \gamma_{\sigma_0^2}\|\gamma_{\sigma_0}^2)
    \le \frac{1}{2\sigma_0^2}\,W_2^2(\pdata, \delta_0)
    = \frac{\E_{X\sim\pdata}[\|X\|^2]}{2\sigma_0^2}
    \lesssim \frac{\msf m^2_2}{\sigma_0^2}\,.
\end{align*}
The second term is due to the definition of $\epsBD^2$, and now we focus our attention on controlling the third term.

We work over the event of probability at least $1-\delta$ under which both~\cref{thm:noise_formal} and~\cref{cor:helper_cor} hold.
Then,
\begin{align}
    \Bigl|\int_\sigma^{\sigma_t}\|W_{\omega}(X_t)\|\, \omega^{-3}\dd \omega\Bigr|^2 &\lesssim \msf k_\delta^3 \,\Bigl|\int_\sigma^{\sigma_t} \sigma_t^3\, \omega^{-3} \dd \omega + \int_\sigma^{\sigma_t} \dd \omega \Bigr|^2 \label{eq:someWbd} \\[0.25em]
    &\lesssim \sigma_t^2\,\msf k_\delta^3 \,\bigl(|1 - \sigma_t^2/\sigma^2|^2 + |1-\sigma/\sigma_t|^2\bigr) \nonumber \\[0.25em]
    &= \frac{\msf k_\delta^3}{\lambda_t}\,\bigl(|1-\lambda/\lambda_t|^2 + |1-\sqrt{\lambda_t/\lambda}|^2\bigr)\,. \nonumber
\end{align}
Now, we integrate this w.r.t.\ $\ppost(\cdot|X_t)$.
By~\eqref{eq:noise_formal}, if $d \gg \msf k_\delta + \log\kappa$,
\begin{align*}
    &\int |1-\lambda/\lambda_t|^2\dd\ppost(\lambda|X_t)
    \lesssim \msf e_\delta^2\,.
\end{align*}
Similarly, 
\begin{align*}
    &\int |1-\sqrt{\lambda_t/\lambda}|^2 \dd \ppost(\lambda|X_t) \\
    &\qquad \lesssim \msf e_\delta^2 + \frac{1}{\lambda_t^2} \int_{C\msf e_\delta \le |\lambda-\lambda_t| \le \lambda_t/2} |\lambda-\lambda_t|^2 + \frac{\lambda_t}{\lambda_{\min}}\,\ppost(|\lambda-\lambda_t| \ge \lambda_t/2 \mid X_t) \\
    &\qquad \lesssim \msf e_\delta^2 + \frac{\lambda_t}{\lambda_{\min}}\,\bigl(\frac{\msf e_\delta}{\sqrt d} + \frac{\lambda_{\max}}{\lambda_t \msf e_\delta}\bigr) \exp(-\Omega(d)) \lesssim \msf e_\delta^2\,.
\end{align*}
We have shown that for all $\delta$ such that $d \gg \msf k_\delta + \log\kappa$, with probability at least $1-\delta$,
\begin{align*}
    \|(r_{\sigma_t}^\star - r^\star)(X_t)\|^2
    &\lesssim \frac{\msf k_\delta^3}{\lambda_t}\,\msf e_\delta^2\,.
\end{align*}
On the other hand, for smaller values of $\delta$, from~\eqref{eq:someWbd} we still have the crude bound
\begin{align*}
    \|(r_{\sigma_t}^\star - r^\star)(X_t)\|^2
    &\lesssim \frac{\msf k_\delta^3}{\lambda_t}\,\kappa^4\,.
\end{align*}
Therefore, for a universal constant $c > 0$, we deduce that for \emph{all} $\delta \in (0,1)$, if $d \gg \msf k + \log \kappa$,
\begin{align*}
    \|(r_{\sigma_t}^\star - r^\star)(X_t)\|^2
    &\lesssim \frac{\msf k_\delta^3}{\lambda_t}\,\bigl[\msf e_\delta^2 + \kappa^4\, \mb 1_{\delta \le \exp(-cd)}\bigr]\,.
\end{align*}

Integrating this high probability bound yields
\begin{align*}
    \|r_{\sigma_t}^\star - r^\star\|_{L^2(p_{\sigma_t})}^2
    &\lesssim \frac{\msf k^3}{\lambda_t}\,\Bigl[ \frac{1}{d} + \frac{\msf k^2}{d^2} + \kappa^4\exp(-\Omega(d))\Bigr]
    \lesssim \frac{\msf k^3}{\lambda_t}\,\bigl( \frac{1}{d} + \frac{\msf k^2}{d^2}\bigr)\,.
\end{align*}
Thus, the noise estimation term becomes
\begin{align*}
    \int_0^T \frac{1}{a_t}\,\|r_{\sigma_t}^\star - r^\star\|_{L^2(p_{\sigma_t})}^2\dd t
    &\lesssim \bigl(\frac{\msf k^3}{d} + \frac{\msf k^5}{d^2}\bigr) \int_0^T \frac{\sigma_t^2}{a_t}\dd t\,.
\end{align*}

\section{Proofs for Section~\ref{sec:theory_discretization}}\label{app:proofs_36}
\subsection{Exponential Euler discretization}\label{app:exp_euler}
In order to establish discretization guarantees which only scale with the intrinsic dimension, we use the exponential Euler discretization: for fixed step size $h > 0$, the algorithm follows
\begin{align}\label{eq:disc_sde_1}
    \dd Y_t = (\widehat f(Y_{t_-}) - Y_t)\dd t + \sqrt{2a_t}\dd B_t\,,
\end{align}
where $t_- \defeq \lfloor t/h\rfloor\,h$.
In other words, we freeze the non-linear term $\widehat f$ and exactly integrate the rest, including the linear term.
Some intuition for this can be seen as follows: if, instead of $\widehat f$, we had $f_{\sigma_t}^\star$, then the Jacobian of the drift is $C_{\sigma_t}/\sigma_t^2-I$.
Generally, discretization error bounds rely on Lipschitz bounds on the drift, i.e., bounds on the operator norm of this Jacobian.
However, among the two terms in the Jacobian, only the first---a conditional covariance matrix---is expected to be controlled in terms of the intrinsic dimension.
This suggests that to avoid dependence on the ambient dimension, we should exactly integrate the $-Y_t$ term.

\subsection{Proof of Theorem~\ref{thm:disc_error}}

Recall that 
$f_{\sigma_t}^\star = {\rm id} + \sigma_t^2\, \nabla \log p_{\sigma_t}$. 
To control the discretization error, we first need an It\^o calculation.

\begin{lemma}\label{lem:disc_lemma1}
Let $(\sigma_t)_{t\ge 0}$ be as in Proposition~\ref{prop:opt_schedule}. Then
\begin{align*}
    \partial_t f_{\sigma_t}^\star(y) = \dot\sigma_t\, \partial_\omega(\omega^2\, \nabla \log p_\omega(y))\big|_{\omega = \sigma_t} = \Bigl(\frac{a_t}{\sigma_t^4} - \frac{1}{\sigma_t^2}\Bigr)\,W_{\sigma_t}(y) \,.
\end{align*}
\end{lemma}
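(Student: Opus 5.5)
The claim follows from the chain rule together with two explicit computations: the $\omega$-derivative of the denoiser at a fixed point $y$, and $\dot\sigma_t$ from the ODE~\eqref{eq:sigmat_ode}. Note that $y$ is held fixed, so there is no genuine It\^o term here; only the explicit $t$-dependence through $\sigma_t$ is differentiated. By Tweedie~\eqref{eq:tweedie}, $f^\star_{\sigma}(y) = y + \sigma^2\nabla\log p_\sigma(y)$. Since $\sigma\mapsto\sigma_t$ is differentiable in $t$ (it solves an ODE with continuous $a_t$), the chain rule gives the first equality,
\begin{align*}
\partial_t f^\star_{\sigma_t}(y) = \dot\sigma_t\,\partial_\omega\bigl(\omega^2\nabla\log p_\omega(y)\bigr)\big|_{\omega=\sigma_t}.
\end{align*}

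Next I compute $\partial_\omega f^\star_\omega(y)$ directly from the posterior representation
\begin{align*}
f^\star_\omega(y)=\frac{\E_{X\sim p_X}\bigl[X\exp(-\|X-y\|^2/(2\omega^2))\bigr]}{\E_{X\sim p_X}\bigl[\exp(-\|X-y\|^2/(2\omega^2))\bigr]}.
\end{align*}
Differentiating the exponential weight in $\omega$ produces the factor $\omega^{-3}\|X-y\|^2$. The quotient rule then gives
\begin{align*}
\partial_\omega f^\star_\omega(y)=\omega^{-3}\Bigl(\E_{q_\omega}[X\|X-y\|^2\mid Y=y]-\E_{q_\omega}[X\mid Y=y]\,\E_{q_\omega}[\|X-y\|^2\mid Y=y]\Bigr)=\omega^{-3}W_\omega(y),
\end{align*}
which matches the identity already quoted in Section~\ref{sec:theory_bayesian}. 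The only technical point is justifying differentiation under the integral. This follows by dominated convergence using (A1): for $\omega$ in a compact subinterval of $(0,\infty)$, the derivative of the weight is bounded by $C(1+\|X\|^2+\|y\|^2)$, which is integrable, and the same bound after multiplying by $\|X\|$ is handled because $e^{-s}s^k$ is bounded.

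Finally, from~\eqref{eq:sigmat_ode} we have $\sigma_t\dot\sigma_t=-\sigma_t^2+a_t$, so $\dot\sigma_t=(a_t-\sigma_t^2)/\sigma_t$. Multiplying by $\sigma_t^{-3}W_{\sigma_t}(y)$ gives
\begin{align*}
\partial_t f^\star_{\sigma_t}(y)=\Bigl(\frac{a_t}{\sigma_t^4}-\frac{1}{\sigma_t^2}\Bigr)W_{\sigma_t}(y),
\end{align*}
as claimed. The main obstacle is the covariance computation in the second step, specifically keeping track of the $\omega^{-3}$ factor and the cancellation that produces a covariance; the remaining steps are routine.
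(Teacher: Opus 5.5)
Your proof is correct and follows the paper's argument. It uses the chain rule, the identity $\partial_\omega f^\star_\omega(y)=\omega^{-3}W_\omega(y)$ (the paper calls this a ``simple calculation'' in Section~\ref{sec:theory_bayesian}; you verify it explicitly via the quotient rule), and $\dot\sigma_t=(a_t-\sigma_t^2)/\sigma_t$ from \eqref{eq:sigmat_ode}.

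As a minor remark, (A1) is not needed to differentiate under the integral. Since $s\,e^{-s/(2\omega^2)}$ and $s^{3/2}e^{-s/(2\omega^2)}$ are bounded in $s\ge 0$, for fixed $y$ and $\omega$ in a compact subinterval of $(0,\infty)$ the dominating function can be taken to be a constant.
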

\begin{proof}
Follows from the chain rule and the final expression for $\dot\sigma_t$ from \eqref{eq:sigmat_ode}. 
\end{proof}

\begin{proposition}\label{prop:ito_prop}
For $(X_t)_{t\ge 0}$ following \eqref{eq:ideal_sde}, it holds that
\begin{align*}
    \dd f_{\sigma_t}^\star(X_t)
    &= \Bigl\{\frac{2a_t - \sigma_t^2}{\sigma_t^4}\,\bigl(W_{\sigma_t}(X_t) - C_{\sigma_t}(X_t)\,(f_{\sigma_t}^\star(X_t) - X_t)\bigr)\Bigr\}\dd t \\
    &\qquad{} +\sqrt{2a_t}\sigma_{t}^{-2}\,C_{\sigma_t}(X_t)\dd B_t\,.
\end{align*}
\end{proposition}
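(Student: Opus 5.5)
We apply It\^o's formula to the time-dependent map $(t,y)\mapsto f^\star_{\sigma_t}(y)$ along the ideal dynamics~\eqref{eq:ideal_sde}, whose drift is $f^\star_{\sigma_t}(X_t)-X_t$ and whose diffusion is $\sqrt{2a_t}\dd B_t$. This gives
\begin{align*}
\dd f^\star_{\sigma_t}(X_t) = \Bigl\{\partial_t f^\star_{\sigma_t} + \nabla f^\star_{\sigma_t}\,(f^\star_{\sigma_t}-{\rm id}) + a_t\,\Delta f^\star_{\sigma_t}\Bigr\}(X_t)\dd t + \sqrt{2a_t}\,\nabla f^\star_{\sigma_t}(X_t)\dd B_t\,,
\end{align*}
where $\Delta$ acts componentwise. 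The time derivative is supplied by Lemma~\ref{lem:disc_lemma1}: it equals $(a_t\sigma_t^{-4}-\sigma_t^{-2})\,W_{\sigma_t}$. It then remains to compute the spatial derivatives of the posterior mean $f^\star_\sigma(y)=\E_{q_\sigma}[X\mid Y=y]$.

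For the spatial derivatives, the posterior satisfies $q_\sigma(x\mid y)\propto p_X(x)\exp(-\|x-y\|^2/(2\sigma^2))$. Differentiating in $y$ gives the standard identity $\nabla f^\star_\sigma = \sigma^{-2}C_\sigma$, and $C_\sigma$ is symmetric. This immediately yields the martingale term $\sqrt{2a_t}\,\sigma_t^{-2}C_{\sigma_t}(X_t)\dd B_t$, and it turns the transport term into $\sigma_t^{-2}C_{\sigma_t}(f^\star_{\sigma_t}-X_t)$.

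For the Laplacian, we write $\Delta f^\star_\sigma = \sigma^{-2}\nabla\cdot C_\sigma$. Differentiating the third-order posterior moment, $\partial_{y_j}$ of a posterior expectation $\E[g(X)\mid y]$ equals $\sigma^{-2}{\rm Cov}(g(X),X_j\mid y)$. Hence
\begin{align*}
\sum_j \partial_{y_j}(C_\sigma)_{ij} = \sigma^{-2}\,\E\bigl[(X-f^\star_\sigma)_i\,\|X-f^\star_\sigma\|^2\mid y\bigr]\,,
\end{align*}
which is a third central moment. Next we relate this to $W_\sigma(y)={\rm Cov}(X,\|X-y\|^2\mid y)$. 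Writing $X-y=(X-f^\star)+(f^\star-y)$ and expanding the square gives
\begin{align*}
W_\sigma = \E\bigl[(X-f^\star)\|X-f^\star\|^2\mid y\bigr] + 2C_\sigma(f^\star-y)\,,
\end{align*}
because the constant term drops out of the covariance. Therefore $a_t\Delta f^\star_{\sigma_t} = a_t\sigma_t^{-4}\bigl(W_{\sigma_t} - 2C_{\sigma_t}(f^\star_{\sigma_t}-X_t)\bigr)$.

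Finally we collect the drift coefficients. The coefficient of $W$ is $(a_t\sigma_t^{-4}-\sigma_t^{-2}) + a_t\sigma_t^{-4} = (2a_t-\sigma_t^2)/\sigma_t^4$. The coefficient of $C(f^\star-X_t)$ is $\sigma_t^{-2}-2a_t\sigma_t^{-4} = -(2a_t-\sigma_t^2)/\sigma_t^4$. Together these give exactly the claimed drift.

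The main obstacle is the third-moment bookkeeping, namely getting the Laplacian and the decomposition of $W$ with the correct signs and factors of $2$; we will double-check this step on a Gaussian $p_X$. Regularity, i.e.\ smoothness of $f^\star_\sigma$ and the moment conditions needed for It\^o's formula, follows from the Gaussian convolution and \textbf{(A1)}.
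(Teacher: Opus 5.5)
Your proposal is correct and takes essentially the same route as the paper: It\^o's formula, Lemma~\ref{lem:disc_lemma1} for $\partial_t f^\star_{\sigma_t}$, the identity $\nabla f^\star_\sigma=\sigma^{-2}C_\sigma$, and the decomposition $W_\sigma=\E[(X-f^\star_\sigma)\|X-f^\star_\sigma\|^2\mid y]+2C_\sigma(f^\star_\sigma-y)$ (the paper's \eqref{eq:W_expr}) to rewrite the Laplacian term. The only cosmetic difference is that you compute $\Delta f^\star_\sigma=\sigma^{-2}\nabla\cdot C_\sigma$ directly via the posterior covariance-derivative identity, while the paper commutes derivatives to get $\sigma^{-2}\nabla\tr C_\sigma$ and cites Lemma~\ref{lem:tweedie_formula}; the two coincide because $C_\sigma=\sigma^2 I+\sigma^4\nabla^2\log p_\sigma$.
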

\begin{proof}
Writing out It\^o's lemma,
\begin{align*}
    \dd f^\star_{\sigma_t}(X_t) = (\partial_t f^\star_{\sigma_t})(X_t)\dd t + \langle \nabla f^\star_{\sigma_t}(X_t), \dd X_t \rangle + \frac{1}{2}\,\langle \dd X_t, \nabla^2 f^\star_{\sigma_t}(X_t) \dd X_t \rangle\,,
\end{align*}
we see that the only terms that remain are
\begin{align*}
    \dd f^\star_{\sigma_t}(X_t) &= \big[(\partial_t f^\star_{\sigma_t})(X_t) + \nabla f^\star_{\sigma_t}(X_t)\, (f^\star_{\sigma_t}(X_t) - X_t)  + a_t\,\Delta f^\star_{\sigma_t}(X_t) \bigr]\dd t \\
    &\qquad\qquad\qquad+ \sqrt{2a_t}\, \nabla f^\star_{\sigma_t}(X_t) \dd B_t\,.
\end{align*}
From Lemma~\ref{lem:disc_lemma1} we have that
\begin{align*}
    \partial_t f^\star_{\sigma_t}(y) = \frac{a_t - \sigma_t^2}{\sigma_t^4}\,W_{\sigma_t}(y)
\end{align*}
and by Tweedie's formula (recall Lemma~\ref{lem:tweedie_formula})
\begin{align*}
    a_t\,\Delta f^\star_{\sigma_t} = a_t \sigma_t^2\, \Delta \nabla \log p_{\sigma_t} = a_t \sigma_t^2\, \nabla \Delta \log p_{\sigma_t} = a_t \sigma_t^{-2}\, \nabla \tr C_{\sigma_t}\,,
\end{align*}
where the identity matrix drops due to the additional gradient. Finally, note that 
\begin{align*}
    \nabla f^\star_{\sigma_t}(y)\, (f^\star_{\sigma_t}(y) - y) = \sigma_t^{-2}\,C_{\sigma_t}(y)\,(f^\star_{\sigma_t}(y) - y)\,.
\end{align*}
Collecting these three terms and invoking the last result in Lemma~\ref{lem:tweedie_formula}, we obtain
\begin{align*}
    &\bigl((\partial_t f^\star_{\sigma_t}) + \nabla f^\star_{\sigma_t}\,(f^\star_{\sigma_t} - {\rm id})  + a_t\,\Delta f^\star_{\sigma_t}\bigr)(y)\\
    &\phantom{=}= \frac{a_t - \sigma_t^2}{\sigma_t^4}\,W_{\sigma_t}(y) + a_t\sigma_t^{-2}\,\nabla \tr C_{\sigma_t}(y) + \sigma_t^{-2}\,C_{\sigma_t}(y)\,(f^\star_{\sigma_t}(y) - y) \\
    &\phantom{=}=\Bigl( \frac{a_t - \sigma_t^2}{\sigma_t^4}\Bigr)\, W_{\sigma_t}(y)  + \Bigl( \frac{a_t}{\sigma_t^4}\, W_{\sigma_t}(y) - \frac{2a_t}{\sigma_t^4}\,C_{\sigma_t}(y)\,(f^\star_{\sigma_t}(y)-y)\Bigr) \\
    &\phantom{=}\qquad\qquad+ \sigma_t^{-2}\,C_{\sigma_t}(y)\,(f^\star_{\sigma_t}(y) - y) \\
    &\phantom{=}= \frac{2a_t - \sigma_t^2}{\sigma_t^4}\,\bigl(W_{\sigma_t}(y) - C_{\sigma_t}(y)\,(f^\star_{\sigma_t}(y) - y)\bigr)\,. \qedhere
\end{align*}
\end{proof}

\begin{lemma}\label{lem:mmse}
    For $X_\sigma \sim p_\sigma$,
    \begin{align*}
        \partial_\sigma\, \E \tr C_\sigma(X_\sigma) = \frac{2}{\sigma^3} \,\E\tr\bigl(C_\sigma(X_\sigma)^2\bigr)\,.
    \end{align*}
\end{lemma}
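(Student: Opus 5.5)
The plan is to convert the identity into a statement in the signal-to-noise parametrization $\lambda = \sigma^{-2}$, and then compute the derivative directly using the heat equation together with the Tweedie-type identities already used in the proof of Proposition~\ref{prop:ito_prop}. Since $\partial_\sigma = -2\sigma^{-3}\,\partial_\lambda$, the claim is equivalent to
\begin{align*}
    \partial_\lambda\, \E \tr C_{\lambda^{-1/2}}(X_{\lambda^{-1/2}}) = -\E\tr\bigl(C^2\bigr)\,.
\end{align*}
This is the classical ``derivative of the MMSE'' formula of Guo--Shamai--Verd\'u, and I would reprove it in the paper's notation.

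\textbf{Step 1: reduce to the denoiser.} By orthogonality, as in~\eqref{eq:orthogonality},
\begin{align*}
    \E\tr C_\sigma(X_\sigma) = \E\|X\|^2 - \E\|f^\star_\sigma(X_\sigma)\|^2 = \E\|X\|^2 - \int \|f^\star_\sigma\|^2\,p_\sigma\,.
\end{align*}
The first term does not depend on $\sigma$, so it suffices to differentiate $\int \|f^\star_\sigma\|^2 p_\sigma$.

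\textbf{Step 2: differentiate under the integral.} Differentiating gives two contributions.
\begin{itemize}
    \item For the density, the heat equation computation~\eqref{eq:tilde_rho_eq} gives $\partial_\sigma p_\sigma = \sigma\,\Delta p_\sigma$.
    \item For the denoiser, Lemma~\ref{lem:disc_lemma1} gives $\partial_\sigma f^\star_\sigma = \sigma^{-3}W_\sigma$.
\end{itemize}
Integrating the first contribution by parts twice, with boundary terms vanishing by Gaussian tails and \textbf{(A1)}, yields
\begin{align*}
    \partial_\sigma \int \|f^\star_\sigma\|^2 p_\sigma = \E\Bigl[2\sigma^{-3}\langle f^\star_\sigma, W_\sigma\rangle + \sigma\,\Delta\|f^\star_\sigma\|^2\Bigr]\,,
\end{align*}
with both expectations taken at $X_\sigma$. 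I would then expand the Laplacian as
\begin{align*}
    \Delta\|f\|^2 = 2\|\nabla f\|_F^2 + 2\langle f,\Delta f\rangle\,.
\end{align*}
Tweedie gives $\nabla f^\star_\sigma = \sigma^{-2}C_\sigma$. The Proposition~\ref{prop:ito_prop} computation (Lemma~\ref{lem:tweedie_formula}) gives
\begin{align*}
    \Delta f^\star_\sigma = \sigma^{-2}\nabla\tr C_\sigma = \sigma^{-4}\bigl(W_\sigma - 2C_\sigma(f^\star_\sigma - \mathrm{id})\bigr)\,.
\end{align*}
The $\|\nabla f\|_F^2$ term then produces $2\sigma^{-3}\E\tr(C_\sigma^2)$.

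\textbf{Step 3: show the remaining terms cancel.} After Step 2, the terms left over are
\begin{align*}
    4\sigma^{-3}\,\E\langle f^\star_\sigma, W_\sigma\rangle - 4\sigma^{-3}\,\E\langle f^\star_\sigma, C_\sigma(f^\star_\sigma - X_\sigma)\rangle\,.
\end{align*}
Flipping the sign of the whole expression, the claim requires these leftovers to equal $-4\sigma^{-3}\E\tr(C_\sigma^2)$. This is the step I expect to be the main obstacle: the sign bookkeeping is delicate, and the natural manipulation of $f^\star_\sigma$-weighted terms by the tower property does not obviously close.

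To verify it, I would expand $W_\sigma(y) = {\rm Cov}(X,\|X-y\|^2\mid y)$ as
\begin{align*}
    W_\sigma(y) = \E\bigl[(X-f)\,\|X-f\|^2 \mid y\bigr] + 2C_\sigma(y)(f-y)\,,
\end{align*}
so that the $C(f-y)$ pieces cancel. The remaining third-moment term $\E\langle f, \E[(X-f)\|X-f\|^2\mid Y]\rangle$ must then be rewritten as $-\E\tr C^2$. I would try to do this through Gaussian integration by parts in $Y$: write $f = Y + \sigma^2\nabla\log p_\sigma$ and integrate the score against $p_\sigma$.

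If this becomes unwieldy, the fallback is to work in the $\lambda$ parametrization with the conditional density $\propto \exp(-\tfrac{\lambda}{2}\|x-y\|^2)$, exactly as in the proof of Lemma~\ref{lem:log-derivatives}. There one differentiates $\E\tr C$ via the conditional-covariance identity $\partial_\lambda \E[g(X)\mid y] = -\tfrac12{\rm Cov}(g(X),\|X-y\|^2\mid y)$ together with the heat-flow change of $p$, and checks that the cross terms combine by the tower property into $-\E\tr(C^2)$. Multiplying by $-2\sigma^{-3}$ then gives the claim in either route.
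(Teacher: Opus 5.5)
Your reduction to $\partial_\lambda \E\tr C = -\E\tr(C^2)$ via $\partial_\sigma = -2\sigma^{-3}\partial_\lambda$ is exactly the paper's proof. The paper simply cites this MMSE identity (El Alaoui--Montanari) and applies the chain rule. Steps~1--2 of your self-contained reproof are also correct, as is the target stated at the start of Step~3. The reproof then breaks.

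Write $T \defeq \E[(X-f^\star_\sigma)\,\|X-f^\star_\sigma\|^2 \mid Y]$, so that $W_\sigma = T + 2C_\sigma(f^\star_\sigma - Y)$. After substitution, the $W$-term contributes $8\sigma^{-3}\E\langle f^\star_\sigma, C_\sigma(f^\star_\sigma - Y)\rangle$, so
\begin{align*}
    4\sigma^{-3}\,\E\langle f^\star_\sigma, W_\sigma\rangle - 4\sigma^{-3}\,\E\langle f^\star_\sigma, C_\sigma(f^\star_\sigma - Y)\rangle
    = 4\sigma^{-3}\bigl(\E\langle f^\star_\sigma, T\rangle + \E\langle f^\star_\sigma, C_\sigma(f^\star_\sigma - Y)\rangle\bigr)\,.
\end{align*}
The $C(f-y)$ pieces do not cancel; they cancel only inside $\Delta f^\star_\sigma = \sigma^{-4}T$. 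Your reduced target $\E\langle f^\star_\sigma, T\rangle = -\E\tr(C_\sigma^2)$ is false. For Gaussian $p_X$ the posterior is Gaussian, so $T \equiv 0$ while $\E\tr(C_\sigma^2) > 0$. Hence Gaussian integration by parts on the third-moment term cannot succeed. Your fallback merely asserts that the cross terms ``combine'', which is precisely what must be shown.

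The missing step is to integrate the \emph{surviving} term by parts against $p_\sigma$, using $f^\star_\sigma - y = \sigma^2\nabla\log p_\sigma$ and $\nabla f^\star_\sigma = \sigma^{-2}C_\sigma$:
\begin{align*}
    \E\langle C_\sigma f^\star_\sigma, f^\star_\sigma - Y\rangle = -\sigma^2\,\E\bigl[\nabla\cdot(C_\sigma f^\star_\sigma)\bigr] = -\sigma^2\,\E\langle \nabla\cdot C_\sigma, f^\star_\sigma\rangle - \E\tr(C_\sigma^2)\,,
\end{align*}
where $(\nabla\cdot C_\sigma)_j \defeq \sum_i \partial_{y_i}(C_\sigma)_{ij}$. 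The computation in the proof of Lemma~\ref{lem:tweedie_formula} gives $\sigma^2\,\partial_{y_k}(C_\sigma)_{ij}(y) = \E[U_iU_jU_k \mid Y=y]$ with $U \defeq X - f^\star_\sigma(y)$, which is symmetric in $(i,j,k)$. Hence $\sigma^2\,\nabla\cdot C_\sigma = T$, the third-moment terms cancel exactly, and the leftover equals $-4\sigma^{-3}\E\tr(C_\sigma^2)$, which closes the argument.

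Alternatively, after your Step~1 you can bypass all third-moment bookkeeping with the stochastic-localization coupling behind the paper's citation. Along $y_\lambda = \lambda X + B_\lambda$, the posterior mean $m_\lambda$ is a martingale with $\mathrm{d}m_\lambda = C\,\mathrm{d}\hat B_\lambda$ for the innovation Brownian motion $\hat B$. This gives $\partial_\lambda \E\|m_\lambda\|^2 = \E\tr(C^2)$ immediately.
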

\begin{proof}
    This is a variant of the standard MMSE identity.
    For example,~\citet[\S II.C]{ElAMon22StoLoc} with $\bs Q = \bs R = I$ shows that
    \begin{align*}
        \partial_\lambda\, \E\tr C_{1/\sqrt\lambda}(X_{1/\sqrt\lambda}) = -\E\tr\bigl(C_{1/\sqrt\lambda}(X_{1/\sqrt\lambda})^2\bigr)\,,
    \end{align*}
    with $\lambda \defeq \sigma^{-2}$.
    The claimed result follows from the chain rule.
\end{proof}

We now present the main computations from Section~\ref{sec:theory_discretization}. Applying Girsanov's theorem to the ideal process \eqref{eq:ideal_sde}, denoted $\msf P$, and the exponential Euler discretization~\eqref{eq:disc_sde_1}, denoted $\widehat{\msf P}$, the error becomes
\begin{align}
    \!\!\KL(\msf P \|\widehat{\msf P})
    &\lesssim \KL(p_0\|\widehat p_0)
    + \E\int_0^T \frac{1}{a_t}\, \|\widehat f_\theta(X_{t_-}) - f^\star_{\sigma_t}(X_t)\|^2\dd t \nonumber\\
    &\lesssim \KL(p_0\|\widehat{ p}_0)
    + \E\int_0^T \frac{1}{a_t}\, \|\widehat f_\theta(X_{t_-}) - f^\star(X_{t_-})\|^2 \dd t\nonumber\\
    &\qquad\qquad+ \E\int_0^T {\frac{1}{a_t}}\, \bigl(\|f^\star(X_{t_-}) - f^\star_{\sigma_{t_-}}(X_{t_-})\|^2+\|f^\star_{\sigma_{t_-}}(X_{t_-}) - f^\star_{\sigma_{t}}(X_t)\|^2\bigr)\dd t \nonumber\\
    &\lesssim \KL(p_0\|\widehat{ p}_0)
    + \tepsBD^2 + \E\int_0^T \frac{1}{a_t}\,  \|f^\star(X_{t_-}) - f^\star_{\sigma_{t_-}}(X_{t_-})\|^2 \dd t \nonumber \\
    &\qquad\qquad+\E \int_0^T \frac{1}{a_t}\, \|f^\star_{\sigma_{t_-}}(X_{t_-}) - f^\star_{\sigma_{t}}(X_t)\|^2\dd t\nonumber\\
&\lesssim \frac{R^2}{\sigma_0^2}
    + \tepsBD^2 + \bigl(\frac{\msf k^3}{d} + \frac{\msf k^5}{d^2}\bigr)\int_0^T \frac{\sigma_t^2}{a_t}\dd t + \int_0^T \frac{1}{a_t}\,\E\|f^\star_{\sigma_{t_-}}(X_{t_-}) - f^\star_{\sigma_{t}}(X_t)\|^2\dd t\label{eq:girsanov_full_bd}
\end{align}
where we used the result from Theorem~\ref{thm:thm_main} in the last line.

For the last term, we have by Proposition~\ref{prop:ito_prop} and the It\^o isometry,
\begin{align*}
    &\E\|f^\star_{\sigma_{t_-}}(X_{t_-}) -f^\star_{\sigma_{t}}(X_t)\|^2
    \\
    &\quad= \E\Bigl\lVert \int_{t_-}^t \Bigl\{\frac{2a_s - \sigma_s^2}{\sigma_s^4}\,\bigl(W_{\sigma_s}(X_s) - C_{\sigma_s}(X_s)\,(f^\star_{\sigma_s}(X_s) - X_s)\bigr)\Bigr\}\dd s \\
    &\quad\qquad\qquad{} + \int_{t_-}^t\sqrt{2a_s}\sigma_{s}^{-2}\,C_{\sigma_s}(X_s)\dd B_s\Bigr\rVert^2 \\
    &\quad \lesssim h\int_{t_-}^t \bigl(\frac{2a_s - \sigma_s^2}{\sigma_s^4}\bigr)^2\,\E\|W_{\sigma_s}(X_s) - C_{\sigma_s}(X_s)\,(f^\star_{\sigma_{s}}(X_s) - X_s)\|^2\dd s \\
    &\quad\qquad\qquad+ \int_{t_-}^t a_s \sigma_s^{-4}\,\E \|C_{\sigma_s}(X_s)\|_{\rm F}^2\dd s\,.
\end{align*}

We now specialize to the class of noise schedules with $a_t = \ta\sigma_t^2$ for some $\ta \in (0,1)$.
In this case, we have $\sigma_t = \sigma_0 \exp(-(1-\ta)\,t)$ for $t\ge 0$, and
\begin{align*}
    &\E\|f^\star_{\sigma_{t_-}}(X_{t_-}) -f^\star_{\sigma_{t}}(X_t)\|^2 \\
    &\qquad \lesssim \bigl(\ta - \frac{1}{2}\bigr)^2\, h\int_{t_-}^t \frac{1}{\sigma_s^4}\,\E\|W_{\sigma_s}(X_s) - C_{\sigma_s}(X_s)\,(f^\star_{\sigma_{s}}(X_s) - X_s)\|^2\dd s \\
    &\qquad\qquad {}+ \ta \int_{t_-}^t \frac{1}{\sigma_s^2}\,\E \|C_{\sigma_s}(X_s)\|_{\rm F}^2\dd s\,.
\end{align*}
Note that for any non-negative $(b_t)_{t\in [0,T]}$, $T = Nh$,
\begin{align*}
    \int_0^T \frac{1}{a_t} \int_{t_-}^t b_s\dd s\dd t
    &= \sum_{n=0}^{(N-1)h} \int_{nh}^{(n+1)h} \frac{1}{a_t} \int_{nh}^t b_s\dd s\dd t \\
    &= \sum_{n=0}^{(N-1)h}\int_{nh}^{(n+1)h} b_s \int_s^{(n+1)h} \frac{1}{a_t}\dd t \dd s
    \lesssim h \int_0^T \frac{b_s}{a_s}\dd s\,,
\end{align*}
where we use the fact that $|\partial_t \log a_t| \lesssim 1$ and hence $a_s \asymp a_t$ for $\abs{s-t} \lesssim 1$.
Thus, the total discretization error becomes
\begin{align*}
    \msf{Disc}(h)
    &\defeq \int_0^T \frac{1}{a_t}\,\E\|f^\star_{\sigma_{t_-}}(X_{t_-}) - f^\star_{\sigma_{t}}(X_t)\|^2\dd t \\
    &\lesssim \frac{(\ta - \frac{1}{2})^2}{\ta}\, h^2\int_0^T \frac{1}{\sigma_s^6}\,\E\|W_{\sigma_s}(X_s) - C_{\sigma_s}(X_s)\,(f^\star_{\sigma_{s}}(X_s) - X_s)\|^2\dd s \\
    &\qquad{} + h\int_0^T \frac{1}{\sigma_s^4}\,\E \|C_{\sigma_s}(X_s)\|_{\rm F}^2\dd s\,.
\end{align*}

For the last term, note that by Lemma~\ref{lem:mmse},
\begin{align*}
    \partial_s \,\E\tr C_{\sigma_s}(X_s)
    = \frac{2\dot\sigma_s}{\sigma_s^3}\,\E\|C_{\sigma_s}(X_{\sigma_s})\|_{\rm F}^2
    = -\frac{2\,(1-\ta)}{\sigma_s^2}\,\E\|C_{\sigma_s}(X_{\sigma_s})\|_{\rm F}^2\,.
\end{align*}
Hence, the term can be written
\begin{align*}
    -\frac{h}{1-\ta} \int_0^T \frac{1}{\sigma_t^2}\,\partial_t\, \E \tr C_{\sigma_t}(X_t)\dd t
    &\le \frac{h}{1-\ta}\, \Bigl( \frac{\E \tr C_{\sigma_0}(X_0)}{\sigma_0^2} + \int_0^T \partial_t(\sigma_t^{-2})\,\E\tr C_{\sigma_t}(X_t) \dd t\Bigr) \\
    &\lesssim \frac{h}{1-\ta}\, \Bigl( \msf k - \int_0^T \frac{1}{\sigma_t^4}\, \partial_t \sigma_t^2\,\E\tr C_{\sigma_t}(X_t)\dd t\Bigr) \\
    &\lesssim \frac{h}{1-\ta}\, \Bigl( \msf k + (1-\ta) \int_0^T \frac{1}{\sigma_t^2}\,\E\tr C_{\sigma_t}(X_t) \dd t\Bigr) \\
    &\lesssim \bigl(\frac{1}{1-\ta} + T\bigr)\,\msf k h\,,
\end{align*}
where we applied Corollary~\ref{cor:helper_cor}.

As for the first term, by Corollary~\ref{cor:helper_cor}, we can bound 
\begin{align*}
    \E\|W_{\sigma_s}(X_s) - C_{\sigma_s}(X_s)\,(f^\star_{\sigma_{s}}(X_s) - X_s)\|^2
    &\lesssim \sigma_s^6\,\msf k^3\,.
\end{align*}
which gives 
\begin{align*}
    \int_0^T \frac{1}{\sigma_s^6}\,\E\|W_{\sigma_s}(X_s) - C_{\sigma_s}(X_s)\,(f^\star_{\sigma_{s}}(X_s) - X_s)\|^2\dd s
    &\lesssim \msf k^3 T\,.
\end{align*}

With this, we have our final bound
\begin{align*}
    \msf{Disc}(h)
    &\lesssim\frac{(\ta - \frac{1}{2})^2}{\ta}\, T\msf k^3 h^2 + \bigl(\frac{1}{1-\ta} + T\bigr)\,\msf k h\,.
\end{align*}
Note that $T \asymp \log(\sigma_0/\sigma_T)/(1-\ta)$, so this can be written
\begin{align*}
    \msf{Disc}(h)
    &\lesssim\frac{(\ta - \frac{1}{2})^2}{\ta\,(1-\ta)}\, \msf k^3 h^2 \log\frac{\sigma_0}{\sigma_T} + \frac{1}{1-\ta}\, \msf k h \log \frac{\sigma_0}{\sigma_T}\,.
\end{align*}

\section{Proof of Corollary~\ref{cor:sampling_cor}}\label{app:proofs_35}
First note that the choice $\sigma_T \asymp \eps/\sqrt{d}$ implies that $\msf D_{\rm BL}(p_{\sigma_T},p_X) \leq \eps$, as
\begin{align*}
    \msf D_{\rm BL}(p_{\sigma_T},p_X) \leq W_1(p_{\sigma_T},p_X) \leq W_2(p_{\sigma_T},p_X) \leq \sigma_T\sqrt d\,,
\end{align*}
where the last inequality follows from a standard coupling argument, where (for $p \in \{1,2\}$) $W_p$ are the $p$-Wasserstein distances. By the triangle inequality, we then have
\begin{align*}
    \msf D_{\rm BL}(\widehat p_T, p_X) \leq \msf D_{\rm BL}(p_{\sigma_T},p_X)  + \msf D_{\rm BL}(p_{\sigma_T},\widehat p_T) \leq \eps + \msf D_{\rm BL}(p_{\sigma_T},\widehat p_T) \,.
\end{align*}
To bound the remaining term, note that by Pinsker's inequality (as the total variation distance is an upper bound on $\msf D_{\rm BL}$), it suffices to obtain the bound 
\begin{align*}
    {\rm KL}(p_{\sigma_T}\|\widehat p_T)\lesssim \tilde\eps_{\rm BD}^2 + \eps^2\,.
\end{align*}
To this end, by~\eqref{eq:girsanov_full_bd} and \cref{thm:disc_error},
\begin{align*}
    \KL(p_{\sigma_T} \| \widehat p_T)
    &\lesssim \frac{R^2}{\sigma_0^2}
    + \tepsBD^2 + \bigl(\frac{\msf k^3}{d} + \frac{\msf k^5}{d^2}\bigr)\, T + \msf k h\,T\,,
\end{align*}
where we already used the choice $a_t = \tfrac12\,\sigma_t^2$. Noting that $T \asymp \log(\sigma_0/\sigma_T)$, we now choose $\sigma_0 \asymp \msf m_2/\varepsilon$, $h\asymp \varepsilon^2/(\msf k\log(\msf m_2\sqrt d/\varepsilon^2))$ and obtain
\begin{align*}
    \KL(p_{\sigma_T} \| \widehat p_T)
    &\lesssim \eps^2 + \tilde\eps^2_{\rm BD} +  \bigl(\frac{\msf k^3}{d} + \frac{\msf k^5}{d^2}\bigr) \log\frac{\msf m_2\sqrt{d}}{\eps^2}\,.
\end{align*}
The result concludes by invoking the remaining conditions.

\section{Proofs for Section~\ref{sec:prior}}\label{app:prior}
\subsection{Proof of Theorem~\ref{thm:score_error}}
By the definition of $\epsBD^2$ and from~\eqref{eq:excess_risk},
\begin{align*}
    \epsBD^2 = \int_0^T \frac{1}{a_t}\, \|\widehat f_\theta - f^\star\|_{L^2(p_{\sigma_t})}^2\dd t\,, \qquad \cE(\widehat f_\theta) = \int \|\widehat f_\theta - f^\star\|_{L^2(p_\sigma)}^2\dd\prior(\sigma)\,.
\end{align*}
Let us switch notation to $a(t) \defeq a_t$, $\sigma(t) \defeq \sigma_t$, and we perform the change of variables $\omega = \sigma(t)$ so that $\dd\omega = \dot\sigma(t)\dd t$.
Then, for $\eps(\sigma) \defeq \|\widehat f_\theta - f^\star\|_{L^2(p_\sigma)}$, the integral on the left can be written
\begin{align*}
    \epsBD^2
    &= \int_0^T \frac{\eps(\sigma(t))^2}{a(t)}\,\frac{1}{\dot\sigma(t)}\,\dot\sigma(t)\dd t
    = \int_{\sigma(0)}^{\sigma(T)} \frac{\varepsilon(\omega)^2}{a(\sigma^{-1}(\omega))\,\dot\sigma(\sigma^{-1}(\omega))}\dd\omega\,.
\end{align*}
Next, we specialize to the noise schedule with $a(t) = \ta \sigma(t)^2$, $\dot \sigma(t) = -(1-\ta)\,\sigma(t)$.
This yields
\begin{align*}
    \epsBD^2
    &= -\int_{\sigma(0)}^{\sigma(T)} \frac{\eps(\omega)^2}{\ta\, \omega^2\,(1-\ta)\,\omega}\,\dd\omega
    = \int_{\sigma(T)}^{\sigma(0)} \frac{1}{\ta\,(1-\ta)\, \omega^3\,\prior(\omega)}\,\eps(\omega)^2\dd\prior(\omega) \\
    &\le \Bigl(\max_{\omega \in [\sigma(T), \sigma(0)]} \frac{1}{\ta\, (1-\ta)\, \omega^3\, \prior(\omega)}\Bigr)\,\cE(\widehat f_\theta)\,.
\end{align*}
When $\prior(\omega) \propto \omega^{-3}$ on $[\sigma_T, \sigma_0]$, then $\epsBD^2 \propto \cE(\widehat f_\theta)$.

\section{Technical results}

\subsection{Calculations based on Tweedie's formula}

\begin{lemma}\label{lem:tweedie_formula}
The following hold:
\begin{align}
\omega^2\, \nabla \log p_\omega(y) &= f_\omega^\star(y) - y\,, \\
\omega^2\, \nabla^2 \log p_\omega(y) &= \omega^{-2}\, C_\omega(y) - I\,,\\
\omega^2\, \nabla \tr C_\omega(y)&=W_\omega(y) - 2C_\omega(y)\,(f_\omega^\star(y) - y)  \,. 
\end{align}
\end{lemma}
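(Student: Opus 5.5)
We prove Lemma~\ref{lem:tweedie_formula} by differentiating the Gaussian convolution under the integral, so that every derivative of $\log p_\omega$ becomes a conditional moment under $q_\omega$. Write
\[
p_\omega(y) = (2\pi\omega^2)^{-d/2}\,\E_{X\sim\pdata}\exp\bigl(-\|X-y\|^2/(2\omega^2)\bigr),
\]
and note that the posterior of $X$ given $Y=y$ has density proportional to $\exp(-\|x-y\|^2/(2\omega^2))\,\pdata(x)$. Let $\E_y$, $\mathrm{Cov}_y$ denote conditional moments under $q_\omega(\cdot\mid Y=y)$.

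The first identity is Tweedie's formula~\eqref{eq:tweedie}. Differentiating under the expectation gives $\nabla p_\omega(y) = p_\omega(y)\,\E_y[(X-y)/\omega^2]$. Dividing by $p_\omega(y)$ and multiplying by $\omega^2$ yields $\omega^2\nabla\log p_\omega(y) = f^\star_\omega(y)-y$.

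For the second identity, we differentiate $f^\star_\omega(y)$ in $y$. The key fact is that for any test function $g(x)$,
\[
\nabla_y \E_y[g(X)] = \omega^{-2}\,\mathrm{Cov}_y\bigl(g(X),X\bigr),
\]
because the $y$-dependent part of the log posterior weight, $\langle x,y\rangle/\omega^2$, is linear in $x$. The terms $-\|y\|^2/(2\omega^2)$ and the normalization cancel out of the covariance. Taking $g(x)=x$ gives $\nabla f^\star_\omega = \omega^{-2}C_\omega$, so $\omega^2\nabla^2\log p_\omega = \nabla(f^\star_\omega - y) = \omega^{-2}C_\omega - I$.

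The third identity carries the only real bookkeeping, because $W_\omega$ has an explicit $y$ in $\|X-y\|^2$. Start from
\[
\tr C_\omega(y) = \E_y\|X\|^2 - \|f^\star_\omega(y)\|^2 .
\]
The fact above gives $\nabla_y\E_y\|X\|^2 = \omega^{-2}\mathrm{Cov}_y(X,\|X\|^2)$. The chain rule with $\nabla f^\star_\omega = \omega^{-2}C_\omega$ gives $\nabla\|f^\star_\omega\|^2 = 2\omega^{-2}C_\omega f^\star_\omega$. Hence
\[
\omega^2\nabla\tr C_\omega = \mathrm{Cov}_y(X,\|X\|^2) - 2C_\omega f^\star_\omega .
\]
Now expand $\|X-y\|^2 = \|X\|^2 - 2\langle X,y\rangle + \|y\|^2$. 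Since $y$ is fixed under the conditional law, $\mathrm{Cov}_y(X,\|y\|^2)=0$ and $\mathrm{Cov}_y(X,\langle X,y\rangle) = C_\omega y$. Therefore
\[
W_\omega(y) = \mathrm{Cov}_y(X,\|X\|^2) - 2C_\omega y .
\]
Substituting $\mathrm{Cov}_y(X,\|X\|^2) = W_\omega + 2C_\omega y$ gives
\[
\omega^2\nabla\tr C_\omega = W_\omega - 2C_\omega(f^\star_\omega - y),
\]
which is the claim.

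The only obstacle is justifying differentiation under the expectation. Under \textbf{(A1)}, the Gaussian kernel with fixed $\omega>0$ has all moments locally uniformly in $y$, so dominated convergence applies.
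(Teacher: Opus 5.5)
Your proof is correct and is essentially the paper's argument. Both rest on the posterior being an exponential family in $y$, i.e.\ $\nabla_y \log q_\omega(x|y) = \omega^{-2}(x - f^\star_\omega(y))$, which is your identity $\nabla_y \E_y[g(X)] = \omega^{-2}\mathrm{Cov}_y(g(X),X)$. The paper centers at $f^\star_\omega(y)$, so differentiating $f^\star_\omega$ inside $\E_y\|X - f^\star_\omega(y)\|^2$ contributes nothing and both sides reduce to the third central moment $\E_y[(X-f^\star_\omega)\|X-f^\star_\omega\|^2]$. You instead use uncentered moments and expand $\|X-y\|^2$ about the origin, and you also verify the first two identities, which the paper treats as standard.
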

\begin{proof}
We only prove the third equality, as it is the least standard. Also, we will temporarily drop the conditioning variable and write $\E_\omega$ as shorthand for $\E_{q_\omega(\cdot|y)}$. 

We first notice that
\begin{align*}
    \E_\omega\|X-y\|^2 = \tr C_\omega(y) + \|f_\omega^\star(y) - y\|^2\,.
\end{align*}
Thus the expression for $W_\omega(y)$ simplifies to
\begin{align}\label{eq:W_expr}
    W_\omega(y)
    &=  
    \E_\omega[(X - f_\omega^\star(y))\,\|X-f_\omega^\star(y)\|^2] + 2 C_\omega(y)\,(f_\omega^\star(y) - y)\,.
\end{align}
We now want to show
\begin{align*}
    \nabla \tr C_\omega(y) = \omega^{-2}\,\E_\omega[(X-f_\omega^\star(y))\,\|X-f_\omega^\star(y)\|^2]\,.
\end{align*}
To this end, we compute (here note the gradients are with respect to $y$)
\begin{align*}
    \nabla_y \tr C_\omega(y) &= \nabla_y \int \|x-f_\omega^\star(y)\|^2 \dd q_\omega(x|y) \\
    &= \int 2\,\nabla_y f_\omega^\star(y)\,(x - f_\omega^\star(y)) \dd q_\omega(x|y) \\
    &\qquad{} + \int \|x-f_\omega^\star(y)\|^2\, \bigl(\nabla_y \log q_\omega(x|y)\bigr) \dd q_\omega(x|y)\\
    &= \int \|x-f_\omega^\star(y)\|^2\, \bigl(\nabla_y \log q_\omega(x|y)\bigr)\dd q_\omega(x|y)\,. 
\end{align*}
And finally we complete the claim by computing the following:
\begin{align*}
    \nabla_y \log q_\omega(x|y) &= \nabla_y \log p_X(x) - \frac{1}{2\omega^2}\,\nabla_y\|x-y\|^2 - \nabla \log Z_\omega(y) \\
    &= 0 + \frac{1}{\omega^2}\,(x-y) - \nabla_y \log \int p_X(x)\exp\bigl(-\frac{1}{2\omega^2}\,\|x-y\|^2\bigr) \dd x \\
    &= \frac{1}{\omega^2}\,(x-y) - \int \frac{\omega^{-2}\,(x-y)\, p_X(x)\exp\bigl(-\frac{1}{2\omega^2}\,\|x-y\|^2\bigr)}{Z_\omega(y)} \dd x\\
    &= \frac{1}{\omega^2}\,(x - y - f_\omega^\star(y) + y) = \frac{1}{\omega^2}\,(x-f_\omega^\star(y))\,,
\end{align*}
where $Z_\omega(y)$ is the normalizing constant of $q_\omega(\cdot|y)$.
\end{proof}

\subsection{Tools for proving the main results}

Letting $\Ndata(r)\defeq N(\cX; r, \norm{\cdot})$ be the covering number of $\cX$ under the Euclidean norm, recall that $$\dstar=1+\log \Ndata\bigl(\tfrac{\sigma_T^2}{\sigma_0\sqrt{d}}\bigr)\,.$$
We use $q_\sigma(\cdot|y)$ to denote the posterior of $X \sim p_X$ given $Y \sim \cN(X, \sigma^2 I)$, and
\begin{align*}
    D_{\sigma,\ell}(\bar x, y) \defeq \int \|x_0 - \bar x\|^\ell\dd q_{\sigma}(x_0|y)\,.
\end{align*}
Also, recall the definitions of $f_\sigma^\star$, $C_\sigma$, and $W_\sigma$ from~\eqref{eq:deff},~\eqref{eq:defC}, and~\eqref{eq:defW} respectively.

{The following proposition is adapted from~\citet[Proposition E.7]{Chen+26HighAcc}. To make the presentation more self-contained and because we have modified the statements, we provide a proof here.}

\begin{proposition}\label{prop:helper_prop}
For $\tsigma\geq0$, we write $\PP_{\tsigma}(\cdot)$ to be the probability measure under which $\xbar\sim \pdata$, $V\sim \cN(0,\tsigma^2\Id)$.

For any $\sigma, \tilde\sigma \in [\sigma_T,\sigma_0]$, $\ell \ge 1$, and $\delta\in(0,1)$, with probability at least $1-\delta$ under $\PP_{\tsigma}$,
\begin{align*}
    D_{\sigma,\ell}(\bar x, \bar x + V)
    &\lesssim_\ell \bigl\{(\sigma^2 + \tsigma^2)\,(\msf k + \log(1/\delta))\bigr\}^{\ell/2}\,, \\
    \sqrt{\langle V, C_\sigma(\bar x + V)\,V\rangle}
    &\lesssim (\sigma^2 + \tsigma^2)\,(\msf k + \log(1/\delta))\,,
\end{align*}
and
\begin{align}\label{eq:normdiff}
    |D_{\sigma,2}(\bar x+V,\bar x+V) - \tilde \sigma^2 d| \lesssim \tsigma^2 \sqrt{d\log(1/\delta)} +(\sigma^2 +\tsigma^2) \,(\mathsf k + \log(1/\delta))\,.
\end{align}
\end{proposition}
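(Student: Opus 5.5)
The plan is a covering argument over the support $\cX$ combined with Gaussian concentration, in the style of posterior-localization bounds. Fix a minimal cover of $\cX$ by $N_X(r_0)$ balls of radius $r_0 = \sigma_T^2/(\sigma_0\sqrt d)$, with centers $c_1,\dots,c_N$, so that $\log N \le \msf k$. For $y = \bar x + V$, the posterior density satisfies
\[
q_\sigma(x_0\mid y) \propto p_X(x_0)\exp\bigl(-\|x_0-y\|^2/(2\sigma^2)\bigr).
\]
Expanding,
\[
\|x_0-y\|^2-\|\bar x - y\|^2 = \|x_0-\bar x\|^2 - 2\langle x_0-\bar x, V\rangle .
\]
The first step is to show that, on a high-probability event for $V$, simultaneously for all cover centers $c_j$,
\[
\langle c_j-\bar x, V\rangle \le \tsigma\,\|c_j-\bar x\|\sqrt{2(\msf k+\log(1/\delta))}.
\]
This follows from a union bound over $N$ one-dimensional Gaussians. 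Passing from centers to arbitrary $x_0$ in a ball costs $r_0\|V\|\lesssim r_0\tsigma\sqrt d \le \sigma_T^2$, after also conditioning on $\|V\|\lesssim \tsigma\sqrt d$; this is exactly why the scale $r_0$ is chosen. One also has to handle that $\bar x$ itself is random. I would condition on $\bar x$, noting that the cover is fixed in advance, so the union bound applies conditionally.

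On this event, the posterior weight of the region $\{\|x_0-\bar x\| \ge s\}$ relative to the weight near $\bar x$ is bounded by
\[
\exp\Bigl(-\frac{s^2 - 2\tsigma s\sqrt{2\msf k_\delta} - O(\sigma_T^2)}{2\sigma^2}\Bigr).
\]
The weight near $\bar x$ must be lower bounded by the $p_X$-mass of the ball containing $\bar x$. Here I would use the standard trick: with probability $1-\delta$ over $\bar x\sim p_X$, the cover cell containing $\bar x$ has mass at least $\delta/N$, contributing $\log(N/\delta)\le \msf k_\delta$ to the exponent. Combining, the tail satisfies
\[
q_\sigma\bigl(\|x_0-\bar x\|\ge s \mid y\bigr) \lesssim \exp\bigl(-c s^2/\sigma^2\bigr) \quad\text{once } s^2 \gtrsim (\sigma^2+\tsigma^2)\msf k_\delta .
\]
Integrating $s^{\ell-1}$ against this tail gives the moment bound on $D_{\sigma,\ell}$. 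This mass-lower-bound and tail step is the main obstacle, since the constants in the cross term and the $\sigma$ versus $\tsigma$ mismatch must be absorbed without picking up factors of $d$.

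For the second claim, write
\[
\langle V, C_\sigma(y) V\rangle = \operatorname{Var}_{q_\sigma}\bigl(\langle X, V\rangle \mid y\bigr) \le \E_{q_\sigma}\bigl[\langle X-\bar x, V\rangle^2\mid y\bigr].
\]
Using the same union-bound event (extended to centers), $|\langle x_0-\bar x,V\rangle|\lesssim \tsigma\|x_0-\bar x\|\sqrt{\msf k_\delta}+\sigma_T^2$. Combining with $D_{\sigma,2}\lesssim(\sigma^2+\tsigma^2)\msf k_\delta$ gives a bound of order $(\sigma^2+\tsigma^2)^2\msf k_\delta^2$, whose square root is the claim.

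For \eqref{eq:normdiff}, expand
\[
D_{\sigma,2}(y,y) = \|V\|^2 - 2\,\E\bigl[\langle X-\bar x, V\rangle\mid y\bigr] + D_{\sigma,2}(\bar x,y).
\]
By chi-square concentration (Laurent--Massart), $\bigl|\|V\|^2-\tsigma^2 d\bigr| \lesssim \tsigma^2\sqrt{d\log(1/\delta)}+\tsigma^2\log(1/\delta)$. The cross term is bounded by $\sqrt{\E\langle X-\bar x,V\rangle^2}$, controlled as in the second claim by $(\sigma^2+\tsigma^2)\msf k_\delta$. The last term is bounded by the first claim with $\ell=2$. Finally, a union bound over the $O(1)$ events (adjusting $\delta$) finishes the proof.
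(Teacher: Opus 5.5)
Your proposal is correct and follows essentially the paper's proof: the same $r_0$-cover, a union bound controlling $\langle V,x_0-\bar x\rangle$ up to an $r_0\|V\|$ error, the restriction to cells of $p_X$-mass at least $\delta/N$ to lower bound the normalizer, and the identical expansions for $\langle V,C_\sigma V\rangle$ and $D_{\sigma,2}(y,y)$. The union bound differs slightly: the paper unions over the $N^2$ differences $z_i-z_j$ to get an event not depending on $\bar x$, whereas you condition on $\bar x$ and union over $N$ directions, which is equally valid. One small point to tidy: $\|V\|\lesssim\tsigma\sqrt d$ only holds when $\log(1/\delta)\lesssim d$, so in general you should carry $\|V\|\le\tsigma(\sqrt d+2\sqrt{\log(2/\delta)})$; this adds an $O(\sqrt{\log(1/\delta)})$ term to the exponent in the normalizer lower bound (the paper's factor $e^{-4-4\sqrt{\log(2/\delta)}}$), which is harmlessly absorbed into $\msf k+\log(1/\delta)$.
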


\begin{proof}
Let $r\defeq \frac{\sigma_T^2}{\sigma_0\sqrt{d}}$, and fix an $r$-covering of $\cX$, denoted $\cX \subseteq \bigcup_{i=1}^N B_i$, where $N\defeq N_X(r)$ and $B_i \defeq B(z_i,r)$ for $i = 1,\ldots,N$. Additionally, let $\cI \defeq \{i\in[N]: \pbar(B_i)\geq \alpha\}$ for some $\alpha > 0$ to be chosen later, and let $\cX_+ \defeq \bigcup_{i\in\cI} B_i$.

\textbf{Claim 1.} Define $M_1\defeq \tsigma\,(\sqrt{d}+2\sqrt{\log(2/\delta)})$, $M_2\defeq \tsigma \sqrt{2\log(4N/\delta)}$, and
\begin{align*}
    \cV \defeq \{V: \norm{V}\leq M_1,\, \abs{\langle V,x-x'\rangle }\leq M_2\,\|x-x'\|+2r\,(M_1+M_2),~\forall x,x'\in\cXbar\}\,.
\end{align*}
Then for any $\tsigma\ge 0$, it holds that $\PP_{\tilde\sigma}(V\not \in \cV)\leq \delta$.

\textbf{Proof of Claim 1.}
By Gaussian concentration, it is clear that for $t\geq 0$, $$\PP_{\tilde\sigma}(\norm{V}\geq \tsigma\,(\sqrt{d}+2\sqrt{t}))\leq e^{-t}\,.$$ 
Further, for any $i,j\in[N]$, it holds that $\PP_{\tilde\sigma}(\abs{\langle V,z_i-z_j\rangle}\leq \tsigma\norm{z_i-z_j}\sqrt{2t})\leq 2e^{-t}$ for $t\geq 0$. Hence, by a union bound, we can show that $\PP_{\tilde\sigma}(V\not \in\cV_0)\leq \delta$, where
\begin{align*}
    \cV_0=\{V: \norm{V}\leq M_1,\, \abs{\langle V,z_i-z_j\rangle}\leq M_2\,\|z_i-z_j\|,~\forall i,j\in[N]\}\,.
\end{align*}
Note that for any $V\in\cV_0$, it is clear that for any $i,j\in[N]$, $x\in B_i$, $x'\in B_j$, we can bound 
\begin{align*}
    \abs{\langle V,x-x'\rangle}
    &\leq \abs{\langle V,z_i-z_j\rangle}+2r\norm{V}
    \\
    &\leq M_2\norm{z_i-z_j}+2rM_1 
    \\
    &\leq M_2\,\|x-x'\|+2r\,(M_1+M_2)\,.
\end{align*}
This implies that $\cV_0\subseteq \cV$, and the proof of Claim 1 is hence completed.
\qed

Note that for any $z\in\RR^d$ and $M > 0$, $\ell \ge 1$,
\begin{align*}
    D_{\sigma,\ell}(\bar x, y)
    &\defeq \int \|x_0 - \bar x\|^\ell\dd q_{\sigma}(x_0|y)
    = \frac{\displaystyle \int \norm{x_0-\bar x}^\ell\exp\bigl(-\frac{\norm{y-x_0}^2}{2\sigma^2}\bigr)\dd p_X(x_0)}{\displaystyle \int \exp\bigl(-\frac{\norm{y-x_0}^2}{2\sigma^2}\bigr) \dd p_X(x_0)} \\[0.25em]
    &\lesssim_\ell M^\ell +\frac{\displaystyle \int (\norm{x_0-\bar x}-M)^{\ell}_+\exp\bigl(-\frac{\norm{y-x_0}^2}{2\sigma^2}\bigr) \dd p_X(x_0)}{\displaystyle \int \exp\bigl(-\frac{\norm{y-x_0}^2}{2\sigma^2}\bigr) \dd p_X(x_0)}\,.
\end{align*}
Here and throughout the proof, $\lesssim_\ell$ indicates that the bound holds up to a constant depending only on $\ell$.
Now specialize the above to $y = \bar x + V$ for $\xbar\in\cX_+$ and $V\in\cV$, and fix $M\geq 8\,(M_2 + \sqrt{r\,(M_1 + M_2)})$. Together, we have that
\begin{align*}
    &D_{\sigma,\ell}(\bar x, \bar x + V)\\
    &\quad \lesssim_\ell M^\ell +\frac{\displaystyle \int (\norm{x_0-\bar x}-M)^{\ell}_+\exp\bigl(-\frac{\norm{\bar x + V-x_0}^2}{2\sigma^2}\bigr) \dd p_X(x_0)}{\displaystyle\int \exp\bigl(-\frac{\norm{\bar x  +V-x_0}^2}{2\sigma^2}\bigr) \dd p_X(x_0)} \\[0.25em]
    &\quad= M^\ell +\frac{\displaystyle\int (\norm{x_0-\bar x}-M)^{\ell}_+\exp\bigl(-\frac{\norm{\bar x -x_0}^2 + 2\,\langle V,\bar x - x_0\rangle}{2\sigma^2}\bigr) \dd p_X(x_0)}{\displaystyle\int \exp\bigl(-\frac{\norm{\bar x -x_0}^2 + 2\,\langle V,\bar x -x_0\rangle}{2\sigma^2}\bigr) \dd p_X(x_0)}\,.
\end{align*}
Since $V\in\cV$, for any $\xbar,x_0\in\cXbar$ with $\norm{\xbar-x_0}\geq M$, we can bound
\begin{align*}
    \abs{\tri{V,\xbar-x_0}}
    \leq M_2\norm{\xbar-x_0}+2r\,(M_1+M_2)\leq \frac{1}{4}\norm{\xbar-x_0}^2,
\end{align*}
and hence in this case $\norm{\xbar-x_0}^2+2\,\tri{V,\xbar-x_0}\geq \frac{1}{2}\norm{\xbar-x_0}^2$. This immediately implies that 
\begin{align*}
    &\int \prn{\norm{x_0-\xbar}-M}_+^\ell \exp\bigl(-\frac{\norm{\xbar-x_0}^2+2\,\tri{V,\xbar-x_0}}{2\sigma^2}\bigr)\dd p_X(x_0) \\[0.25em]
    &\qquad\leq \int \prn{\norm{x_0-\xbar}-M}_+^\ell \exp\bigl(-\frac{\norm{\xbar-x_0}^2}{4\sigma^2}\bigr) \dd p_X(x_0) \\
    &\qquad\leq M^\ell\exp\bigl(-\frac{M^2}{4\sigma^2}\bigr)\,,
\end{align*}
where we use the fact that $u\mapsto u^{\ell/2} e^{-u}$ is decreasing for $u\geq \ell/2$, and assuming further that $M^2 \ge 2\ell\sigma^2$.

On the other hand, suppose that $\xbar\in B_i$ such that $\pbar(B_i)\geq \alpha$. Note that for any $x\in B_i$, we can bound
\begin{align*}
    \norm{\xbar-x_0}^2+2\,\tri{V,\xbar-x_0}
    &\leq 4r^2+4r\norm{V}
    \le 4r^2 + 4\tilde\sigma r\,(\sqrt d + 2\sqrt{\log(2/\delta)}) \\
    &\le 8\sigma^2\,(1+\sqrt{\log(2/\delta)})\,,
\end{align*}
by the choice of $r$.
This implies that
\begin{align*}
    \int \exp\bigl(-\frac{\norm{\xbar-x_0}^2+2\,\tri{V,\xbar-x_0}}{2\sigma^2}\bigr) \dd p_X(x_0)
    &\geq e^{-4-4\sqrt{\log(2/\delta)}}\,p_X(B_i)
    \geq c_0\alpha \delta\,,
\end{align*}
where $c_0>0$ is an absolute constant.

Combining the above bounds, we can conclude that as long as $\xbar\in\cX_+$ and $V\in\cV$, $M= 8\,(M_2+ \sqrt{r\,(M_1 + M_2)})+\sqrt{2\ell}\sigma + 2\sigma\sqrt{\log(1/(c_0\alpha\delta))}$, it holds that
\begin{align*}
    D_{\sigma,\ell}(\bar x,\bar x + V)
    \lesssim_\ell M^\ell+\frac{M^\ell}{c_0\alpha\delta}\exp\bigl(-\frac{M^2}{4\sigma^2}\bigr)\lesssim M^\ell\,.
\end{align*}
Note that $\PP_{\xbar\sim \pbar}(\xbar\not\in \cX_+)\leq \sum_{i\notin \cI} \pbar(B_i)\leq N\alpha=\delta$, where we choose $\alpha = \delta/N$. Hence, with probability at least $1-\delta$ under $\PP_{\tsigma}$,
\begin{align*}
    D_{\sigma,\ell}(\bar x, \bar x + V) \lesssim_\ell M^\ell\,.
\end{align*}

Similarly, we note that for $V\in\cV$,
\begin{align*}
    \langle V, C_\sigma(\xbar+V)\,V\rangle
    &\le \int \langle V, x_0 - \bar x\rangle^2\dd q_\sigma(x_0|\bar x + V) \\
    &\le \int (M_2\norm{x_0-\xbar}+2r\,(M_1+M_2))^2\dd q_\sigma(x_0|\xbar+V) \\
    \leq&~ 2M_2^2\,D_{\sigma,2}(\bar x, \bar x + V)+8r^2\,(M_1+M_2)^2\,.
\end{align*}
Our argument above then shows that with probability at least $1-\delta$ under $\PP_{\tsigma}$,
\begin{align*}
    \langle V, C_\sigma(\bar x + V)\,V\rangle \lesssim M^4\,.
\end{align*} 

For the final claim, note that 
\begin{align*}
    D_{\sigma,2}(y,y)
    &= \int\|y-x_0\|^2\dd q_\sigma(x_0|y) \\
    &= \|y-\bar x\|^2 + 2\,\Bigl\langle y-\bar x, \int (\bar x - x_0)\dd q_\sigma(x_0|y)\Bigr\rangle + D_{\sigma,2}(\bar x, y)\,.
\end{align*}
Setting $y=\bar x + V$,
\begin{align*}
    D_{\sigma,2}(\bar x +V,\bar x +V)
    &= \|V\|^2 - 2 \int \langle V, \bar x - x_0\rangle\dd q_\sigma(x_0|\bar x+V) + D_{\sigma,2}(\bar x, \bar x + V)\,.
\end{align*}
This readily yields, for $V \in \cV$,
\begin{align*}
    \bigl\lvert D_{\sigma,2}(\bar x + V,\bar x + V) - \|V\|^2\bigr\rvert
    &\lesssim M_2\int\|\bar x - x_0\|\dd q_\sigma(x_0|\bar x + V) + r\,(M_1 + M_2) + M^2 \\
    &= M_2\,D_{\sigma,1}(\bar x,\bar x + V) + r\,(M_1 + M_2) + M^2
    \lesssim M^2\,.
\end{align*}
Together with Gaussian concentration, the claim is complete.
\end{proof}

As a corollary, we obtain the following inequalities.

\begin{corollary}\label{cor:helper_cor}
If $X_t \sim p_{\sigma_t}$ and $\sigma\in [\sigma_T,\sigma_0]$, then with probability at least $1-\delta$,
\begin{align}
    \tr C_{\sigma}(X_t) &\lesssim (\sigma_t^2+\sigma^2)\, (\mathsf k + \log(1/\delta))\,, \nonumber\\
    \|W_\sigma(X_t)\|
    &\lesssim \bigl[(\sigma_t^2 + \sigma^2)\,(\msf k + \log(1/\delta))\bigr]^{3/2}\,. \nonumber
\end{align}    
\end{corollary}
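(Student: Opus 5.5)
We derive both bounds from Proposition~\ref{prop:helper_prop}, applied with $\tilde\sigma = \sigma_t$. The point is that $X_t \sim p_{\sigma_t}$ can be written as $X_t = \bar x + V$ with $\bar x \sim \pdata$ and $V\sim\cN(0,\sigma_t^2 I)$, which is exactly the law $\PP_{\tilde\sigma}$ there. Write $s^2 \defeq \sigma_t^2+\sigma^2$ and $\msf k_\delta = \msf k+\log(1/\delta)$. We will intersect a constant number of high-probability events and rescale $\delta$, which only changes constants.

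For the trace bound, we use the variance-minimizing property of the posterior mean:
\begin{align*}
\tr C_\sigma(X_t) = \int \|x_0 - f^\star_\sigma(X_t)\|^2\dd q_\sigma(x_0|X_t) \le \int \|x_0-\bar x\|^2 \dd q_\sigma(x_0|X_t) = D_{\sigma,2}(\bar x,\bar x+V).
\end{align*}
Proposition~\ref{prop:helper_prop} with $\ell=2$ then gives $\tr C_\sigma(X_t)\lesssim s^2\msf k_\delta$ directly.

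For $W_\sigma$, we use the decomposition \eqref{eq:W_expr}. Writing $m \defeq f^\star_\sigma(X_t)$, it reads
\begin{align*}
W_\sigma(X_t) = \E[(X-m)\|X-m\|^2] + 2C_\sigma(X_t)(m-X_t),
\end{align*}
and we bound the two pieces separately.

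\emph{First term.} By Jensen, $\|\E[(X-m)\|X-m\|^2]\| \le \E\|X-m\|^3$. Since $\|X-m\|\le \|X-\bar x\|+\|m-\bar x\|$ and $\|m-\bar x\|\le D_{\sigma,1}(\bar x,\bar x+V)$, this is $\lesssim D_{\sigma,3}(\bar x,\bar x+V)$. Proposition~\ref{prop:helper_prop} with $\ell = 3$ bounds this by $(s^2\msf k_\delta)^{3/2}$.

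\emph{Second term.} Split $m - X_t = (m-\bar x) - V$.
\begin{itemize}
\item For the $(m-\bar x)$ part, use $\|C_\sigma\|_{\rm op}\le \tr C_\sigma\lesssim s^2\msf k_\delta$ together with $\|m-\bar x\|\lesssim (s^2\msf k_\delta)^{1/2}$. This gives the right order.
\item The $V$ part is the main obstacle, since $\|V\|\approx\sigma_t\sqrt d$ and a naive operator-norm bound would bring in $\sqrt d$.
\end{itemize}
To handle the $V$ part, write $C_\sigma V$ through its square root and apply Cauchy--Schwarz:
\begin{align*}
\|C_\sigma V\| \le \|C_\sigma^{1/2}\|_{\rm op}\,\|C_\sigma^{1/2}V\| \le (\tr C_\sigma)^{1/2}\sqrt{\langle V, C_\sigma V\rangle}.
\end{align*}
The first factor is $\lesssim (s^2\msf k_\delta)^{1/2}$ by the trace bound. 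The second is $\lesssim s^2\msf k_\delta$ by the quadratic-form bound in Proposition~\ref{prop:helper_prop}. Their product is $(s^2\msf k_\delta)^{3/2}$, with no $d$-dependence.

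Combining the two terms yields $\|W_\sigma(X_t)\|\lesssim [(\sigma_t^2+\sigma^2)\msf k_\delta]^{3/2}$ on the intersected event, which has probability at least $1-O(\delta)$.
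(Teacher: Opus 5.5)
Your proof is correct and follows the paper's argument essentially step by step. You use the same trace bound via minimality of the posterior mean, the same decomposition \eqref{eq:W_expr} with the split $f^\star_\sigma(X_t)-X_t=(f^\star_\sigma(X_t)-\bar x)-V$, and the same Cauchy--Schwarz step $\|C_\sigma V\|\le\sqrt{\tr C_\sigma\,\langle V,C_\sigma V\rangle}$ combined with the quadratic-form bound of Proposition~\ref{prop:helper_prop}, which avoids the $\sqrt d$ factor.
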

\begin{proof}
    We apply Proposition~\ref{prop:helper_prop} with $\tilde \sigma = \sigma_t$, $\bar x = X_0$, and $V = X_t - X_0$.

    For the first statement, we note that
    \begin{align*}
        \tr C_\sigma(X_t)
        &= \int \|x_0 - f^\star_\sigma(X_t)\|^2\dd q_{\sigma}(x_0|X_t) \\
        &\le \int \|x_0 - X_0\|^2\dd q_{\sigma}(x_0|X_t)
        = D_{\sigma,2}(X_0, X_0 + V)
    \end{align*}
    so the result follows from Proposition~\ref{prop:helper_prop}.

    For the second statement, recall from~\eqref{eq:W_expr} that
    \begin{align*}
        W_\sigma(X_t)
        &= \int (x_0 - f_\sigma^\star(X_t))\,\|x_0 - f_\sigma^\star(X_t)\|^2\dd q_{\sigma}(x_0|X_t) + 2C_\sigma(X_t)\,(f_\sigma^\star(X_t) - X_t)\,.
    \end{align*}
    Hence,
    \begin{align*}
        \|W_\sigma(X_t)\|
        &\le \int \|x_0 - f_\sigma^\star(X_t)\|^3\dd q_{\sigma}(x_0|X_t) + 2\,\| C_\sigma(X_t)\,(f_\sigma^\star(X_t) - X_t)\| \\
        &\le 8D_{\sigma,3}(X_0, X_0+V) + 2\tr C_\sigma(X_t)\,\|f_\sigma^\star(X_t) - X_0\| + 2\,\|C_\sigma(X_t)\,V\| \\
        &\le 8D_{\sigma,3}(X_0, X_0 + V) + 2\tr C_\sigma(X_t)\,D_{\sigma,1}(X_0, X_0 + V) \\
        &\qquad{} + 2\sqrt{\tr C_\sigma(X_t)\,\langle V, C_\sigma(X_t)\,V\rangle} \\
        &\lesssim \bigl[(\sigma_t^2 + \sigma^2)\,(\msf k + \log(1/\delta))\bigr]^{3/2}\,. \qedhere
    \end{align*}
\end{proof}

\section{Experimental details}
\subsection{Analytical blind denoiser}
\label{app:gaussian}
In this section, we detail the results shown in \Cref{ssec:analytic_denoiser}. Additionally, we present more examples of sampling from arbitrary mixture of high dimensional Gaussian embedded in low dimensions. 

\subsubsection{Optimal denoiser for Gaussian Mixture}
Take the true distribution to be a mixture of Gaussians
$$ p(x) = \frac{1}{K}\sum_{i=1}^K \mathcal{N}(x; m_i , \Sigma_0 ) $$
If we add centered Gaussian noise with variance $\sigma^2$ to the data ($x_\sigma = x + \sigma z$), the distribution of noisy images will be
$$ p(x_\sigma) = \frac{1}{K}\sum_{i=1}^K \mathcal{N}(x_\sigma; m_i , \Sigma_0 + \sigma^2 I  ) $$
The score of this density is 
\begin{align}
    \nabla \log p(x_\sigma) =  - (\Sigma_0 + \sigma^2 I)^{-1} \sum_{i=1}^K \omega_i(x_\sigma)(x_\sigma - m_i) 
\end{align}
where $$\omega_i(x_\sigma) = \frac{ \mathcal{N}(x_\sigma; m_i , \Sigma_0 +  \sigma^2I)}{ \sum_{j=1}^K \mathcal{N}(x_\sigma; m_j , \Sigma_0 + \sigma^2I) }$$
So, the optimal denoiser for a noisy observation can be written in closed-form as  
\begin{align}
    f^{\star}(x_\sigma, \sigma, \{m_i\}_{i=1}^K, \Sigma_0) = x_\sigma - \sigma^2 \, (\Sigma_0 + \sigma^2 I)^{-1} \sum_{i=1}^K \omega_i(x_\sigma)\,(x_\sigma - m_i)\,.
\end{align}
To simulate the blind denoiser, we estimate the noise variance from a single noisy data point by maximizing the likelihood of noisy observation across $\log p_\sigma$. Thus, for a given noisy observation $y$, we compute $\hat{\sigma}$ such that
\begin{align*}
    \hat{\sigma} = \argmax_{\sigma} \log p_\sigma(y)\,.
\end{align*}
The \emph{blind} denoiser under maximum likelihood estimation (MLE) of the noise variance is therefore
\begin{align}
    f^{\dagger}(x_\sigma, \hat{\sigma}, \{m_i\}_{i=1}^K, \Sigma_0) = x_\sigma - \hat{\sigma}^2  \,(\Sigma_0 + \hat{\sigma}^2 I)^{-1} \sum_{i=1}^K \omega_i(x_\sigma)\,(x_\sigma - m_i)\,.
\end{align}
\begin{algorithm}[H]
  \caption{Sampling using an optimal blind denoiser with MLE noise variance}
  \label{alg:sampling-optimal-mle}
  \begin{algorithmic}
    \STATE {\bfseries Input:} Optimal denoiser $f^\dagger$, stepsize $h > 0$, diffusion coefficients $(a_t)_{t \in [0,T]}$, model parameters $\{m_i\}_{i=1}^K, \Sigma_0$, and $\sigma_0,\sigma_T > 0$
    \STATE {\bfseries Initialize} $X_0 \sim \cN(0, \sigma_0^2I)$, $k=0$
    \WHILE{ {\texttt{keepgoing} == \texttt{True}} }
    \STATE Compute $\hat{\sigma}_k = \text{argmax}_{\sigma} \log p_\sigma(X_{kh})$
    \STATE Compute $s_{k} = f^\dagger(X_{kh}, {\hat{\sigma}_k}, \{m_i\}_{i=1}^K, \Sigma_0) - X_{kh}$
    \IF{$\hat \sigma_k \leq \sigma_T$}
    \STATE \texttt{keepgoing} $\gets$ \texttt{False}
    \ELSE
    \STATE Update $X_{(k+1)h}\!\gets\! X_{kh} + hs_k +\sqrt{2}\int_{kh}^{(k+1)h}a_t \dd B_t$
    \ENDIF
    \STATE {Update $k \gets k+1$}
    \ENDWHILE
  \end{algorithmic}
\end{algorithm}

\subsubsection*{More examples}
\begin{figure}[H]
    \centering
   \includegraphics[width=.36\linewidth]{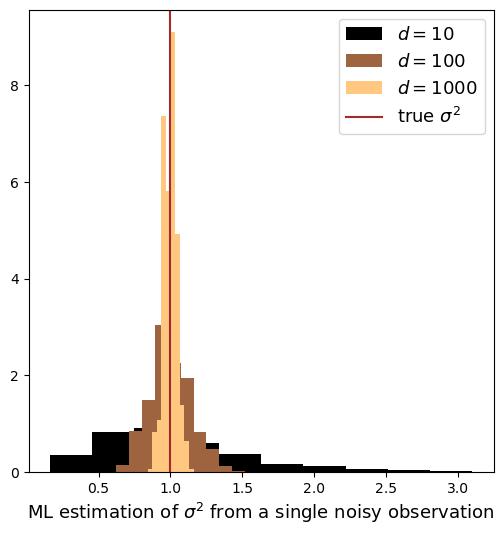}     
   \hfil
    \includegraphics[width=.39\linewidth]{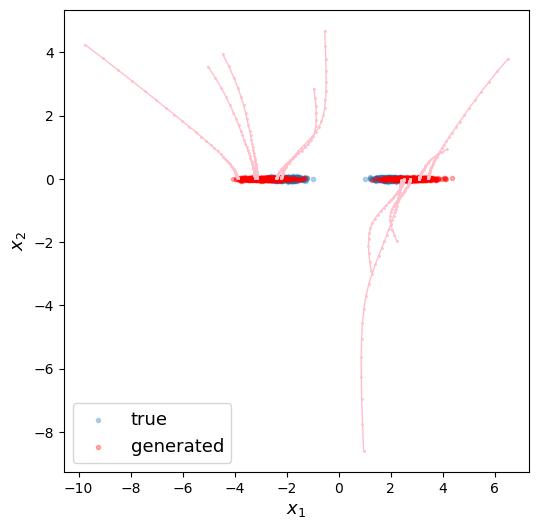}    
   \includegraphics[width=.36\linewidth]{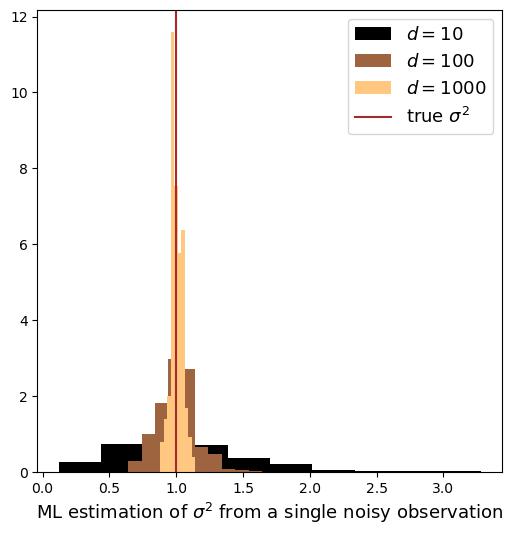}     
   \hfil
    \includegraphics[width=.39\linewidth]{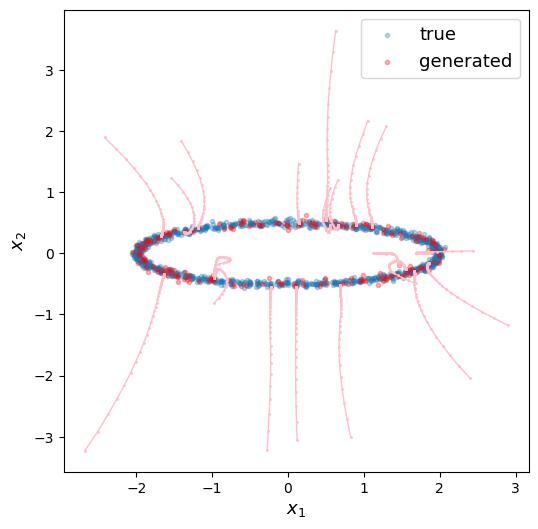}  
    
    \caption{{\textbf{Top:} Mixture of two Gaussians with covariances of rank 1}. Sampling is shown for $d = 1000$.  \textbf{Bottom:} Ellipse constructed as a mixture of Gaussians. Sampling is shown for $d = 100$.  
    }
    \label{fig:mixture_2Gaussian}
\end{figure}

\subsection{Blind denoisers trained on Gaussian and Gaussian mixture data}\label{app:gaussian_gaussian_mixture}
We present two sets of experiments on toy data that elucidate the impact of intrinsic dimensionality and blind denoising. In both cases, we use a $6$-layer MLP with width 512 and ReLU activations. We use the default AdamW optimizer in PyTorch, with learning rate equal to $10^{-3}$, and train for $50$k steps with batch size equal to 512. We take $\sigma_T = 0.01$ and $\sigma_0 = 10$. Additionally, $\prior$ was chosen to be a log-uniform prior on $[\sigma_T,\sigma_0]$.

\subsubsection*{Gaussian data}
We randomly generate a $\msf k \times \msf k$ covariance $\Sigma$ matrix, with $\msf k = 2$. We generate $n = 5000$ points and train two blind denoisers: $\widehat f_\theta^{(2)}$ and $\widehat f_\theta^{(100)}$. In the first scenario, the input dimension is the same as the intrinsic dimension of the data. In the latter, we pad the data with zeros until the data dimension is $d=100$, thus changing the input dimension of the network. 

Figure~\ref{fig:learning_gaussian_denoising} displays the output from Algorithm~\ref{alg:inf} using an exponential Euler integrator with $h = 0.5$ and $a_t = 0.5 \sigma_t^2$. We see that the generated samples for $\msf k = d = 2$ are heavily concentrated and do not accurately capture the training data. Moreover, the noise schedule is both too fast at the beginning and too slow at inference. In contrast, the blind denoiser trained with larger input dimension manages to both (i) generate viable samples, and (ii) obey the noise schedule from Proposition~\ref{prop:opt_schedule} exactly. 
\subsubsection*{Gaussian-mixture data}
We repeat the same experiment on $2$-component Gaussian mixture data using randomly generated means and covariances with equal weight with $\msf k = 2$. We use $n = 50 000$ data points and additionally incorporated weight-decay in the AdamW optimizer of value $5 \times 10^{-2}$. All other parameters remain the same, except now $d \in \{2,200\}$. The behavior is essentially identical to Figure~\ref{fig:learning_gaussian_denoising}, where low intrinsic and high ambient dimensionalities are crucial for blind denoisers to learn and sample from the data distribution.
\begin{figure}[H]
    \centering
   \includegraphics[width=.45\linewidth]{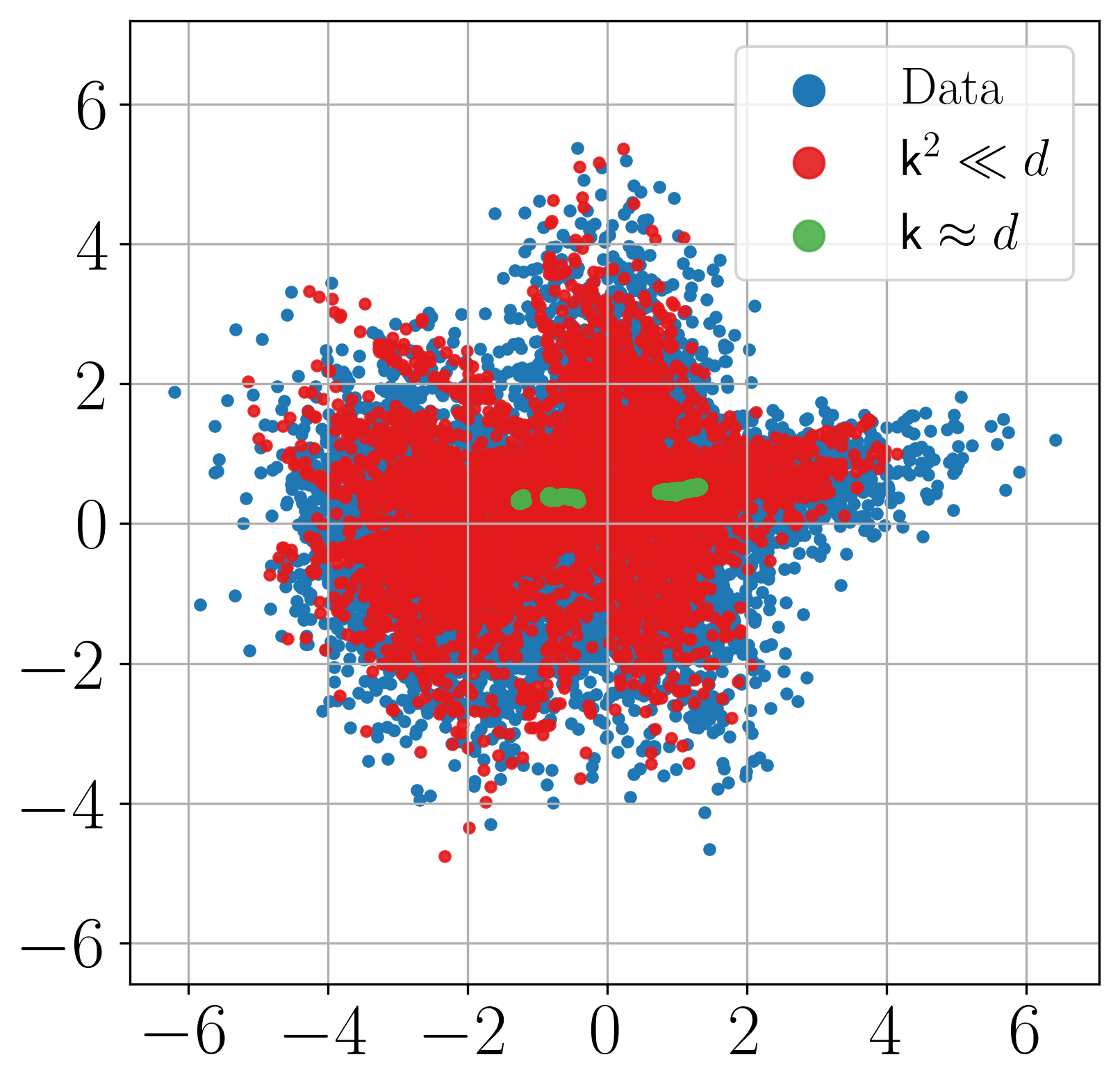}
\includegraphics[width=.45\linewidth]{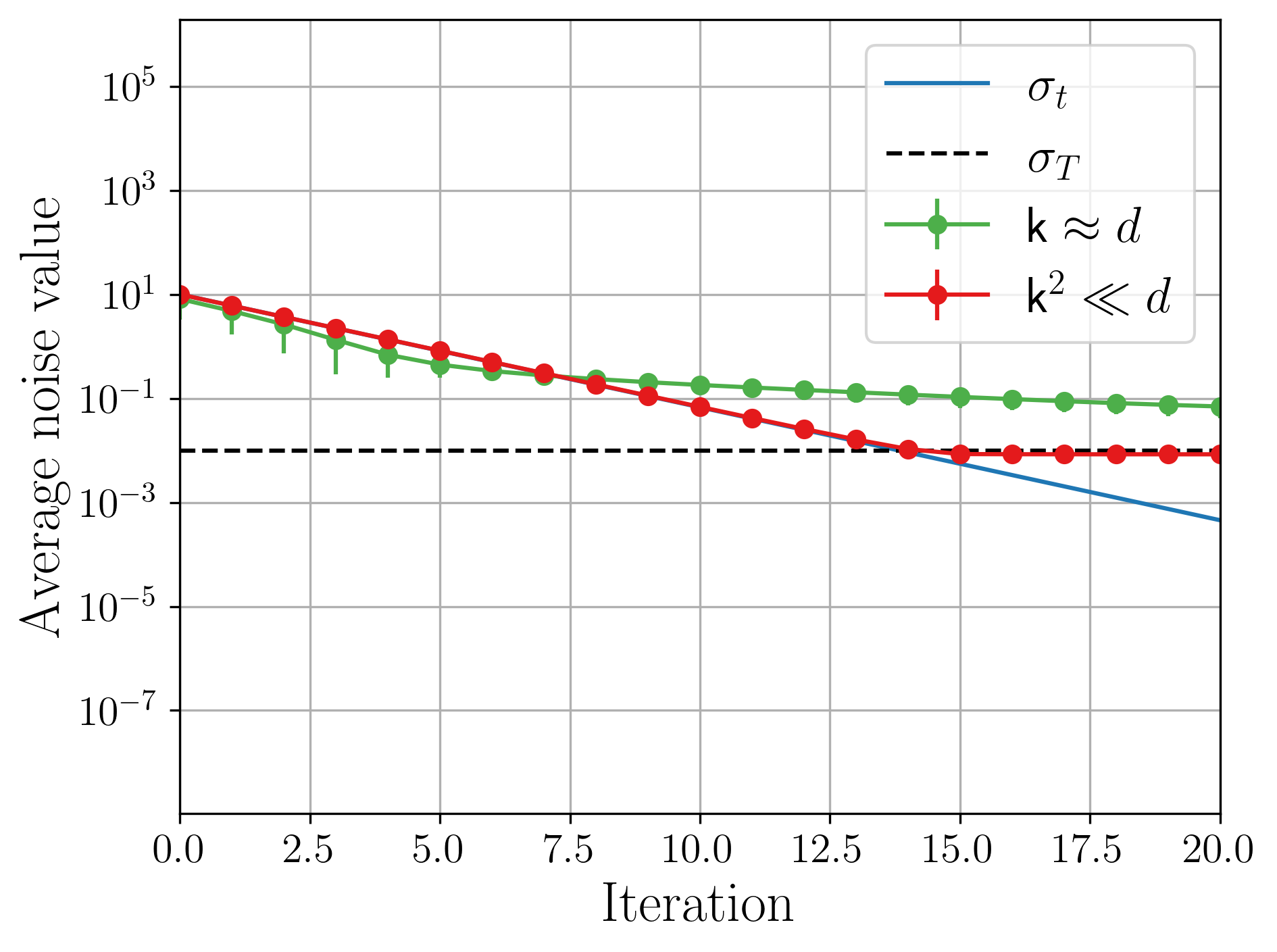}
    \caption{ Sampling performance of BDDMs trained on a Gaussian-mixture dataset with intrinsic dimension $\msf k =2$ and two different input dimensions $d \in \{2,200\}$.  Generated samples \textbf{(left)} and evolution of the estimated noise level, corresponding to Proposition~\ref{prop:opt_schedule} \textbf{(right)}. }
\label{fig:learning_gaussianmixture_denoising}
\end{figure}

\subsection{Photographic images}
\label{app:images}

\subsubsection*{Architecture} 
The U-Net is the most widely used architecture for denoising and score estimation. We adopt the vanilla U-Net \citep{ronneberger2015u}, composed of convolutional layers with ReLU nonlinearities and layer normalization, organized into downsampling and upsampling blocks. In more detail, there are 3 downsampling and 3 upsampling blocks with increasing number of channels [64, 128, 256]. The middle block has 512 channels. There are also 3 skip connections that carry details from each encoder to the corresponding decoder at the same scale. The number of layers in the encoding blocks is 2 and decoding blocks is 6. 

For non-blind models, we use the standard noise-level embedding: the noise level is mapped through a Fourier feature transform, processed by an MLP, and injected into all normalization layers via scale and shift parameters.

Our goal is to match architectural and training hyperparameters as closely as possible (except for noise conditioning versus blindness) to enable fair comparisons, so all models are trained from scratch. The architecture is intentionally small (\textbf{around 13 million parameters}) and simple (e.g., no attention or text conditioning) compared to commonly used models with hundreds of millions of parameters; the goal is not SOTA performance but a controlled comparison of blind versus non-blind models.

\subsubsection*{Training} 
Blind models are trained by minimizing the empirical loss in \eqref{eq:empirical_loss}, following \Cref{alg:train}, while non-blind models are trained using the noise-conditioned counterpart. The noise level $\sigma \in (0, 3]$ is sampled from a density proportional to $1/\sqrt{\sigma}$. We use a batch size of 512 for all models. The learning rate is 0.001 with a rate of decay of 100. We fix the total number of epochs (i.e., full pass over the dataset) to 1000. Training is done on four H100 NVIDIA GPUs per model, taking around 24 hours for CelebA and 34 hours for Bedroom datasets. 

We observe diminishing mean-square error improvements for widening range of $\sigma$ when the model is trained to predict the clean image; this generalization beyond the training noise range does not hold for models trained to predict the residual directly. This suggests that neural networks more readily represent constants (e.g., zero or the data mean) than the identity mapping.

\subsubsection*{Noise level estimator model}
We train a small network to estimate the noise level from a given noisy image. Given the simplicity of this task, our network consists of 4 convolutional layers with filter size of $3\times3$, and the number of channels in the intermediate layers is 64. To match the denoisers, we choose $\sigma \in (0,3]$. The trained model can estimate the noise level almost perfectly, as seen in Figure~\ref{fig:noise-estimator-model}.

\begin{figure}[H]
    \centering
    \includegraphics[width=.5\linewidth]{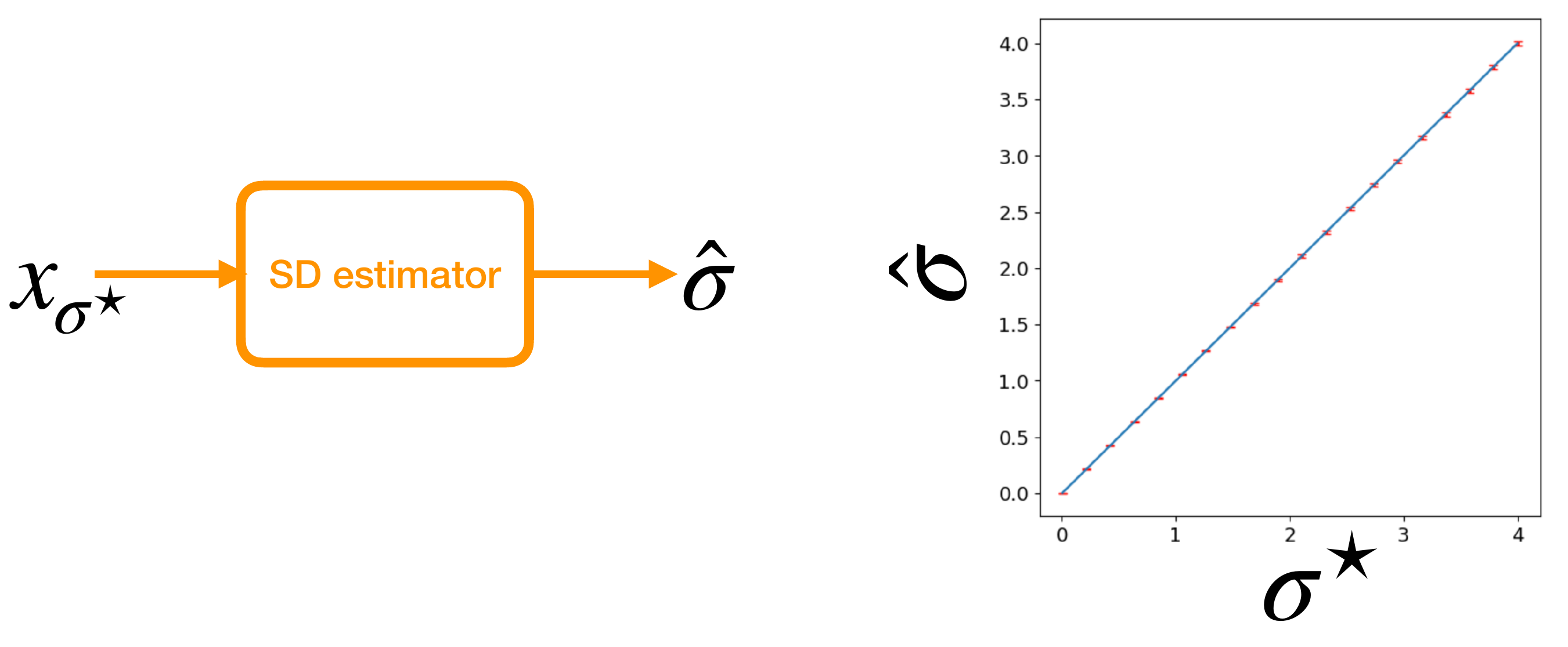}       
    \caption{A smaller neural network trained to estimate $\sigma$ from noisy image is very accurate. Error bars show two standard deviations. It is also worth noting that the model performs well beyond training range $\sigma \in (0,3]$. 
    }
    \label{fig:noise-estimator-model}
\end{figure}
\subsubsection*{More sampling examples}
\label{app:more_samples}

\begin{figure}[!htb]
    \centering
   \includegraphics[width=.9\linewidth]{figures/example_train_celeb.pdf}  
   \includegraphics[width=.9\linewidth]{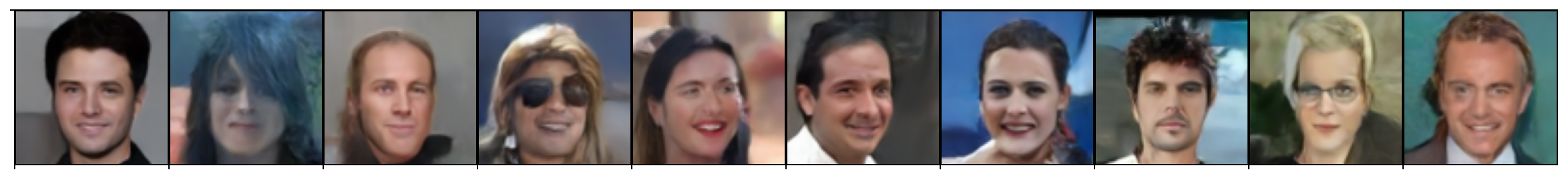}    
   \includegraphics[width=.9\linewidth]{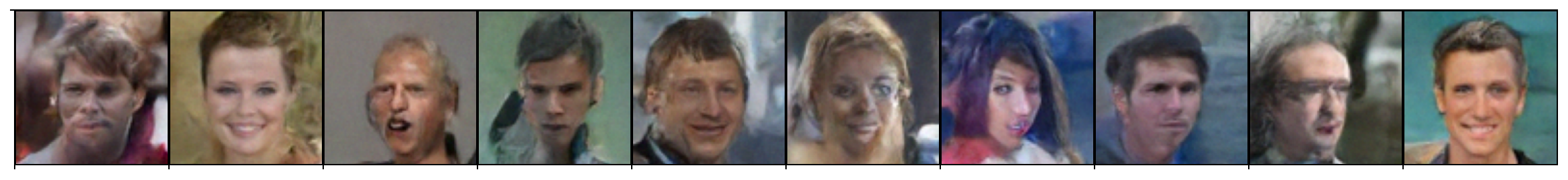}      
    \caption{Continued from \Cref{fig:samples_celeba_1k} with \textbf{lower} number of steps: 
    $N \approx 100$ for BDDM shown in second row and $N = 100$ for VE-DDPM shown in third row. The samples from BDDM are still of higher quality than samples from DDPM. Samples in each column are initialized with the same seed. 
    }
    \label{fig:samples_celeba_100}
\end{figure}

\begin{figure}[!htb]
    \centering
   \includegraphics[width=.9\linewidth]{figures/example_train_celeb.pdf}  
   \includegraphics[width=.9\linewidth]{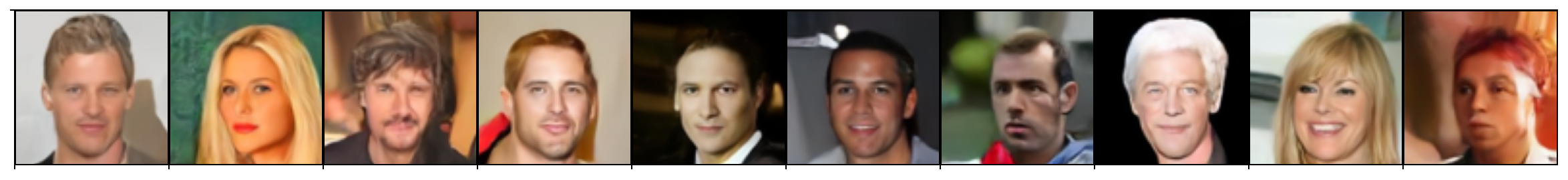}    
   \includegraphics[width=.9\linewidth]{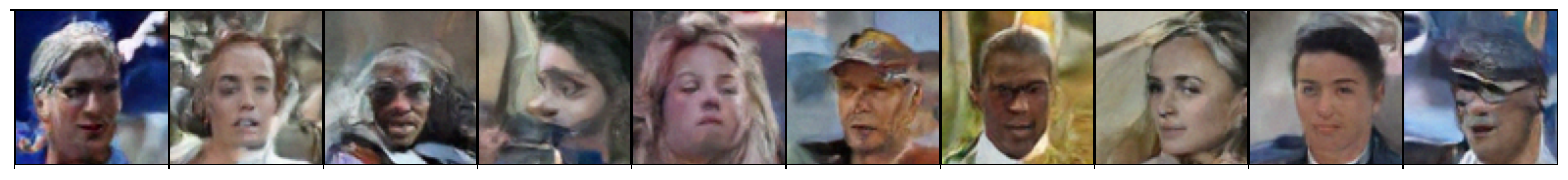}      
    \caption{Continued from \Cref{fig:samples_celeba_1k} with \textbf{higher} number of steps: 
    $N \approx 17,000$ for BDDM shown in second row and $N = 17,000$ for VE-DDPM shown in third row. The samples from BDDM are still of significantly higher quality than samples from DDPM. Samples in each column are initialized with the same seed. 
    }
    \label{fig:samples_celeba_17k}
\end{figure}

\begin{figure}[!htb]
    \centering
   \includegraphics[width=.9\linewidth]{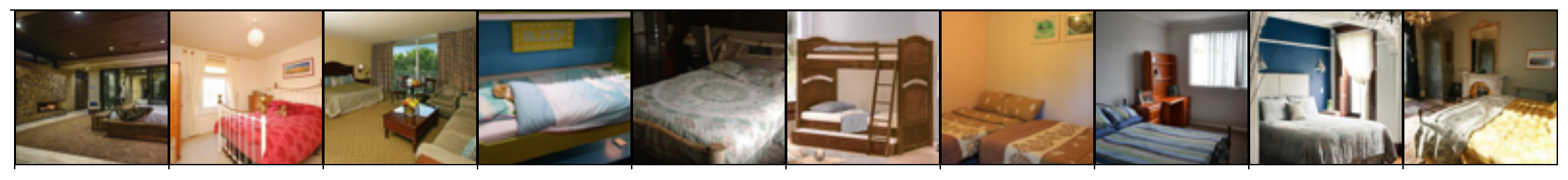}  
   \includegraphics[width=.9\linewidth]{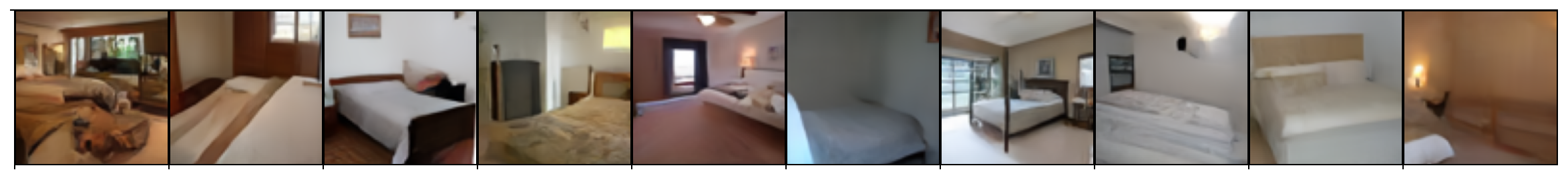}           
   \includegraphics[width=.9\linewidth]{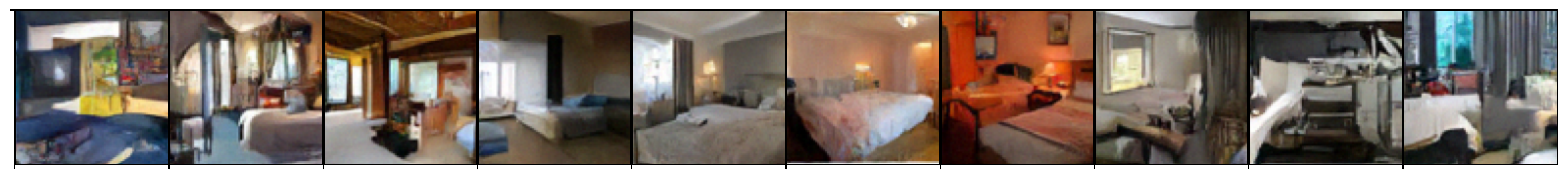}     
    \caption{\textbf{Top row:} example training images from the LSUN dataset. \textbf{Second row:} samples generated by BDDM with average number of steps $N \approx 1300$. \textbf{Third row:} Samples generated by DDPM (VE) algorithm, with total number of steps $N = 1300$. Samples in each column are initialized with the same random seed with matched injected noise. 
    }
    \label{fig:samples_bed}
\end{figure}

\end{document}